\documentclass{article} %
\usepackage{PRIMEarxiv}

\usepackage{amsmath,amsfonts,bm}
\usepackage[toc,page,header]{appendix}
\usepackage{titletoc}

\def\eqref#1{equation~\ref{#1}}

\def\1{\bm{1}}

\DeclareMathAlphabet{\mathsfit}{\encodingdefault}{\sfdefault}{m}{sl}
\SetMathAlphabet{\mathsfit}{bold}{\encodingdefault}{\sfdefault}{bx}{n}

\newcommand{\R}{\mathbb{R}}

\newcommand{\new}[1]{#1}

\DeclareMathOperator*{\argmin}{arg\,min}

\usepackage{authblk}

\usepackage{hyperref}
\usepackage{url}
\usepackage[utf8]{inputenc} %
\usepackage[T1]{fontenc}    %
\usepackage{url}            %
\usepackage{booktabs}       %
\usepackage{amsfonts}       %
\usepackage{nicefrac}       %
\usepackage{microtype}      %
\usepackage{xcolor}         %
\usepackage{graphicx}
\usepackage{tikz-cd}
\usepackage{tikz}

\usetikzlibrary{arrows.meta} %
\usetikzlibrary{positioning}

\usepackage{amsmath}
\usepackage{amsthm}
\usepackage{amssymb}
\usepackage{amsfonts}
\usepackage{graphicx}
\usepackage{caption}
\usepackage{subcaption}
\usepackage{mathrsfs}
\usepackage{wrapfig}
\usepackage{natbib}
\usepackage{siunitx,tabularx,ragged2e,booktabs,caption}
\sisetup{
text-series-to-math = true ,
propagate-math-font = true
}
\newcolumntype{Y}[1]{>{\centering\arraybackslash}p{#1 cm}}
\usepackage{amsmath}
\usepackage{cleveref}
\usepackage{amsthm}
\usepackage[makeroom]{cancel}
\newcommand{\eqframe}{\mathcal{F}_E}
\DeclareMathOperator{\id}{id}

\newcommand{\mc}{\mathcal}
\newcommand{\mf}{\mathfrak}
\newcommand{\supp}{\textnormal{supp }}
\newcommand{\wfra}{\operatorname{WFra}}
\newcommand{\wocan}{\operatorname{WOCan}}
\newcommand{\wccan}{\operatorname{WCCan}}
\newcommand{\wcan}{\operatorname{WCan}}
\newcommand{\Li}{\textnormal{Li }}

\newtheorem{theorem}{Theorem}[section]

\newtheorem{corollary}{Corollary}[theorem]
\newtheorem{lemma}[theorem]{Lemma}
\newtheorem{prop}[theorem]{Proposition}
\theoremstyle{definition}
\newtheorem{definition}{Definition}[section]
\theoremstyle{remark}

\newtheorem*{example}{Example}
\usepackage{algorithm2e}
\usepackage[multiple]{footmisc}
\RestyleAlgo{ruled}
\SetKwInOut{Parameters}{parameters}
\crefname{algocf}{alg.}{algs.}
\Crefname{algocf}{Algorithm}{Algorithms}

\graphicspath{{figures/}{\main/figures/}}
\title{Lie Algebra Canonicalization: Equivariant \\ Neural Operators under arbitrary Lie Groups}

\pagestyle{fancy}
\thispagestyle{empty}
\rhead{ \textit{ }} 

\fancyhead[LO]{Lie Algebra Canonicalization}

\author[*,1]{Zakhar Shumaylov}
\author[*,1]{Peter Zaika}
\author[1]{James Rowbottom}
\author[1]{Ferdia Sherry}
\author[2]{\authorcr Melanie Weber}
\author[1]{Carola-Bibiane Sch\"onlieb}
\affil[1]{University of Cambridge  }
\affil[2]{Harvard University}

\affil[ ]{\texttt{\{zs334,paz24,jr908,fs436,cbs31\}@cam.ac.uk, mweber@seas.harvard.edu}}

\begin{document}

\maketitle
\def\thefootnote{*}\footnotetext{Equal Contributions}\def\thefootnote{\arabic{footnote}}

\begin{abstract}
The quest for robust and generalizable machine learning models has driven recent interest in exploiting symmetries through equivariant neural networks. In the context of PDE solvers, recent works have shown that Lie point symmetries can be a useful inductive bias for Physics-Informed Neural Networks (PINNs) through data and loss augmentation. Despite this, directly enforcing equivariance within the model architecture for these problems remains elusive. This is because many PDEs admit non-compact symmetry groups, oftentimes not studied beyond their infinitesimal generators, making them incompatible with most existing equivariant architectures. In this work, we propose Lie aLgebrA Canonicalization (LieLAC), a novel approach that exploits only the action of infinitesimal generators of the symmetry group, circumventing the need for knowledge of the full group structure. To achieve this, we address existing theoretical issues in the canonicalization literature, establishing connections with frame averaging in the case of continuous non-compact groups. Operating within the framework of canonicalization, LieLAC can easily be integrated with unconstrained pre-trained models, transforming inputs to a canonical form before feeding them into the existing model, effectively aligning the input for model inference according to allowed symmetries. LieLAC utilizes standard Lie group descent schemes, achieving equivariance in pre-trained models. Finally, we showcase LieLAC's efficacy on tasks of invariant image classification and Lie point symmetry equivariant neural PDE solvers using pre-trained models. 
\end{abstract}

\section{Introduction}

The deep learning revolution of the 2010s has demonstrated the remarkable effectiveness of deep learning (DL) models in various domains, from text generation to computer vision. %
This widespread success has led to the adoption of DL in numerous scientific fields, including biology \citep{abramson2024accurate}, physics \citep{merchant2023scaling} and engineering \citep{degrave2022magnetic}. %
{The training of such models often requires access to large amounts of high-quality data and computing resources, which can be challenging to obtain in practice.
However, many tasks in these domains involve structured data, including data with symmetries induced by fundamental physical principles. \emph{Physics-informed neural networks} (PINNs) seek to leverage such structure to mitigate the aforementioned challenges.}

\new{DL models have been observed to significantly benefit from the inductive bias of symmetries. Notable examples} include translation invariance in imaging data and time-series, translation and rotation symmetries of molecules, resulting in position and momentum conservation. Incorporating these symmetries into the design of neural networks can enhance their performance and generalization capabilities. This has been analyzed empirically~\citep{esteves2023scaling,wang2020incorporating} and through the lens of learning-theoretic trade-offs~\citep{mei,bietti,kiani2024hardness}.

{Prior work on equivariant neural networks focuses on ``simple'' groups, resulting in frameworks that are often not rich enough to encode the complex geometric structure found in scientific applications. \new{For example, the heat equation in \Cref{sec:heat} was recently shown by \citet{koval2023point} to admit a group of point symmetries $\mathrm{SL}(2, \R) \ltimes_\phi \mathrm{H}(1, \R)$, which is not only non-compact, it also does not act in a representation on the underlying jet space. This renders most of the currently available equivariant architectures unusable. }
To overcome this, we introduce a novel framework that is able to encode a wider range of symmetries, not requiring a priori knowledge of the full group structure. This flexibility allows for integration with existing models, notably pre-trained models, with the potential of leveraging the benefits of geometric inductive biases beyond classical equivariant architectures.}

\subsection{Equivariance in Neural Networks}
Incorporating group symmetries inside neural networks, in the form of \emph{equivariance}, is one of the fundamental tasks in \emph{geometric deep learning} \citep{bronstein2021geometric}. The translational equivariance of convolutional neural networks (CNNs) \citep{fukushima1982neocognitron,lecun1989backpropagation} has been suggested as an important reason for their successes in tasks such as image classification \citep{krizhevsky2012imagenet, he2016deep}. Because datasets may have symmetries more intricate than just translations, the past decade has seen a push towards generalizing this approach to allow for other (Lie group) symmetries, starting from works on roto-translationally equivariant CNNs \citep{dieleman2015rotationinvariant, dieleman2016exploiting, cohen_group_2016}, to more recent methods, efficiently utilising knowledge of the Lie algebra for more general \citep{kumar2024lie,macdonaldEnablingEquivariance2022,finzi2020generalizing,mcneela2023almost} and computationally efficient methods \citep{mironenco2024lie}. \new{Notably, convolution-based approaches to equivariance satisfy a type of local equivariance too \citep{weiler2019general}: with compactly supported filters, if features in the data are sufficiently separated, these networks are equivariant to transformations localized to the features.}
\begin{wrapfigure}{O}{0.55\textwidth}
\begin{center}
\includegraphics[width=0.53\textwidth]{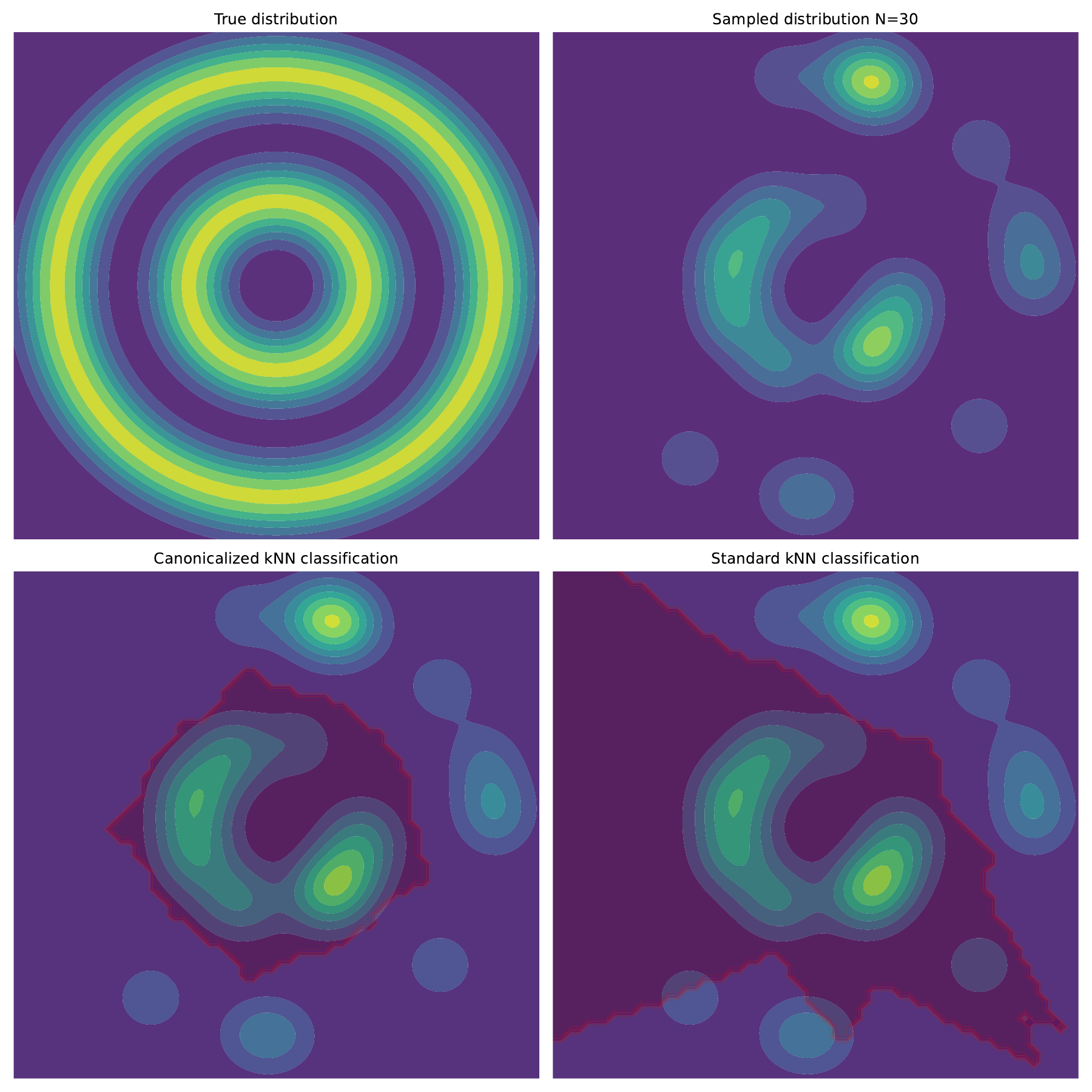}
\vspace{-10pt}
  \end{center}
  \caption{Effect of canonicalization on decision boundaries in $k$-NN classification for separating the inner and the outer rings \Cref{sec:2dexample}.\vspace{-5pt}
  \label{fig:kNN}}
\end{wrapfigure}
\paragraph{Equivariant convolutions}
An important fact that comes up repeatedly in the design of equivariant neural networks is that ordinary convolutions on Euclidean spaces can be generalized to any group on which we can integrate against an invariant measure (as is the case for any Lie group, for example), giving rise to equivariant linear maps in the form of \emph{group convolutions}. Furthermore, converse results can be established, showing that a linear equivariant map (under general conditions) must take the form of a group convolution. This is the approach taken in \citet{cohen_group_2016} for discrete groups of roto-translations acting on images. Since group convolutions are applied to signals defined on the group, it is necessary to apply an initial equivariant linear lifting operation to convert a signal on the original domain to a signal on the group. Incompatibility of continuous groups and grid-based discretizations of signals is a major issue in methods that take this approach, and for general groups, it is not necessarily clear how one should discretize the integral defining a group convolution. For Lie groups, there have been various proposals for approximation methods for these integrals based on finite samples, including the works in \citet{Bekkers2020B-Spline,finziGeneralizingConvolutional2020} and works following them, which focus more heavily on using the Lie algebra for approximating group convolutions \citep{macdonaldEnablingEquivariance2022,kumar2024lie}. 
The corresponding methods have been shown to result in more efficient use of training data and more robustness to transformations not seen during training, and have been applied successfully in areas such as medical image analysis \citep{bekkers2018rototranslation,winkels2019pulmonary}, computational molecular sciences \citep{batznerEquivariantGraphNeural2022, batatia2022mace} and inverse problems in imaging \citep{celledoniEquivariantNeuralNetworks2021a,chenImagingEquivariantDeep2023}. 

More recently, there has been a drive towards publicly available foundational models \citep{bommasani2021opportunities} for various tasks, pretrained on massive datasets. These models, however, are often not equivariant. For example, Alphafold 3 \citep{abramson2024accurate} no longer utilises equivariant architectures, unlike the initial version \citep{jumper2021highly}. For this reason a direction of research has been investigating the possibility of utilizing the generalization offered by equivariance, while being able to use pre-trained models. The main two approaches to achieve this are \textit{canonicalization}\footnote{The idea of canonicalization in many ways is not new, see \Cref{sec:canon_pre}.} \citet{kabaEquivarianceLearnedCanonicalization2023, mondalEquivariantAdaptationLarge2023} and \textit{frame averaging} \citet{puny2022frame,basu2023equi} (also referred to as symmetrization). However, there still remain a number of open theoretical questions regarding the possibility of continuous canonicalization \citep{dym2024equivariant}, connections between frame averaging and canonicalizations \citep{ma2024canonizationperspectiveinvariantequivariant} and constructability of frames \citep{dym2024equivariant,puny2022frame}.

\subsection{Physics-Informed Machine Learning}\label{sec:PIML}
\begin{figure}
\centering
\begin{subfigure}{0.32\textwidth}
    \includegraphics[width=\textwidth]{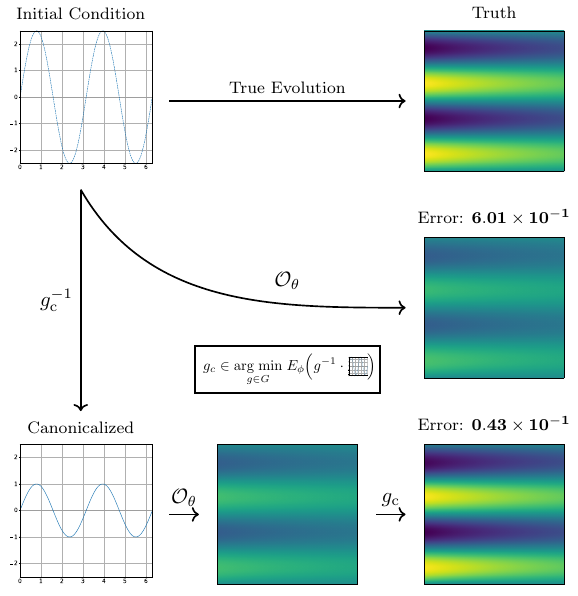}
    \caption{DeepONet operator for the Heat equation \Cref{sec:heat}.}
    \label{fig:Heat_can}
\end{subfigure}
\hfill
\begin{subfigure}{0.32\textwidth}
    \includegraphics[width=\textwidth]{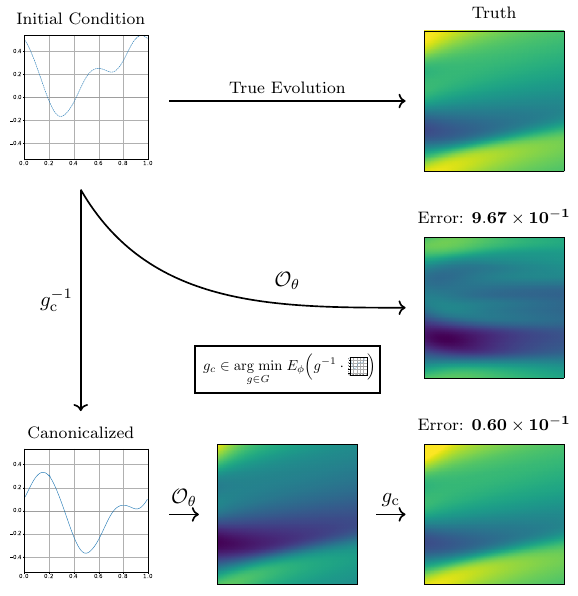}
    \caption{DeepONet operator for the Burgers' equation \Cref{sec:burgers}.}
    \label{fig:Burgers_can}
\end{subfigure}
\hfill
\begin{subfigure}{0.32\textwidth}
    \includegraphics[width=\textwidth]{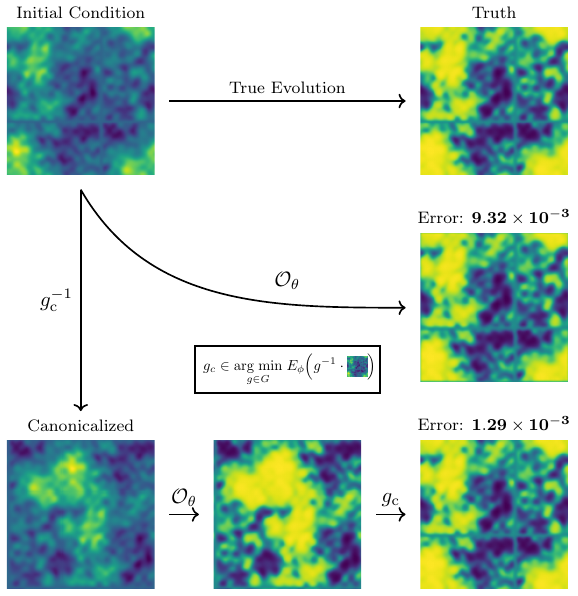}
    \caption{\textsc{Poseidon} operator for the Allen--Cahn equation \Cref{sec:ace}.}
    \label{fig:ACE_can}
\end{subfigure}
\caption{Canonicalization pipeline for numerical PDE evolution \Cref{sec:pde_numerics}.}
\label{fig:can_illustration}
\end{figure}
In a separate direction, deep neural networks are being increasingly used in scientific computing, particularly in simulating physical systems modeled by partial differential equations (PDEs). This surge in interest is largely driven by limitations of traditional numerical solvers, as they often struggle with demanding scenarios, such as those involving non-linear and non-smooth PDEs necessitating fine discretization, or high-dimensional problems where the curse of dimensionality kicks in \citep{han2018solving}. This is further exacerbated if there is a need to solve PDEs across varying domain geometries, parameters, and initial/boundary conditions, requiring independent simulations for each unique configuration.

The main prominent approach that has emerged is based on integrating physical laws and constraints into the learning process to solve PDEs \citep{raissi2019physics}. These neural networks have been dubbed PINNs, and have been shown to be useful for solving forward and inverse problems for PDEs \citep{raissi2019physics,sun2020surrogate,zhu2019physics,karumuri2020simulator,sirignano2018dgm}, and regularizing learning \citep{zhang2024biophysics}. 

PINNs are able to handle irregular domains and complex geometries, overcoming some of the challenges of classical methods. However, classical PINNs are not without limitations \citep{grossmann2023can,krishnapriyan2021characterizing}.The original method is unable to handle varying parameters or input and boundary conditions (IBCs), and often faces challenges with training due to unbalanced gradients and non-convex loss landscapes, requiring careful hyperparameter tuning or preconditioning \citep{mullerposition}. 

To address the challenge of solving PDEs across diverse scenarios, neural operator models have emerged as a promising solution. Unlike standard DNNs, which approximate functions over finite-dimensional vector spaces, neural operators learn mappings between infinite-dimensional function spaces. These learned operators map parameters or IBCs to corresponding PDE solutions, overcoming the computational burden of performing independent simulations for each scenario. Combining such neural operator models with a physics-informed loss has resulted in physics-informed neural operators \citep{li2021physics,wang2021learning}, oftentimes improving generalization and physical validity, compared to their physics-uninformed variants. In what follows, we concern ourselves precisely with this class of approaches.

\subsection{Equivariance in PINNs}

Based on the discussions above, a natural question arises - \emph{can generalization of physics-informed neural networks be improved by utilizing symmetries arising in the study of PDEs?} A few strides have been made towards understanding ways in which to take advantage of the Lie point symmetries (LPS) of differential equations in neural solvers. What is appealing about LPS, is that for a given PDE, under certain regularity conditions, all such (simply connected) symmetries can %
be derived in the form of one-parameter transformation groups \citep{olver1993applications}. For many classic PDEs these have been derived and can e.g.\ be found in \citep{ibragimov1995crc}. There further exists symbolic software packages, e.g.\ \citet{Baumann1998MathLieAP}, to derive symmetries for general PDEs. The simplest ways to add symmetry information into neural PDE solvers is through data augmentation \citep{brandstetter2022lie, 10016380}, resulting in more expensive training, or loss augmentation \citep{akhoundsadegh2023lie,li2023utilizing,zhang2023enforcing}, resulting in more difficult training. Both of these, however, oftentimes result in improved generalization. Similar benefits have also been observed in Gaussian process based solvers \citep{dalton2024physics}. Alternatively, exact equivariance can be seeked. As an example, in \citet{wang2021incorporating}, neural PDE solvers were combined with standard approaches to designing equivariant architectures for simple symmetries such as translations, roto-translations and scalings. Still, differential equations may have significantly more complicated Lie point symmetry groups than the groups for which there are off-the-shelf equivariant architectures. 
The concept of moving frames has been applied to the problem of using PINNs to solve ODEs with LPS in \citet{arora2024invariant}. Here, the approach taken was to invariantize the ODE in question, and apply a PINN to this equation, before mapping back to an (approximate) solution of the original ODE. Similarly, \citet{lagrave2022equivariant} uses equivariant neural networks to first approximate differential invariants, before passing to a classical numerical scheme for evolving the solution. Overall, the literature so far has illustrated that Lie point symmetries can provide useful inductive biases for Neural PDE solvers, and from a practical point of view can even be approximated from observed data \citep{gabel2024data}. However it still remains an open question, whether it is possible to make existing physics-informed approaches equivariant. In this paper we directly tackle this question.

\subsection{Overview of Contributions}
Motivated by limitations of existing equivariant architectures, we turn to \emph{energy-based canonicalization} as a means for inducing equivariance in existing models. This approach was first introduced in \citet{kabaEquivarianceLearnedCanonicalization2023}, but %
remains largely unexplored. 
In this paper we revisit and extend this approach, and explore its utility in scientific machine learning applications.
Noting that many existing definitions and results in the prior literature do not extend  readily to the general setting,  we extend and refine existing definitions to handle non-discrete and non-compact groups. Our key contribution is an energy-based canonicalization approach that provides a unifying theoretical framework for constructing frames and canonicalizations (\Cref{sec:weighted_canon}). 
\new{To define energy-based canonicalization we introduce a generalization of standard canonicalization, allowing use with non-compact Lie groups, which we call weighted canonicalization. This new framework encapsulates prior work of \cite{dym2024equivariant} on weighted frames, and our energy-based canonicalizations live naturally in this framework.}

\new{We emphasize that our proposed approach is generic, in that it has the potential to apply to a wide range of models and learning tasks.
The energy canonicalization framework developed here is readily applicable for an \textbf{\textit{arbitrary}} group with an \textbf{\textit{arbitrary}} action, including those not acting linearly, therefore being extendable to gauge and local symmetries. For Lie groups with smooth actions, one only requires knowledge of the action of some basis of the Lie algebra, without requiring a global parametrization of the group.}

\new{This flexibility stems from the choice of energy functional, which directly influences empirical performance and the size of resulting canonicalizations.} We offer guidelines for constructing energy functionals and for performing Lie group optimization in \Cref{sec:practice}. We showcase the effectiveness of our method on image classification tasks under homography and affine group invariance in \Cref{sec:MNIST}. We further discuss turning neural operators Lie point symmetry equivariant on three examples, including the pre-trained \textsc{Poseidon} foundation model~\citep{herde2024poseidonefficientfoundationmodels} in \Cref{sec:pde_numerics}.

\section{Preliminaries}\label{sec:prelims}
\paragraph{Lie groups}
In this section we briefly recall some necessary notions from Lie group theory. For a more comprehensive introduction, see \citet{Kirillov}. A Lie group $G$ is a group with a smooth manifold structure such that both the inverse and multiplication maps are smooth. A Lie algebra is a finite dimensional vector space $V$ with an alternating bilinear map $\left[\cdot,\cdot\right] : V \times V \rightarrow V$ called the Lie bracket, satisfying the Jacobi identity. Given a Lie group $G$, the tangent space at the identity $\mf{g} = T_e G$ naturally has the structure of a Lie algebra. A 1-parameter subgroup of $G$ is simply a group homomorphism $\R \rightarrow G$, where $\R$ is a group under addition. For every Lie algebra vector $v$ there is a unique 1-parameter subgroup $\phi_v$ such that $\phi_v(0)$ is the identity and $\phi_v'(0)(\partial_t) = v$. There is a smooth map $\operatorname{exp} : \mf{g} \rightarrow G$ which sends $v$ to $\phi_v(1)$. This has image contained inside $G_0$, the connected component of the identity $e \in G$. If $G_0$ is compact then $\operatorname{exp}$ surjects onto $G_0$, but in the non-compact case it doesn't necessarily surject. 

For a Lie group $G$ acting smoothly on a manifold $X$ we have a version of the orbit-stabilizer theorem.
\begin{theorem}\label{thm:lieorbitstab}
    The stabilizer $G_x \subseteq G$ is a closed Lie subgroup, and the natural map $G / G_x \rightarrow X$ is an injective immersion with image $Gx$. 
\end{theorem}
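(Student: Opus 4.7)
The plan is to reduce the statement to three classical facts: the closed subgroup theorem, the existence of the quotient manifold structure on $G/H$ for a closed Lie subgroup $H$, and a Lie algebra computation identifying $\mathfrak{g}_x = \operatorname{Lie}(G_x)$ with the kernel of the differential of the orbit map.

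First I would show that $G_x$ is closed. Consider the orbit map $\sigma_x : G \to X$, $g \mapsto g \cdot x$. This map is smooth because the action is smooth, and $G_x = \sigma_x^{-1}(\{x\})$, which is the preimage of a closed set under a continuous map, hence closed. By Cartan's closed subgroup theorem (a standard fact about Lie groups), every closed subgroup of $G$ is automatically an embedded Lie subgroup. This immediately gives $G_x$ the structure of a closed Lie subgroup with Lie algebra $\mathfrak{g}_x = \{v \in \mathfrak{g} : \exp(tv) \in G_x \text{ for all } t \in \R\} = \{v \in \mathfrak{g} : \exp(tv)\cdot x = x \text{ for all } t\}$.

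Next I would invoke the standard quotient construction: since $G_x$ is a closed Lie subgroup of $G$, the coset space $G/G_x$ admits a unique smooth manifold structure for which the projection $\pi : G \to G/G_x$ is a surjective submersion, and such that $G$ acts smoothly on $G/G_x$ by left translation. Because $\sigma_x$ is constant on cosets of $G_x$ by definition of $G_x$, it descends uniquely to a smooth map $\bar{\sigma}_x : G/G_x \to X$ satisfying $\bar{\sigma}_x \circ \pi = \sigma_x$. Injectivity of $\bar{\sigma}_x$ is then immediate from the definition of the coset space: $\bar{\sigma}_x([g_1]) = \bar{\sigma}_x([g_2])$ means $g_2^{-1}g_1 \cdot x = x$, i.e.\ $g_2^{-1} g_1 \in G_x$, so $[g_1] = [g_2]$. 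The image equals $\sigma_x(G) = Gx$ by construction.

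The main obstacle, and the remaining step, is showing $\bar{\sigma}_x$ is an immersion. By $G$-equivariance of $\bar{\sigma}_x$ (with $G$ acting on $G/G_x$ by left translation and on $X$ via the given action), it suffices to check injectivity of the differential at the single point $[e] \in G/G_x$. We have the identification $T_{[e]}(G/G_x) \cong \mathfrak{g}/\mathfrak{g}_x$, and under this identification $d\bar{\sigma}_x|_{[e]}$ is induced by $d\sigma_x|_e : \mathfrak{g} \to T_x X$, $v \mapsto \tfrac{d}{dt}|_{t=0}\exp(tv)\cdot x$. Clearly $\mathfrak{g}_x \subseteq \ker d\sigma_x|_e$. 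For the reverse inclusion, if $d\sigma_x|_e(v) = 0$, then $x$ is a fixed point of the fundamental vector field associated with $v$; since the flow of this vector field through $x$ is precisely $t \mapsto \exp(tv)\cdot x$ and the constant curve at $x$ is also a solution of the corresponding ODE, uniqueness of integral curves gives $\exp(tv)\cdot x = x$ for all $t$, hence $v \in \mathfrak{g}_x$. Thus $\ker d\sigma_x|_e = \mathfrak{g}_x$, so the induced map on $\mathfrak{g}/\mathfrak{g}_x$ is injective, and $\bar{\sigma}_x$ is an immersion, completing the proof.
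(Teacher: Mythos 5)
The paper states this result in its preliminaries as a known fact from Lie theory (a version of the orbit-stabilizer theorem for smooth Lie group actions) and does not supply a proof; so there is no in-paper argument to compare against. Your proof is correct and is essentially the standard textbook argument: closedness of $G_x$ as a preimage, Cartan's closed subgroup theorem, the quotient manifold theorem for $G/G_x$, and the computation $\ker d\sigma_x|_e = \mathfrak{g}_x$ via uniqueness of integral curves of the fundamental vector field, combined with $G$-equivariance to propagate injectivity of the differential from $[e]$ to all of $G/G_x$. All four steps are sound; this is precisely the standard derivation one finds in references such as Lee's \emph{Introduction to Smooth Manifolds}.
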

\paragraph{PDE Symmetries}
When considering PDE symmetries, we primarily work with open subsets of Euclidean space, but can be naturally extended. The formal presentation here closely follows \citet{olver1993applications}, and an interested reader is referred there for more in-depth discussions of the concepts or \Cref{sec:lps_ap} for extra details. This subsection introduces the concept of PDE point symmetries -- transformations that map solutions of a given PDE to other solutions of the same PDE. 

Suppose we are considering a system $\Delta(x, u^{(n)})=0,$ of $n$-th order differential equations with independent variables $x\in X =\mathbb{R}^p$, and dependent variables $u\in U =\mathbb{R}^q$, with $x,u^{(n)}$ representing a vector in the jet space defined as $X\times U^{(n)}= X\times U \times U_1 \times \cdots \times U_n$, where $U_n$ is endowed with coordinates $\partial_J u^\alpha(x)$ representing all the derivatives up to order $n$. 
A symmetry group of the system $\Delta$ will be a local group of transformations, $G^{\Delta}$, acting on some open subset $M \subset X \times U$ in such that $G^{\Delta}$ transforms solutions of $\Delta$ to other solutions of $\Delta$. There is an induced local action of $G^{\Delta}$ on the $n$-jet space $M^{(n)}$, called the $n$-th prolongation of $G$ denoted $\mathrm{pr}^{(n)} G^{\Delta}$. The prolongation is defined to transform the derivatives of functions $u=f(x)$ into the corresponding derivatives of the transformed function $\tilde{u}=\tilde{f}(\tilde{x})$. Considering (local) one-parameter groups generated by vector fields on $M$, and their prolongations allows one to classify symmetries of a given PDE by considering vector field infinitesimal actions, see \citet[Theorem 2.31]{olver1993applications}.

\paragraph{Deep Learning for PDEs}
The abstract discussion of symmetries of PDEs above only concerned itself with transformations leaving the set of solutions invariant. In practice, we are often interested in problems, for which the solution is unique due to either initial or boundary conditions (IBCs). In this paper, we primarily concern ourselves with temporal PDEs in the form of  
$$
\begin{aligned}
\Delta={u}_t+\mathcal{D}_{{x}}[{u}] =0, \text{ and } &{u}(0, {x}) =f({x}),\; &&{x} \in \Omega,  &t \in[0, T] \\
&{u}(t, {x}) = g(t, {x}), &&{x} \in \partial \Omega, &t \in[0, T]
\end{aligned}
$$
where $u(t, x) \in \mathbb{R}^{\operatorname{dim} u}$ is the solution to the PDE, $t$ denotes the time, and $x$ is a vector of spatial coordinates. $\Omega \subset \mathbb{R}^{\operatorname{dim}x}$ is the domain and $\mathcal{D}_{x}$ is a (potentially non-linear) differential operator. $f(x)$ is known as the initial condition function and $g(t, x)$ describes the boundary conditions. As discussed in \Cref{sec:PIML}, we are interested in physics-informed approaches, wherein $u(t,x)$ is parameterised as a neural network $u_{\theta}(t,x)$ with parameters $\theta$, and the network is trained by minimizing a loss comprised of two parts: $\mathcal{L}(\theta)=\mathcal{L}_{\text {PDE }}+\mathcal{L}_{\text {Data}}$. 
The first term encourages the network to satisfy the PDE, while data-fit is a supervised loss, enforcing IBC and ensuring a good fit with the training data. 
Specific forms of the losses, and further discussion can be found in \Cref{sec:mlpde_ap}.

\section{General framework for canonicalizations and frames}\label{sec:theory}

For this section let $G$ be a Lie group acting smoothly on a manifold $X$. Let $C^0_b(X)$ be the bounded continuous $\R$-valued functions on $X$. Let $\operatorname{PMeas}(X)$ be the set of Radon probability measures on $X$, with respect to the Borel $\sigma$-algebra on $X$. We will set $\phi \in C^0_b(X)$ some function to be acted on.

\new{The two main ways of imposing equivariance on arbitrary models: \textit{frame averaging} and \textit{canonicalization}, both arise from the study of the Reynolds operator. For finite groups $G$, defined as $$\mc{R}_G(\phi)(x) = \frac{1}{|G|} \sum_{g \in G} g \cdot \phi(g^{-1} \cdot x).$$ Evaluating this equivariant function $\mc{R}_G(\phi)$ can become prohibitively expensive, requiring summation over the entire group for every input $x$. Therefore one seeks ways to restrict the range of this summand while preserving the equivariance of the resulting function. \textit{Frames} are defined, for each element $x \in X$, as a subset of the group $G$ for which to average over, with an equivariance condition detailed below to ensure equivariance of the resulting function. \textit{Canonicalizations}\footnote{Original proposal of \citet{kabaEquivarianceLearnedCanonicalization2023} relied on these being singletons, requiring authors to introduce relaxed equivariance. Treating these as sets turns out to be natural for both energy canonicalizations (\Cref{sec:energy}), and for providing (equivalence) isomorphism of (weighted) canonicalizations with (weighted) frames (\Cref{thm:framesareorbitcans} and \citet{ma2024canonizationperspectiveinvariantequivariant}).} are the opposite perspective, for each element $x \in X$, directly providing a subset of $X$ over which to directly average the function over with an invariance condition to ensure the resulting function is equivariant. The presentation here was reduced to fit the page limit; for a full description see \Cref{sec:old_overview}.}
\subsection{Frames}
\begin{definition}
\textit{Frames} are functions $\mc{F} : X \rightarrow 2^G \setminus \{\varnothing\}$ such that for any $x \in X$ and $g \in G$: $\mc{F}(gx) = g\mc{F}(x)$. We call the space of frames $\text{Fra}_G(X)$.
\end{definition}

\new{A frame $\mc{F}$ acts on $\phi$ by frame averaging $\mc{F}(\phi)(x) = \frac{1}{|\mc{F}(x)|} \sum_{g \in \mc{F}(x)} \phi(g^{-1} x).$ As $\mc{F}(x) = G_x \mc{F}(x)$ for any frame $\mc{F}$, frame averaging is inherently limited to the case that the frames are finite at each point, i.e. that the set $\mc{F}(x)$ is finite $\forall x \in X$. For frames this can only be the case when the group acts with finite stabilizers at all points. This leads to the following definition of \cite{dym2024equivariant}. \new{A weighted frame $\mu_x$ acts on $\phi$ similarly by averaging
$\mu(\phi)(x) = \int_G \phi(g^{-1}\cdot x) \, d\mu_x(g)$.}}
\begin{definition}
\textit{(Weakly)-Equivariant Weighted Frames} are functions $ \mu_{[\cdot]}: X \rightarrow \text {PMeas} (G) \text { s.t. } \forall x \in X, g \in G: \mu_{g x}=g_* \mu_x$ (respectively: $ (\pi_{gx})_* \mu_{gx} = (\pi_{gx})_* g_* \mu_x$ where $\pi_{gx} : G \rightarrow G / G_{gx}$ is the quotient map. See \Cref{lem:piequivisweakequiv}). We call the space of weakly-equivariant weighted frames $\text{WFra}_G(X)$.

\end{definition}
\subsection{Canonicalizations}\label{sec:weighted_canon}
\new{We briefly recall the definitions of (finite) canonicalizations from \cite{ma2024canonizationperspectiveinvariantequivariant}.}
\begin{definition}
A \textit{canonicalization} is a function $\mc{C}: X \rightarrow 2^X \setminus\{\varnothing\}$ s.t. $\forall x \in X, g \in G: \mc{C}(g x)=\mc{C}(x)$. We call the space of canonicalizations $\text{Can}_G(X)$.
\end{definition}
\begin{definition}
An \textit{orbit canonicalizations} is a canonicalization s.t. $\forall x \in X$, $\mc{C}(x) \subseteq Gx$. We call the space of orbit canonicalizations $\text{OCan}_G(X)$.
\end{definition}
\new{A canonicalization $\mc{C}$ acts on $\phi$ similarly to frames by
$\mc{C}(\phi)(x) = \frac{1}{|\mc{C}(x)|} \sum_{y \in \mc{C}(x)} \phi(y).$
By the orbit-stabilizer theorem, orbit canonicalizations are equivalent to frames. In this isomorphism, frames are always larger by a factor equal to the size of the stabilizer group. Thus from a computational perspective orbit canonicalizations are prefered, and we will consider the canonicalization perspective from now on. We shall also see that this is indeed the correct perspective to work with when $G$ is a non-compact Lie group.}

\subsection{Energy-based Canonicalization}\label{sec:energy}
\new{We are now in a position to introduce the definition of our energy-based canonicalization method. We start with an energy function $E : X \rightarrow [0,+\infty]$ which we then minimize over the orbits of the group action $Gx$. First, in the case where $G$ is finite, every orbit in $X$ necessarily has a minimum of $E$, and we note that energy minimization naturally results in a frame (\Cref{prop:canon_frames}):
\begin{equation}\label{eq:energyminization_frame}
    \eqframe(x) = \argmin_{g \in G} E(g^{-1}x)
\end{equation}
Moving onto canonicalizations, we want to note that intuitively the best energy would be some approximation of the negative log likelihood for the sampled distribution. Therefore finding the minimizers of the energy within an orbit has the effect of finding those examples in the orbit which are most likely to have been trained on. In keeping with this intuition, we would like to define the energy minimizing canonicalization to be:
\begin{equation}\label{eq:energyminization}
    \mc{C}_E(x) = \argmin_{y \in Gx} E(y)
\end{equation}
In the non-compact cases we wish to consider, this definition isn't suitable for two main reasons: the set of minima may not be finite or there may be no minima. Dealing with the first naturally leads us to defining weighted canonicalizations, in analogy with weighted frames: }
\begin{definition}
    A \emph{weighted canonicalization} is a $G$-invariant function $\kappa_{[\cdot]} : X \rightarrow \text{PMeas}(X)$. We call the space of weighted canonicalizations $\text{WCan}_G(X)$. 
\end{definition}

\begin{definition}
    A \emph{weighted orbit canonicalization} is a weighted canonicalization $\kappa$ such that for every $x \in X$, $\operatorname{supp} \kappa_x \subseteq Gx$. We denote the set of weighted orbit canonicalizations as $\text{WOCan}_G(X)$.
\end{definition}
\new{A weighted canonicalization $\kappa_x$ acts on $\phi$ similarly as $ \kappa(\phi)(x) = \int_X \phi \, d\kappa_x$. }
\new{As in the non-weighted case, we have a similar equivalence between weighted orbit canonicalizations and weakly-equivariant weighted frames given by \Cref{thm:framesareorbitcans}. Weighted canonicalizations do not address the second issue however, as orbits of a non-compact Lie group are not necessarily compact or even closed. The non-compactness of these orbits is easy to solve by making the standard assumption that $E$ is topologically coercive: for any $N \in \R^+$ there is a compact $K \subseteq X$ such that $E|_{X \backslash K} > N$. The non-closedness of orbits is more subtle, as shown by the following example.}

\begin{example}\new{
    Let $X = \R^2$ and $G = \R_{>0}$ with its standard multiplicative Lie group structure. Note that $G$ is a non-compact Lie group. Let $t \in G$ act by $t \cdot x = tx$. Then one sees that the orbits of this action are exactly the non-closed rays $\R_{> 0}x$ emanating from origin as well as simply the origin. In practice, this might represent a scale-invariant learning problem. }
    
\new{
    Now consider the energy minimization problem defined by taking the energy to be $E(x) = |x|^2$. The global minima of this function is clearly the origin, which isn't in the domain. On the other hand, any of the non-closed rays have the origin as their limit points, therefore $E$ does not have a minima on any of these orbits.}
\end{example}

\new{To get around the non-closedness issue we introduce the following notion. This may seem like an artificial solution to the non-closedness problem, but in fact weighted closed canonicalizations is the most natural extension, as they are simply limits of weighted orbit canonicalizations by \Cref{thm:contclosureproof}.}

\begin{definition}
    A \emph{weighted closed canonicalization} is a weighted canonicalization $\kappa : X \rightarrow \text{PMeas}(X)$ such that for every $x \in X$, $\operatorname{supp} \kappa_x \subseteq \overline{Gx}$. Call the set of weighted closed canonicalizations $\text{WCCan}_G(X)$.
\end{definition}

By taking the closures of orbits, we solve the problem of $E$ not having minima, so we are naturally led to constructing weighted closed canonicalizations. First, we define the energy-minimizing set, $\mc{M}_E(x) = \arg \min _{y \in \overline{Gx}} E(y)$, set of minima of $E$ on the closure of the orbit of $x$.

\begin{prop} (\Cref{prop:coerciveminimaproof})
    $\mc{M}_E(x)$ is $G$-invariant and non-empty.
\end{prop}
If $\mc{M}_E(x)$ is finite for each $x \in X$, then we define the (non-weighted) closed canonicalization $\kappa_E\in\textnormal{WCCan}_G(X)$ by taking $\kappa_E(x)$ to be the normalized counting measure on $\mc{M}_E(x)$ as usual.

For a general $E$, we may not be able to put any reasonable probability measure on the set $\mc{M}_E(x)$, for example if $\mc{M}_E(x)$ has highly fractal geometry. We therefore must assume that $E$ is such that for all $x$, the Hausdorff measure of $\mc{M}_E(x)$ is non-zero. As each $\mc{M}_E(x)$ is compact, the Hausdorff measure is finite, hence combined with this assumption we may normalize it to obtain a probability measure on $\mc{M}_E(x)$ which is a weighted canonicalization $\kappa_E(x) \in \textnormal{WCCan}_G(X)$. Refer to \Cref{sec:measuretheory} for more discussion on the Hausdorff measure and on why this assumption is reasonable.
Note that if $\mc{M}_E(x)$ is finite, then the normalized Hausdorff measure constructed agrees with the normalized counting measure. Hence this construction is a generalization of the purely finite case. 

\subsection{Continuity preserving canonicalizations}
\begin{wrapfigure}{O}{0.55\textwidth}
\vspace{-20pt}
\begin{center}
\centering \includegraphics[width=.55\textwidth]{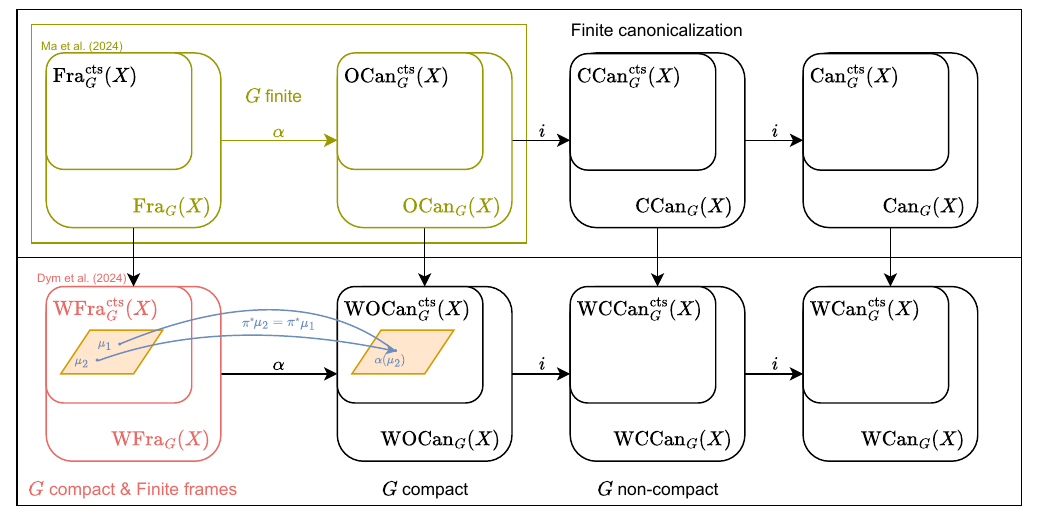}
\vspace{-10pt}
  \end{center}
  \vspace{-10pt}
  \caption{\small Connections between various notions introduced previously and in this work. Top row: finite frames and canonicalizations, mapping vertically into weighted versions via normalized counting measures. Inside each, contained a sequentially closed subspace of those that preserve continuity.\label{fig:commutative}}\vspace{-15pt}
\end{wrapfigure}
\new{Using the setup above we may now write down our main diagram connecting all the different notions of canonicalization and framing together, shown in \Cref{fig:commutative}. 

In practice, it is often desired for continuity to be preserved under canonicalization/frame averaging. In \citet{dym2024equivariant} it is shown that there are many topological obstructions to constructing continuous canonicalization functions. In analogy with \cite{dym2024equivariant}, we may now ask whether energy canonicalizations presented here take continuous functions to continuous functions. In our framework this is a simple condition to add. }

\begin{definition}
    A weighted canonicalization $\kappa$ is \emph{continuous} if for every $\phi \in C^0_b(X)$, $\kappa(\phi)$ is continuous. Call the space of continuous weighted canonicalizations $\wcan^\text{cts}_G(X)$. We similarly define continuous weighted (orbit) canonicalizations and weighted frames.
\end{definition}

\begin{prop}
    A weighted canonicalization $\kappa$ is continuous if and only if for every convergent sequence $x_n \rightarrow x$ in $X$, the sequence of measures $\kappa_{x_n}$ converges weakly to $\kappa_x$.
\end{prop}

It turns out, that for energy canonicalizations this completely provides a characterization of which energies give rise to continuous weighted canonicalizations.

\begin{lemma} (\Cref{lem:continuousenergy})
    Let $E$ be a lower semi-continuous energy function. Then $\kappa_E$ is continuous if and only if for every convergent sequence $x_n \rightarrow x$ we have $\Li \mc{M}_E(x_n) = \mc{M}_E(x)$.
\end{lemma}

\newpage\section{Practical considerations}\label{sec:practice}
\new{
In \Cref{sec:energy} we proposed energy canonicalization as a constructive way of achieving equivariance, but a question remains - how to choose the energy\footnote{We note that unlike \citet{kabaEquivarianceLearnedCanonicalization2023}, there is no canonicalization network. Constructing a canonicalization network requires one to use equivariant architectures, which may not exist (or may be too expensive) for the groups considered in this paper. Instead, when needed, the energy functional is parameterized as a network.}? Unfortunately there is no unique way to choose the energy or the optimization routine to find the minima. 
In this section we instead outline key considerations to guide these choices, with \Cref{sec:2dexample} serving as a guiding example. In complex settings of \Cref{sec:ace,sec:MNIST} we choose it to be a neural network, as explained in \Cref{sec:enchoices}, while for \Cref{sec:2dexample,sec:heat,sec:burgers}, the energy is hand-crafted.}

\new{
As we may not directly have access to parameterizations of the orbits for \Cref{eq:energyminization}, we instead implement group descent algorithms for \Cref{eq:energyminization_frame}, e.g. illustrated in \Cref{alg:1}, and then consider the resulting canonicalizations, as this are much smaller than frames and often. 
}
\begin{wrapfigure}{O}{0.38\textwidth}
\vspace{-1em}
\resizebox{.38\textwidth}{!}{%
    \begin{minipage}{0.6\textwidth}
      \begin{algorithm}[H]
\caption{Canonicalization with a global retraction\label{alg:1}}
\KwData{Non-canonical input $x$}
\Parameters{N, steps $\eta_i$, $\xi_0\in\mathfrak{g},$ Energy $E: \mathcal{X}\to\mathbb{R}$, Retraction $\tau: \mathfrak{g} \rightarrow G$}
\KwResult{Canonicalized input $\hat{x} = g^{-1}\cdot x$; inverse  $g^{-1}$; canonicalizing group element $g$.} 
\texttt{\# Do gradient descent on $\xi \in \mathfrak{g}$ } \\
$\xi \gets \xi_0$\; 

\For{$i = 0 \dots N$}{
$\xi \gets \xi - \eta_i\nabla_{\xi} E (\tau(\xi)\cdot x)$ \\ %
}
\Return $\tau(\xi)\cdot x$; $\tau(\xi)$; $[\tau(\xi)]^{-1}$
\end{algorithm}
    \end{minipage}}
    \vspace{-1em}
  \end{wrapfigure}

In what follows, we clarify the interplay between the four primary problem aspects. We assume to be given a specific choice of group and action on a domain, as well as information about the domain itself (either through samples or in the case of synthetic data, full knowledge is available). Based on these, we need to construct two things: (1) the energy function, based on the domain information; and (2) the group-based optimization routine for the problem in \Cref{eq:energyminization}, based on the domain information and the energy choice (potentially leading to problem relaxation as e.g. in \Cref{sec:ace} or improved initialization for iterative optimization {as e.g. in \Cref{sec:heat}}).

\paragraph{Domain information}
Models trained on a specific domain oftentimes do not generalize beyond their training data. To ensure the canonicalized input remains within the domain of the trained model, domain-specific information should be directly incorporated into the energy function. To give a concrete example, consider a setting in which some neural network is trained on samples living on a compact set $\mathcal{M}$. In such a setting we would only be interested in canonicalizations, which are either close to the set, or live on the set itself. Thus, choosing the energy function to include the characteristic function of the set $\chi_\mc{M}$
is particularly suitable when the action is regular (or transitive). Otherwise, there may not be any elements in the set $\mathcal{M}$ that can be mapped to, in which case a relaxation using the distance function $d_\mathcal{M}(x) = \min_{y\in\mathcal{M}} \|x-y\|$ %
can be used. Such choices turn out to be beneficial, especially in the setting of \Cref{sec:pde_numerics}.

In the case of data sampled from an underlying distribution $x \sim p_\mathcal{X}$, one natural choice would be to consider as energy function the negative log likelihood $E(x) = -\log p_\mathcal{X}(x)$, which may be available in closed form if trained on synthetic data. Otherwise, an approximation can be used. For this, there exist quite a few choices, including normalizing flows \citep{dinh2016density} or score-based diffusion models \citep{song2020score} to approximate the true likelihood or its gradient. In this paper, for examples in \Cref{sec:ace,sec:MNIST} we opt for a simpler variational autoencoder \citep{kingma2013auto}, using the ELBO, being a lower bound, instead of the likelihood for the energy.
\paragraph{Non-convexity and Flatness} 
Even with a convex energy, minimization with respect to $g\in G$ introduces non-convexity. The orbit $Gx$ is often non-convex, e.g.\ see \Cref{fig:example3}, where this is a circle. Invoking a parameterization of a group element through its Lie algebra and considering its action on the space leads to further non-convexity. In the worst cases, such as those considered in PDE examples in \Cref{sec:pde_numerics}, we may not have access to a global group parameterization, significantly complicating the process. What is worse, the energy may exhibit many flat regions, with the gradients being noisy, as in \Cref{fig:example3}. 

Thus the problem requires the use of non-convex optimization methods. We find, that in all examples in \Cref{sec:practice}, utilizing a multi-initialization strategy with a fixed number of gradient steps is sufficient for convergence. Altenatively, relaxation to the problem in \Cref{eq:energyminization} to be performed over $x$, subject to an orbit distance constraint can be used. 

To address the problem of flat zero-likelihood regions, alternative energy functions can be considered, e.g.\ empirical distance $E(x) = \mathbb{E}_{X\sim p_\mathcal{X}} d(X, x)$. However, in \Cref{sec:ace,sec:MNIST} we make a simpler choice to utilize extra regularization. Due to the difficulty in handcrafting regularization for various specific cases, we utilize learned convex regularization, trained adversarially on real and partially transformed data, based on the recent proposals in adversarial regularization \citet{mukherjee2024data,shumaylovweakly,shumaylov2023provably}. Ideally, the regularization would also ensure uniqueness of minimizers to the problem in \Cref{eq:energyminization}, however it is not clear how to achieve this. Generically, we would want to construct an energy such that level sets are transverse to group orbits, similar to \citet{panigrahi2024improvedcanonicalizationmodelagnostic}, which is indeed achieved by utilizing adversarial regularization.

\paragraph{Non-parameterizability}

Moving from compact groups to non-compact presents further challenges. Even with access to the exponential mapping, it may not be surjective, making global parameterization of the group using the Lie algebra potentially impossible, if the group is not \textit{Noetherian}. However, Lie algebras of many PDE symmetry groups are solvable, which can be used for group parameterizations that simplify the optimization problem: if the Lie algebra is solvable, any group element can be decomposed using the derived subalgebras \cite[Theorem 3.18.11]{varadarajan2013lie} as a train of exponentials. For example, for the 1D heat equation (\Cref{sec:heat}), we can represent any element of the group as $\exp(a_6v_6)\dots\exp(a_1v_1)$, reducing optimization over the group to optimization over $(a_1,\dots,a_6)\in\R^{\dim(\mathfrak g)}$. In this case one can utilize \Cref{alg:1}.

When the algebra is not solvable, one has to revert to Lie algebra based descent schemes. However, for Lie point symmetry groups, oftentimes only actions of specific basis vectors of the Lie algebra are available, making it necessary to use a coordinate descent (CD) approach. CD has received considerable attention in classical optimization (see \citet{wright2015coordinate}) and has recently been studied in geometric settings such as Riemannian manifolds and Lie groups \citep{hanriemannian}. The algorithms for both of these cases can be found in \Cref{alg:2,alg:3}. \new{If a global parameterization is available, then classical schemes may be utilized in the same way \citep{lezcano2019trivializations,absil2008optimization}.}

\section{Experiments}\label{sec:numerics}
\subsection{2D Toy Example}\label{sec:2dexample}
\begin{figure}
\begin{center}
\includegraphics[width=\textwidth]{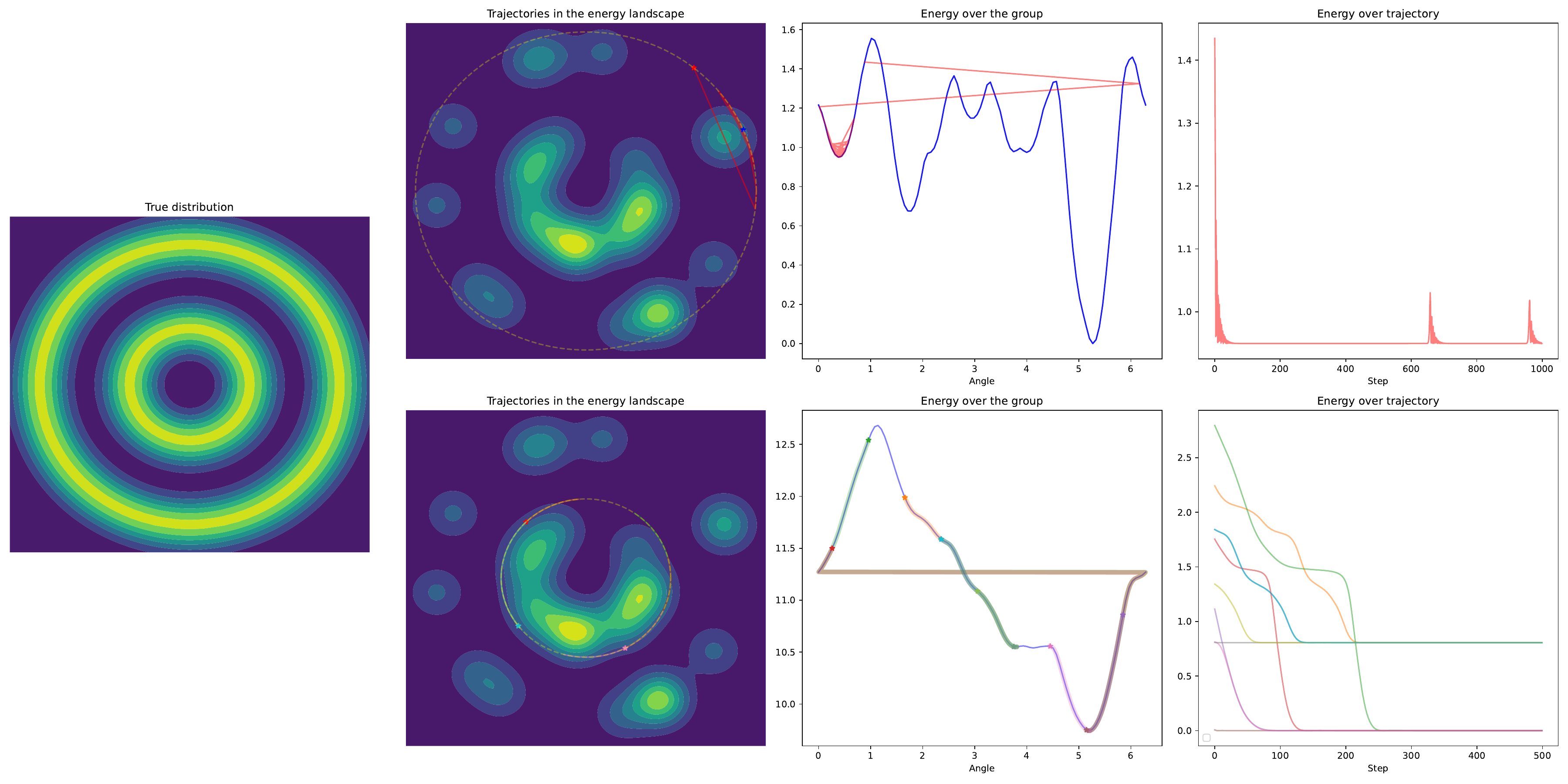}
  \end{center}
  \caption{LieLAC optimised sample from a mix of Gaussian distributions. \label{fig:example3}}
\end{figure}

To illustrate the universality of the approach, we consider one simple two dimensional example without any machine-learned components. We consider the problem of classificaiton under inherent rotation invariance illustrated on \Cref{fig:example3}. 
To mimic a realistic setting, we assume to have samples only. In this case, we can approximate the density using kernel density estimation with a Gaussian kernel, illustrated in the second column on \Cref{fig:example3}. As one would expect, without knowledge of the symmetries of the distribution, the approximation is far from the ground truth. In what follows, the energy is chosen to be the negative log-likelihood of the approximated distribution. 

This example illustrates the main issue with energy based canonicalization, which also extensively arises in more complex examples: even when the energy is convex, it is no longer convex when viewed as a function of $g$. This results in a non-convex optimization problem with (potentially many) local minima, which one can get stuck in when running standard descent schemes, and as discussed in \Cref{sec:practice}, one has to revert to more complex optimization routines. In this example, as illustrated in \Cref{fig:example3}, it turns out to be possible to find global minimizers, despite the non-convexity, by simply running gradient descent for a fixed number of iterations, but for a number of different initializations. The power of canonicalization can then be seen in \Cref{fig:kNN}, turning a non-equivariant kNN classifier rotation equivariant.
\begin{wraptable}{OT}{0.45\textwidth}
\vspace{-10pt}
\caption{MNIST test accuracy for Affine and Homography groups, compared with affConv and homConv of \citet{macdonaldEnablingEquivariance2022}.}\vspace{-5pt}\label{sample-table}
\centering
\begin{minipage}{\linewidth}
\centering
\resizebox{.85\linewidth}{!}{
\begin{tabular}{p{90pt}p{50pt}p{50pt}}
\toprule
Name & MNIST & affNIST \\
\midrule
CNN & \textbf{0.985} & 0.629 \\
LieLAC [CNN] & 0.979 & \textbf{0.972} \\
affConv & 0.982 & 0.943 \\
\bottomrule
\end{tabular}
}
\end{minipage}

\begin{minipage}{\linewidth} %
\centering
\resizebox{.85\linewidth}{!}{
\begin{tabular}{p{90pt}p{50pt}p{50pt}}
\toprule
Name & MNIST & homNIST \\
\midrule
CNN & \textbf{0.985} & 0.644 \\
LieLAC [CNN] & 0.982 & \textbf{0.960} \\
homConv & 0.980 & 0.927 \\
\bottomrule
\end{tabular}
}
\end{minipage}
\vspace{-10pt}
\end{wraptable}
\subsection{Invariant Classification}\label{sec:MNIST}
\begin{figure}
\centering
\begin{subfigure}{0.45\textwidth}
    \includegraphics[width=\textwidth]{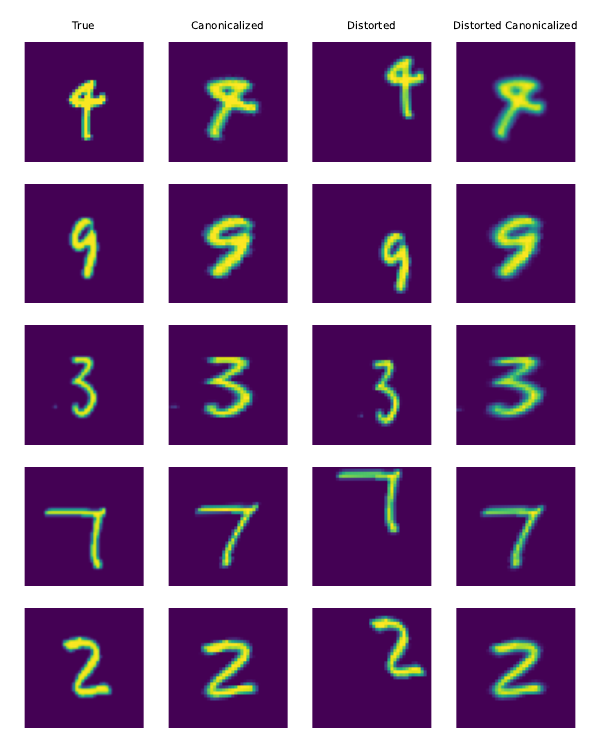}
    \caption{Affine MNIST examples}
    \label{fig:mnist_affine_ap}
\end{subfigure}
\hfill
\begin{subfigure}{0.45\textwidth}
    \includegraphics[width=\textwidth]{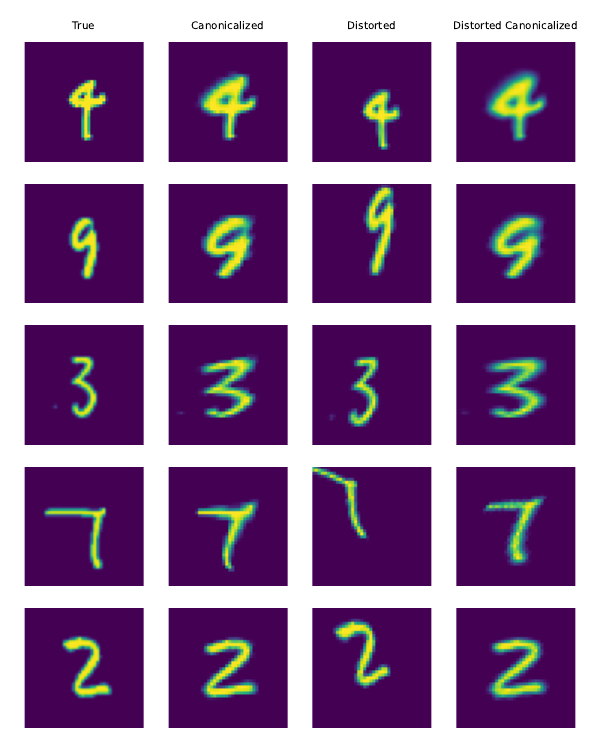}
    \caption{Homography MNIST examples}
    \label{fig:mnist_homography}
\end{subfigure}
\caption{MNIST canonicalization images for both Affine and Homography groups, as described in \Cref{sec:MNIST}. From left to right: Original MNIST image; Canonicalization of the original image; Original image distorted by a random homography transformation; Canonicalization of the distorted image. This figure illustrates the equivariance properties of energy canonicalization.}
\label{fig:mnist_ap}
\end{figure}
\begin{figure}
    \centering
    \includegraphics[width=\textwidth]{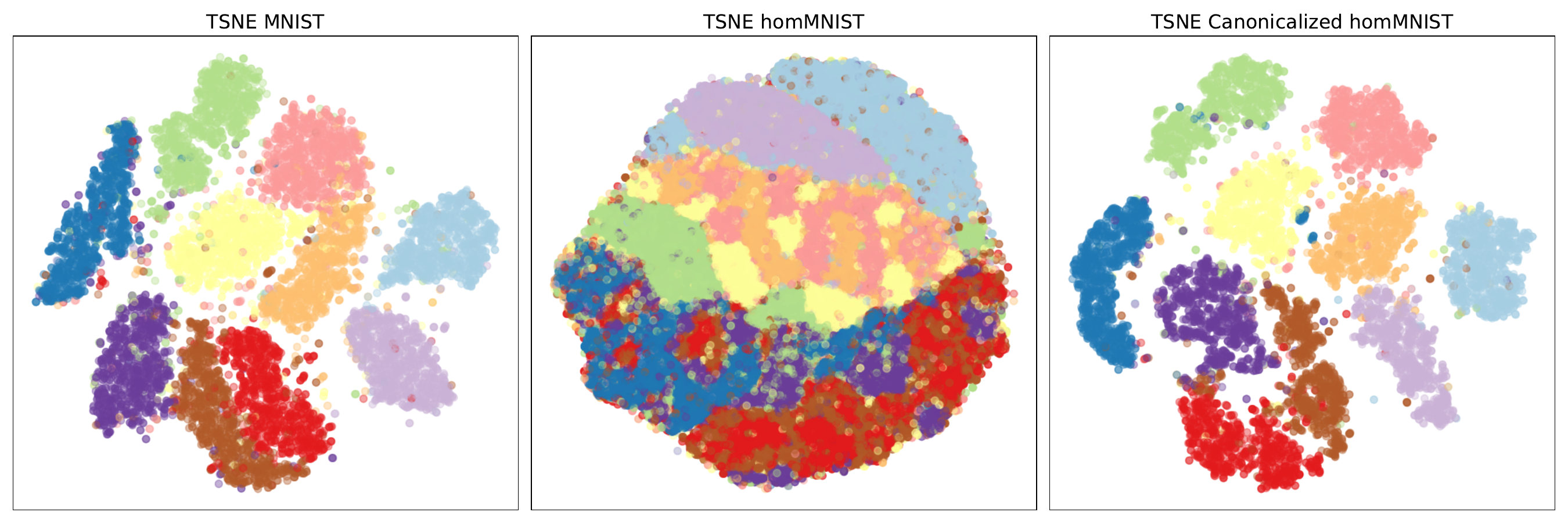}
    \caption{$t$-SNE plots: For MNIST, homNIST and canonicalized homNIST}
    \label{fig:tsne}
\end{figure}
To illustrate the effectiveness of LieLAC in achieving invariance of pre-trained models, we first test it on invariant image classification with respect to the groups of affine and homography transformations. We take \citet{macdonaldEnablingEquivariance2022} as the baseline for comparison due to the similarity in the required knowledge of the group required for implementation. 

Here, we take a pre-trained convolutional classifer, trained on the original MNIST dataset. The combination with the canonicalizer, as described in \Cref{sec:practice}, is then tested on the affine-perturbed MNIST (affNIST) \citep{gu2020improvingrobustnesscapsulenetworks} and the homography-perturbed MNIST (homNIST) \citep{macdonaldEnablingEquivariance2022}. The overall performance results for this example are reported in \Cref{sample-table}. We can see that the canonicalization based approach achieves improved equivariance compared to \citet{macdonaldEnablingEquivariance2022}, as illustrated by improved test accuracy on the datasets. We can also directly analyse achieved equivariance, as it appears only through canonicalization. We provide examples, visualising achieved equivariance in \Cref{fig:mnist_ap}. We can also analyze the effect of canonicalization globally by considering a $t$-SNE plot \citep{van2008visualizing} in \Cref{fig:tsne}, showing that energy canonicalization turns the dataset into the usual MNIST form, clearly separating the classes, unlike homNIST.

\subsection{Equivariant PDE evolution}\label{sec:pde_numerics}
In analogy with \citet{akhoundsadegh2023lie}, we first consider two simple examples with non-trivial symmetry groups: the 1D Heat and Burgers' equations, which also admit global parameterisations of the group elements \citep{koval2023point}. Evaluations can be found in \Cref{sec:heat} for the Heat and \Cref{sec:burgers} for the Burgers' equation. Full description of the experimental setting, further visualisations and theoretical background can also be found in the appendix. Examples of canonicalized solutions are shown in \Cref{fig:can_illustration}.
 
We explicitly emphasize that in this paper we only consider symmetries for initial value problems, where there is no dependence of $g\cdot x$ and $g\cdot t$ on $u$. I.e. for $(x',t',u') = g \cdot (x,t,u)$, we have $x'(x,t,u) = x'(x,t)$. In this case the action on $x,t$ can be separated from the action on $u$, making the desired equivariance for the physics-informed neural operator to be: 
\begin{equation}\label{eq:operator_equivariance}
\mathcal{O}_{\theta}[u_0,x_0,t_0] (x_f,t_f) = g \cdot \mathcal{O}_{\theta}[g^{-1}\cdot u_0, g^{-1}\cdot x_0,g^{-1}\cdot t_0] (g^{-1}\cdot x_f, g^{-1}\cdot t_f)     
\end{equation}

For heat and Burgers' equations, examples of canonicalized solutions are shown in \Cref{fig:can_illustration}. Specific settings are provided in \Cref{sec:heat_ap,sec:burgers_ap}, with examples in \Cref{sec:heat_ap_figs,sec:burg_ap_figs}. 
\subsubsection{Heat equation}\label{sec:heat}
The heat equation is the simplest example of a PDE admitting a non-trivial group of Lie point symmetries:
\begin{equation}\label{eq:heat}
u_t-\nu u_{x x} = 0    
\end{equation}
This equation admits a 6-dimensional solvable Lie algebra, see \Cref{sec:heat_ap}, with a global action: 
\begin{theorem}[\citet{koval2023point}  for $\nu\neq1$]
The finite point symmetry group of the $(1+1)$-dimensional linear heat equation \Cref{eq:heat} is constituted by the point transformations of the form
\begin{equation*}
\resizebox{!}{12pt}{$
\tilde{t}=\frac{\alpha t+\beta}{\gamma t+\delta}, \quad \tilde{x}=\frac{x+\lambda_1 t+\lambda_0}{\gamma t+\delta}, \quad
\tilde{u}=\sigma \sqrt{|\gamma t+\delta|} \exp \left(\frac{\gamma\left(x+\lambda_1 t+\lambda_0\right)^2}{4{\nu}(\gamma t+\delta)}-\frac{\lambda_1}{2{{\nu}}} x-\frac{\lambda_1^2}{4{{\nu}}} t\right)u,$}
\end{equation*}
where $\alpha, \beta, \gamma, \delta, \lambda_1, \lambda_0$ and $\sigma$ are arbitrary constants with $\alpha \delta-\beta \gamma=1$ and $\sigma \neq 0$.
\end{theorem}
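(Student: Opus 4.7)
The plan is to combine the infinitesimal Lie algebraic method with a direct analysis of discrete components. I would first derive the Lie symmetry algebra by applying the second-order prolongation of a general vector field $V = \tau(t,x,u)\partial_t + \xi(t,x,u)\partial_x + \phi(t,x,u)\partial_u$ to the heat equation \eqref{eq:heat} and demanding it vanish modulo $u_t = \nu u_{xx}$. Separating this condition by powers of the remaining derivatives of $u$ yields the standard overdetermined determining system (cf.\ Olver, Chapter 2), which solves to give six essential generators: $\partial_t$, $\partial_x$, the Galilean boost $t\partial_x - (x/2\nu)u\partial_u$, the scaling $2t\partial_t + x\partial_x$, the projective generator $4t^2\partial_t + 4tx\partial_x - (x^2/\nu + 2t)u\partial_u$, and $u\partial_u$, together with the infinite-dimensional ideal coming from addition of arbitrary solutions, which I would discard as it does not generate finite point symmetries beyond the 6D essential part.

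Next I would integrate these generators to produce the connected identity component. Three of them close to an $\mf{sl}_2(\R)$ subalgebra whose exponential realises the fractional linear action $t \mapsto (\alpha t + \beta)/(\gamma t + \delta)$ with $\alpha\delta - \beta\gamma = 1$, and automatically produces the $\sqrt{|\gamma t + \delta|}$ weight on $u$ together with the $\exp(\gamma(x+\lambda_1 t+\lambda_0)^2/(4\nu(\gamma t+\delta)))$ factor coming from the projective part. The remaining generators form a Heisenberg-like nilpotent ideal giving the translations $\lambda_0,\lambda_1$ and the compensating $\exp(-\lambda_1 x/(2\nu) - \lambda_1^2 t/(4\nu))$ correction; exponentiating in the order dictated by the solvable structure (as used in \Cref{sec:practice}) reassembles the full formula with $\sigma > 0$.

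The main obstacle is the classification of discrete symmetries, which the infinitesimal method cannot see. Here I would invoke the direct method: suppose $(\tilde t, \tilde x, \tilde u) = (T, X, U)(t,x,u)$ is a point transformation mapping solutions to solutions; expressing $\tilde u_{\tilde t} - \nu \tilde u_{\tilde x \tilde x}$ via the chain rule and requiring it to be a functional multiple of $u_t - \nu u_{xx}$ yields an overdetermined system whose leading-order analysis (from the principal symbol) forces $T$ to depend only on $t$, $X$ to be affine in $x$ with $t$-dependent coefficients, and $U$ to be linear in $u$ (preservation of linearity). A short computation then constrains $T$ to be Möbius in $t$ and fixes $U$ up to the sign of its multiplicative factor $\sigma$. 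Parity $x\to -x$ is already inside the connected $\mathrm{SL}(2,\R)$ component (take $\alpha=\delta=-1$, $\beta=\gamma=0$, $\lambda_0=\lambda_1=0$, $\sigma=1$), while time reversal $t \to -t$ is ruled out because it would flip the sign of $\nu$ in the transformed equation. The only genuine discrete residue is therefore the $\sZ/2$ coming from $\mathrm{sgn}(\sigma)$, absorbed into the condition $\sigma \neq 0$.

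Finally, I would verify the forward direction by direct substitution: plugging the proposed formula into $\tilde u_{\tilde t} - \nu \tilde u_{\tilde x \tilde x}$ and simplifying shows that the exponential factor and the $\sqrt{|\gamma t+\delta|}$ weight are precisely what is needed to cancel the lower-order terms produced by the Möbius change of variables, reducing the transformed equation to $u_t - \nu u_{xx} = 0$ on solutions. I expect the completeness step in the direct method to be the most technically involved, since cleanly separating the determining system by jet order and correctly identifying when a residual discrete sign can be absorbed into the continuous component both require careful bookkeeping; however, the rigidity of the linear second-order heat operator keeps the system tractable.
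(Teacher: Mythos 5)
The paper does not actually prove this theorem: it cites it directly from \citet{koval2023point}, and the appendix (\Cref{sec:heat_ap}) supplies only the constituent data --- the six-dimensional Lie algebra $v_1,\dots,v_6$, the corresponding one-parameter flows, the Lie brackets, and the reconstruction of the group as $\mathrm{SL}(2,\R)\ltimes_\phi\mathrm{H}(1,\R)$ with explicit product and inverse. Those pieces verify the formula on the connected component of the group, but the completeness of the classification (in particular the discrete part) is delegated to the cited reference.

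Your sketch is therefore a genuinely different route from the paper, and a valid one. You propose the standard Olver-style pipeline: prolong a general vector field, solve the determining system to get the algebra, integrate to the identity component, and then use the direct method to close off discrete symmetries. Your listed generators agree with the paper's $v_1,\dots,v_6$ up to a basis change (your scaling $2t\partial_t+x\partial_x$ is $v_4+\tfrac{\nu}{2}v_3$, your projective generator is $4v_6$), and your handling of the infinite-dimensional superposition ideal (discard it, since the statement concerns the \emph{finite} group) is correct. Your discrete-symmetry checks are also on the mark: parity $x\mapsto-x$ is reached via $-I\in\mathrm{SL}(2,\R)$ in the connected component, time reversal is excluded because it flips the sign of $\nu$, and the only genuine residual discrete freedom is $\operatorname{sgn}(\sigma)$. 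This differs from what \citet{koval2023point} actually do --- they exploit megaideals of the maximal Lie invariance algebra to pin down the automorphism group and hence the discrete symmetries algebraically --- but both approaches yield the same classification. Your route is more elementary and closer to a classical Lie-symmetry computation; the megaideal method is more systematic when the discrete part is harder to see by hand.

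One caveat: the discrete-classification step is stated more tightly than it can actually be carried out. The claims that $T$ depends only on $t$, that $X$ is affine in $x$, and that $U$ is multiplicative in $u$ (once superposition is set aside) do follow, but ``a leading-order analysis of the principal symbol'' undersells the work: for the direct method one must push the chain-rule expansion through to the full determining system and solve it, and in particular showing that $U$ is linear (rather than nonlinear) in $u$ requires examining more than the top-order terms. This is a gap in presentation rather than in approach --- the rigidity of the second-order linear heat operator does make the system solvable --- but as written the discrete step would not stand on its own.
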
 
This illustrates the global structure of the symmetry group (the finite simply connected component) as 
$
    G^H = SL(2, \R) \ltimes_\phi H(1, \R),
$
where $H(1, \R)$ is the rank one polarized Heisenberg group. Definition of the antihomomorphism $\phi$, and the full derivation can be found in \Cref{sec:heat_ap}.

\paragraph{Choice of Energy} 
For the numerical experiments, we consider solving \Cref{eq:heat} with diffusivity $\nu=0.1$ on the spatial domain $\Omega = [0,2\pi]$, with periodic boundary conditions for $t\in[0,16]$. Given the initial conditions in \Cref{eqn:heat_ics}, the distribution of initial conditions can be described by the bounding box of the jet. The energy is therefore chosen to be the distance to the training domain as described in \Cref{sec:practice}. Letting $\operatorname{jet}_0 = (u_0,x_0,t_0)$ and $\operatorname{jet}_\Omega = (1,\Omega, 0)$, the energy is:
\begin{align}\label{eq:heat_energy}
E_{\textrm{Heat}}(x_0, t_0, u_0, x_f, t_f) = &\operatorname{dist}\left[\operatorname{max}(\operatorname{jet}_0),\operatorname{max}(\operatorname{jet}_\Omega)\right] + \operatorname{dist}\left[\operatorname{min}(\operatorname{jet}_0),\operatorname{min}(\operatorname{jet}_\Omega)\right] \\
&+ \operatorname{dist}\left[x_f,\Omega\right] + \operatorname{dist}\left[t_f,16\right]\nonumber
\end{align}
Further information on the training and testing data, as well as further illustrations are available in \Cref{sec:heat_ap}. The overall performance is reported in \Cref{tab:combined}.
\subsubsection{Burgers' Equation}\label{sec:burgers}
The viscous Burgers' Equation combines diffusion and non-linear advection, which can be related to the linear heat equation above via the Cole--Hopf transformation. The nonlinearity of the equation results in more complex dynamics, e.g.\ via shock formation, while still admitting non-trivial Lie point symmetries.
\begin{equation}\label{eq:burgers}
    u_t+u u_x-\nu u_{x x} = 0
\end{equation}
This equation admits a 5-dimensional solvable Lie algebra, see \Cref{sec:burgers_ap}, with a global action: 
\begin{theorem}[\citet{koval2023point}  for $\nu\neq1$]
The point symmetry group $G^{\mathrm{B}}$ of the Burgers' \Cref{eq:burgers} consists of the point transformations of the form
\begin{equation*}
\resizebox{!}{10pt}{$
\tilde{t}=\frac{\alpha t+\beta}{\gamma t+\delta}, \quad \tilde{x}=\frac{x+\lambda_1 t+\lambda_0}{\gamma t+\delta}, \quad \tilde{u}=(\gamma t+\delta) u-\gamma x+\lambda_1 \delta-\lambda_0 \gamma,$}
\end{equation*}
where $\alpha, \beta, \gamma, \delta, \lambda_1$ and $\lambda_0$ are arbitrary constants with $\alpha \delta-\beta \gamma=1$.
\end{theorem}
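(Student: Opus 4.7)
The plan is to execute Lie's classical symmetry algorithm on the Burgers' operator $\Delta = u_t + u u_x - \nu u_{xx}$, and then leverage the Cole--Hopf correspondence to lift the infinitesimal result to the claimed finite-group statement.

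First, I would posit a general infinitesimal generator $v = \tau(x,t,u)\partial_t + \xi(x,t,u)\partial_x + \phi(x,t,u)\partial_u$ and compute its second prolongation $\mathrm{pr}^{(2)} v$ via the standard prolongation formula. Demanding $\mathrm{pr}^{(2)} v[\Delta] \equiv 0$ on the solution manifold and collecting coefficients of monomials in the free jet variables $u_x, u_t, u_{xx}, \dots$ yields an overdetermined linear PDE system for $\tau, \xi, \phi$. Solving this system (routine but tedious) gives a $5$-dimensional solvable Lie algebra $\mf{g}^{\mathrm{B}}$ spanned by time/space translations $\partial_t$ and $\partial_x$, the Galilean boost $t\partial_x + \partial_u$, the scaling $t\partial_t + x\partial_x - u\partial_u$, and the projective generator $t^2\partial_t + tx\partial_x + (x - tu)\partial_u$. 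This matches the Lie algebra indicated in \Cref{sec:burgers_ap}.

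Second, to obtain the global group I would use the Cole--Hopf substitution $u = -2\nu\, v_x/v$, which maps (positive) solutions of the linear heat equation $v_t - \nu v_{xx} = 0$ bijectively onto solutions of Burgers'. Applying an arbitrary heat-equation symmetry, as classified in the previous theorem of \citet{koval2023point}, to $v$ and pushing forward through Cole--Hopf yields a point transformation of $(t,x,u)$; direct computation recovers the displayed formula. Crucially, the Heisenberg multiplier $\sigma$ drops out because $u = -2\nu \partial_x \log v$ is insensitive to overall rescaling of $v$, and the quadratic exponential factor in the heat symmetry contributes only through its logarithmic derivative, producing the linear-in-$x$ correction $-\gamma x + \lambda_1\delta - \lambda_0\gamma$. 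This realizes the Burgers' symmetry group as the quotient $\bigl(SL(2,\R) \ltimes_\phi H(1,\R)\bigr)/Z \cong SL(2,\R) \ltimes \R^2$, with the normalization $\alpha\delta - \beta\gamma = 1$ inherited from $SL(2,\R)$.

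The main obstacle is establishing completeness: one must rule out finite point symmetries outside the image of Cole--Hopf. Exhausting the connected component $G^{\mathrm{B}}_0$ is immediate from exponentiating $\mf{g}^{\mathrm{B}}$ using solvability, but one must also enumerate discrete components. The natural candidate $(t,x,u) \mapsto (t,-x,-u)$ is absorbed by taking $(\alpha,\beta,\gamma,\delta) = (-1,0,0,-1)$ with $\lambda_0 = \lambda_1 = 0$, and time reversal $t \mapsto -t$ is excluded by the sign of the dissipative viscous term $\nu u_{xx}$. Verifying rigorously that no further disconnected components survive requires an argument over the second jet space as in the original derivation of Koval; once this is granted, the remaining check that every displayed transformation actually preserves $\Delta$ follows from the Cole--Hopf intertwining and a short calculation with the chain rule.
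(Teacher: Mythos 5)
The paper does not actually prove this theorem: it is imported verbatim (as a citation) from Koval and Popovych, who establish it with their megaideal-based algebraic method rather than with anything resembling your Cole--Hopf argument. So there is no ``paper proof'' for you to match --- your proposal is a genuinely different route, and a reasonable one, but it has gaps that you partially acknowledge and partially leave unaddressed.

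Your forward direction via Cole--Hopf is correct and worth keeping: with $u=-2\nu\,\partial_x\log v$ and $\partial_{\tilde{x}}=(\gamma t+\delta)\partial_x$, the exponential prefactor in the heat transformation contributes exactly $-\gamma(x+\lambda_1t+\lambda_0)+\lambda_1(\gamma t+\delta)$, which simplifies to the claimed $-\gamma x+\lambda_1\delta-\lambda_0\gamma$, and $\sigma$ drops out. Two points need repair, though. First, Cole--Hopf is not a bijection from heat solutions to Burgers' solutions: the fibres are scaling orbits ($v$ and $\sigma v$ map to the same $u$), so ``bijectively onto'' is wrong, and what you actually want to say is that the finite-dimensional factor of the heat pseudogroup acts on these fibres and descends to a well-defined action on Burgers' solutions, with kernel exactly the centre $\{\sigma\}\cong\R_{>0}$. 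Second, the heat point symmetry pseudogroup in the cited theorem includes the linear superposition $v\mapsto v+h$ for an arbitrary heat solution $h$; this does \emph{not} descend through Cole--Hopf to a point transformation of $(t,x,u)$, so you must explicitly restrict to the $h=0$ subgroup before pushing forward, otherwise the claimed ``point transformation'' conclusion is false.

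The deeper issue is completeness, and you correctly name it as the main obstacle, but your treatment of it is circular. Exponentiating the $5$-dimensional solvable algebra gives you the identity component $G^{\mathrm B}_0$, and Cole--Hopf hands you a nice global parametrisation, but neither of these steps rules out discrete components of the point symmetry group that simply fail to lift to heat symmetries. Noting that $(t,x,u)\mapsto(t,-x,-u)$ is absorbed by $-I\in\mathrm{SL}(2,\R)_0$ and waving at the viscous term to exclude $t\mapsto -t$ is heuristic, not a proof; the actual completeness argument in Koval and Popovych is algebraic (constraining the automorphisms of the maximal Lie invariance algebra via its megaideals) rather than a jet-space determining-equation computation, so deferring to ``an argument over the second jet space as in the original derivation'' mischaracterises the cited source and leaves the gap open. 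If you want a self-contained proof along your lines, you need either to run the algebraic method yourself or to carry out the direct classification of point transformations preserving the equation; as written the proposal establishes sufficiency of the displayed family but not its exhaustiveness.
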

And similar to before the global structure of the group can be inferred as 
$
    G^B = \mathrm{SL}(2, \mathbb{R}) \ltimes_{\breve{\varphi}}\left(\mathbb{R}^2,+\right)
$
with the antihomomorphism $\breve{\varphi}$ defined in \Cref{sec:burgers_ap}. 
\paragraph{Choice of Energy} For the numerical experiments, we consider solving \Cref{eq:burgers} on the domain $\Omega = [0,1]$, with periodic boundary conditions on $t\in[0,1]$. Given the different form of initial conditions to the heat equation, now sampled as gaussian random fields we adjust the energy to focus on the bounding box of the domain and mean of $u_0$. The energy thus is chosen to be the distance to the training domain as described in \Cref{sec:practice}:
\hspace{4mm}
\begin{align}\label{eq:burg_energy}
E_{\text{Burgers}}(x_0, t_0, u_0, x_f, t_f) = \operatorname{dist}\left[\operatorname{max}(x_0)-\operatorname{min}(x_0),1)\right] 
& + \operatorname{dist}\left[t_0,\Omega\right] + \operatorname{dist}\left[\operatorname{mean}[u_0], 0\right] \nonumber\\
&\quad+ \operatorname{dist}\left[x_f,\Omega\right] + \operatorname{dist}\left[t_f,\Omega\right]\nonumber
\end{align}
Further information on the training and testing data, as well as further illustrations are available in \Cref{sec:burg_ap_figs}. The overall performance is reported in \Cref{tab:combined}.

\begin{table}
\centering
\resizebox{\textwidth}{!}{
\begin{tabular}{lcccccc}
\toprule
& \multicolumn{2}{c}{Heat} & \multicolumn{2}{c}{Heat (+ data aug.)} & \multicolumn{2}{c} {Burgers} \\
Model & ID (\ref{eqn:heat_ics}) & OOD (\ref{eqn:heat_ics}) & ID (\ref{eqn:heat_ics}) & OOD (\ref{eqn:heat_ics}) & ID (\ref{eq:burgers_ic}) & OOD (\ref{eq:burgers_ic})\\
\midrule
DeepONet & $0.0498 \pm 0.0072$ & $0.6572 \pm 0.1235$ & $0.0504 \pm 0.0014$ & $0.0687 \pm 0.0044$ & $\mathbf{0.0832 \pm 0.0547}$ & $0.8369 \pm 0.0987$  \\
LieLAC [DeepONet] & $\mathbf{0.0443 \pm 0.0027}$ & $\mathbf{0.0435 \pm 0.0017}$ & $\mathbf{0.0500 \pm 0.0003}$ & $\mathbf{0.0500 \pm 0.0003}$ & ${0.0916 \pm 0.0632}$ & $\mathbf{0.1006 \pm 0.0637}$\\
\bottomrule
\end{tabular}
}
\vspace{0.5em}
\caption{$L_2$ relative error for Heat equation and Burgers' equation averaged over the time period. Heat and Burgers refers to model trained on in-distribution (ID) data.  Data aug refers to model trained on out of distribution (OOD) data achieved via symmetry transformations.\label{tab:combined}}
\end{table}

\subsubsection{Allen--Cahn Equation}\label{sec:ace}
We test the \textsc{Poseidon} \citep{herde2024poseidonefficientfoundationmodels} foundation model, fine-tuned on the Allen--Cahn equation (ACE) on data it was trained on, and on transformed data (using Lie-point symmetry group of ACE). 
\begin{equation}\label{eq:ACE}
u_t=u_{xx} + u_{yy} -\epsilon^2 u\left(u^2-1\right)    
\end{equation}
We consider solving \Cref{eq:ACE} on a 2D domain with periodic boundary conditions for $t \in [0,1], x \in \Omega = [0,1]^2$. This equation admits a 4-dimensional Lie algebra, see \Cref{sec:ace_ap}, with $\mathrm{SE}(2)$ as the point symmetry group.

\paragraph{Choice of Energy} In order to justify the choice of energy, we need to understand the \textsc{Poseidon} \citep{herde2024poseidonefficientfoundationmodels} model itself. Firstly, $x_0$ and $t_0$ in \Cref{eq:operator_equivariance} are implicitly defined in the model to be fixed to the standard domain $x_0 = \Omega$, and $t_0 = 0$. As such, the only input is the initial condition $u_0|_\Omega$. Therefore, we are going to have to only consider transformations, that preserve $x_0$ and $t_0$. This can be imposed directly within the energy canonicalization framework as described in \Cref{sec:practice} by considering the characteristic functions as follows: 
\begin{equation}
    E_{\textsc{AC}}(x_0,t_0,u_0,x_f,t_f) = \chi_{\{\Omega\}}(x_0) + \chi_{\{0\}}(t_0) + \chi_{[0,1]}(t_f) + \chi_{\Omega}(x_f) + E(u_0)
\end{equation}
Note, that this is able to significantly reduce the computational cost of optimization, as the group generated by $v_4$ in \ref{eq:ACE_syms}, resulting in rotations of the underlying domain, can be reduced to a discrete subgroup $\mathrm{C}_4 \leq \mathrm{SO}(2)$, as the only transformations such that $g^{-1}\Omega = \Omega$. Further information on the training and testing data is available in \Cref{sec:ace_ap}. The overall performance is reported in \Cref{tab:ACE}.
\begin{wraptable}{O}{0.65\textwidth}
\resizebox{0.65\textwidth}{!}{%
\begin{minipage}{0.95\textwidth}
\centering
\caption{ACE Test error evaluated over 90 trajectories for in-dsitribution (ID) and out-of-distribution (OOD). All errors are $(\times \num{E-04})$.}\label{tab:ACE}
\begin{tabular}{lrrr} %
        \toprule
        Name  $(\times \num{E-04})$  &  ID Error \cref{eq:ACE_IC}  &  OOD Error \cref{eq:ACE_IC_OOD}  & Avg \\
        \midrule
        \textsc{Poseidon} & \textbf{\num{6.93} $\pm$ \num{2.83}} &  \num{75.76} $\pm$ \num{6.80} & \num{41.35} \\
        LieLAC [\textsc{Poseidon}]   & \num{16.69} $\pm$ \num{5.42} & \num{29.19} $\pm$ \num{5.64} & \num{22.94}\\ 
        \textsc{Poseidon} + ft (can). &  \num{8.45} $\pm$ \num{2.87} & \num{20.09} $\pm$ \num{3.32} & \num{14.27}\\
        LieLAC [\textsc{Poseidon}+ ft.] & \num{10.23} $\pm$ \num{3.36} & \textbf{\num{11.34} $\pm$ \num{2.92}}  & \textbf{\num{10.79}}\\ 
        \midrule 
        \textsc{Poseidon} + ft. (data aug) &  \num{8.41} $\pm$ \num{3.48} & \num{8.50} $\pm$ \num{3.11} & \num{8.46}\\
        \bottomrule
    \end{tabular}
    \end{minipage}
      }
\vspace{-1.25em}
\end{wraptable} 
\paragraph{Finetuning}
As seen in prior works \citep{mondalEquivariantAdaptationLarge2023}, the canonicalized data may end up misaligned with the original data. For this reason finetuning of the model/canonicalization may be desired. 

\new{It is important to emphasize that finetuning is not necessary, and even without finetuning the out of distribution errors go down significantly, as seen in \Cref{tab:ACE}. All of the examples above, including kNN (\Cref{sec:2dexample}), MNIST (\Cref{sec:MNIST}), Heat and Burgers (\Cref{sec:heat,sec:burgers}) did not require finetuning. That is due to tasks not being as complex, and not requiring the same level of precision. \textsc{Poseidon}, being pre-trained in a regressive manner, turns out to be very good to begin with, and finetuning is performed to illustrate that baseline accuracy of data-augmentation \citep{brandstetter2022lie} can be matched.}

While it may seem that finetuning the resulting operator on canonicalized data is as hard as augmenting the original data, it turns out to not be, as fine-tuning can be done on a very small data-pool (in this case only 100 trajectories). 
This stems from the energy minima in \Cref{eq:energyminization} residing near, but not necessarily on, training data. 
If the orbits are well-sampled in training, finetuning is not necessary, e.g.~seen in \Cref{sec:MNIST}. However, based on the training data in \cref{eq:ACE_IC}, the orbits under $\mathrm{C}_4$ are well-sampled, while the orbits under $\mathrm{SE}(2)$ are not, illustrated in \Cref{tab:ACE}, wherein the error falls 2/3-fold after finetuning. An example is shown on \Cref{fig:ACE_can}, with further illustrations in \Cref{fig:ace_further_eg}.

\section{Conclusion}
In this paper we provided a unified theoretical framework for defining frames and canonicalizations over non-compact and infinite groups. We showed that energy-based canonicalization can be used constructively to obtain both, and provide a Lie algebra based approach for turning any pre-trained network equivariant. We showcased that the proposed method is effective in turning image classifiers invariant, as well as neural operators Lie point symmetry equivariant.

\paragraph{Future work}
Our example in \Cref{sec:ace} illustrates the promise of the proposed framework for fine-tuning foundation models. Due to the typically high cost of (re-)training foundation models, further exploration of such applications is an important direction for future work.

Of particular interest are ``realistic'' scientific machine learning applications: In such applications collecting sufficiently large training data sets may be prohibitively expensive or technically impossible, i.e., available data not sufficient for retraining existing models/ using standard fine-tuning approaches. Encoding symmetries explicitly allows for more efficient fine-tuning, which may be more accessible in settings with little data.

\paragraph{Limitations}
Energy-based canonicalization suffers from slow inference (resulting in $5-30\times$ slowdown). Improvements to the optimization are possible via non-convex optimization techniques (e.g., simulated annealing \citep{baudoin2008ornstein,rutenbar1989simulated}, particle swarm \citep{huang2023global}, consensus-based \citep{totzeck2021trends, fornasier2024consensus}), particularly applicable as Lie groups are oftentimes low-dimensional, while the ambient space is not. The majority of these, however, are not easily adaptable to the setting of Lie groups.

Implementing Lie point symmetries in existing neural operators is difficult due to hard-coded boundary conditions. Future work should focus on foundation models handling initial/boundary conditions and parameters, or explore alternative PDE symmetries \citep{brandstetter2022lie}. This could include studying infinite-dimensional symmetry groups, which would also naturally arise when studying equivariance to local transformations.

\subsubsection*{Acknowledgments}
The authors would like to thank Olga Obolenets, Lucy Richman, Hong Ye Tan and Yury Korolev for useful discussions and feedback. ZS acknowledges support from the Cantab Capital Institute for the Mathematics of Information, Christs College and the Trinity Henry Barlow Scholarship scheme. FS acknowledges support from the EPSRC advanced career fellowship EP/V029428/1. MW was supported by NSF Awards DMS-2406905 and CBET-2112085, and a Sloan Research Fellowship. CBS acknowledges support from the Philip Leverhulme Prize, the Royal Society Wolfson Fellowship, the EPSRC advanced career fellowship EP/V029428/1, EPSRC Grants EP/S026045/1 and EP/T003553/1, EP/N014588/1, EP/T017961/1, the Wellcome Innovator Awards 215733/Z/19/Z and 221633/Z/20/Z, the European Union Horizon 2020 research and innovation programme under the Marie Skłodowska-Curie Grant agreement No. 777826 NoMADS, the Cantab Capital Institute for the Mathematics of Information and the Alan Turing Institute.

\bibliography{references}
\bibliographystyle{iclr2025_conference}

\newpage
\appendix
\appendixpage

\startcontents[sections]
\printcontents[sections]{l}{1}{\setcounter{tocdepth}{2}}

\section{Symmetries of Partial Differential Equations}\label{sec:lps_ap}
Partial differential equations (PDEs) are the most generic mathematical model used to describe dynamics of various systems in sciences. In what follows, we will be working primarily with open subsets of euclidean space, but these ideas generically extend to more complicated spaces. The formal presentation here closely follows \citet{olver1993applications}, and an interested reader is referred there for more in-depth discussions of the concepts. 

This subsection introduces the concept of PDE point symmetries -- transformations that map solutions of a given PDE to other solutions of the same PDE. To analyze these symmetries, traditionally focus is shifted to Jet spaces, where the problem of studying PDE symmetries is transformed into the simpler task of studying symmetries of algebraic equations.  By employing one-parameter groups of transformations, it becomes possible to systematically derive all such symmetries by examining their infinitesimal actions.

Suppose we are considering a system $\Delta$ of $n$-th order differential equations involving $p$ independent variables $x=\left(x^1, \dots, x^p\right)\in X=\mathbb{R}^p$, and $q$ dependent variables $u=\left(u^1, \dots , u^q\right) \in U=\mathbb{R}^q$. 
PDEs define relations between various derivatives of the function, for denoting which we use the following multi-index notation for $J=\left(j_1, \ldots, j_k\right)$, with 
$
\partial_J f(x)=\frac{\partial^k f(x)}{\partial x^{j_1} \partial x^{j_2} \cdots \partial x^{j_k}}.
$
\paragraph{Jet spaces and Prolongations:} 
Consider $f: X \rightarrow U$ smooth and let $U_k$ be the Euclidean space, endowed with coordinates $u_J^\alpha=\partial_J f^\alpha(x)$ so as to represent the above derivatives. Furthermore, set $U^{(n)}=U \times U_1 \times \cdots \times U_n$ to be the Cartesian product space, whose coordinates represent all the derivatives of functions $u=f(x)$ of all orders from 0 to $n$. The total space $X \times U^{(n)}$, whose coordinates represent the independent variables, the dependent variables and the derivatives of the dependent variables up to order $n$ is called the $n$-th order jet space of the underlying space $X \times U$. 
Given a smooth function $u=f(x)$, there is an induced function $u^{(n)}=\operatorname{pr}^{(n)} f(x)$, called the $n$-th prolongation of $f$, defined by the equations $u_J^\alpha=\partial_J f^\alpha(x).$
Thus $\mathrm{pr}^{(n)} f$ is a function from $X$ to the space $U^{(n)}$, and for each $x$ in $X$, $\operatorname{pr}^{(n)} f(x)$ is a vector representing values of $f$ and all its derivatives up to order $n$ at the point $x$.

\paragraph{PDE Symmetry groups:}
With the notation defined, we can now write down the PDE, given as a system of equations $\Delta\left(x, u^{(n)}\right)=0,$ with $\Delta: X \times U^{(n)} \rightarrow \mathbb{R}^l$ smooth. The differential equations themselves tell where the given map $\Delta$ vanishes on $X \times U^{(n)}$, and thus determine a subvariety
$$
S_{\Delta}=\left\{\left(x, u^{(n)}\right): \Delta\left(x, u^{(n)}\right)=0\right\} \subset X \times U^{(n)}
$$
Solutions of the system will be of the form $u=f(x)$. A symmetry group of the system $\Delta$ will be a local group of transformations, $G^{\Delta}$, acting on some open subset $M \subset X \times U$ in such that ``$G$ transforms solutions of $\Delta$ to other solutions of $\Delta$''. 
\paragraph{Prolongations of group actions and vector fields}
Now suppose $G$ is a local group of transformations acting on an open subset $M \subset X \times U$ of the space of independent and dependent variables. There is an induced local action of $G$ on the $n$-jet space $M^{(n)}$, called the $n$-th prolongation of $G$ denoted $\mathrm{pr}^{(n)} G$. This prolongation is defined so that it transforms the derivatives of functions $u=f(x)$ into the corresponding derivatives of the transformed function $\tilde{u}=\tilde{f}(\tilde{x})$. 
Consider now $\mathrm{v}$ a vector field on $M$, with corresponding (local) one-parameter group $\exp (\varepsilon \mathrm{v})$. The $n$-th prolongation of $\mathrm{v}$, denoted $\mathrm{pr}^{(n)} \mathrm{v}$, will be a vector field on the $n$-jet space $M^{(n)}$, and is defined to be the infinitesimal generator of the corresponding prolonged one-parameter group $\mathrm{pr}^{(n)}[\exp (\varepsilon \mathrm{v})]$. In other words,
$$
\left.\operatorname{pr}^{(n)}\right|_{\left(x, u^{(n)}\right)}=\left.\frac{\mathrm d}{\mathrm d \varepsilon}\right|_{\varepsilon=0} \operatorname{pr}^{(n)}[\exp (\varepsilon \mathbf{v})]\left(x, u^{(n)}\right)
$$
for any $\left(x, u^{(n)}\right) \in M^{(n)}$. These can be derived using \citet[Theorem 2.36]{olver1993applications}.
\begin{theorem}[2.31 in \citet{olver1993applications}]
Suppose $\Delta\left(x, u^{(n)}\right)=0, $ is a system of differential equations of maximal rank defined over $M \subset X \times U$. If $G$ is a local group of transformations acting on $M$, and
\[
\operatorname{pr}^{(n)} \mathrm{v}\left[\Delta\left(x, u^{(n)}\right)\right]=0, \quad \text { whenever } \quad \Delta\left(x, u^{(n)}\right)=0
\]
for every infinitesimal generator $\mathrm{v}$ of $G$, then $G$ is a symmetry group of the system.  
\end{theorem}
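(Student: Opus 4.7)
The plan is to reduce the statement to a tangency assertion on the jet space and then transport it back to solutions via the prolongation identity $\operatorname{pr}^{(n)}[\exp(\varepsilon v)] = \exp(\varepsilon\,\operatorname{pr}^{(n)} v)$. First, I would observe that for any smooth $\Delta : M^{(n)} \to \R^l$ and any vector field $w$ on $M^{(n)}$, the condition $w[\Delta] = 0$ on the zero set $S_\Delta = \{\Delta = 0\}$ is precisely the statement that $w|_{S_\Delta}$ is tangent to $S_\Delta$. The maximal rank hypothesis is what makes this clean: it ensures that $\Delta^{-1}(0)$ is a regular submanifold of $M^{(n)}$ (the differential $d\Delta$ has rank $l$ along $S_\Delta$), so the kernel of $d\Delta$ coincides with $TS_\Delta$ and "tangency" is meaningful.

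Next, I would upgrade this infinitesimal tangency to an invariance under the local flow. Applied to $w = \operatorname{pr}^{(n)} v$ for every infinitesimal generator $v$ of $G$, the previous step shows that $\operatorname{pr}^{(n)} v$ is tangent to $S_\Delta$. By the standard flow-box theorem for tangent vector fields on a regular submanifold, the one-parameter group $\exp(\varepsilon\,\operatorname{pr}^{(n)} v)$ maps $S_\Delta$ into itself for $\varepsilon$ sufficiently small. Since the Lie algebra of $G$ acts on $M$ by these $v$'s and the prolongation is natural with respect to flows, $\operatorname{pr}^{(n)}[\exp(\varepsilon v)] = \exp(\varepsilon\,\operatorname{pr}^{(n)} v)$. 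Combining this with the fact that, for a local Lie group, a neighbourhood of the identity is generated by the one-parameter subgroups, one concludes that $\operatorname{pr}^{(n)} g$ preserves $S_\Delta$ for every $g$ in a neighbourhood of the identity in $G$, hence on the connected component by a standard connectivity argument.

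Finally, I would translate the invariance of $S_\Delta$ under the prolonged action into the statement that $G$ carries solutions to solutions. If $u = f(x)$ is a solution, then by definition the graph of $\operatorname{pr}^{(n)} f$ lies inside $S_\Delta$. For $g \in G$ close enough to the identity, the transformed graph is $\operatorname{pr}^{(n)} g \cdot \{(x, \operatorname{pr}^{(n)} f(x))\}$, which still lies in $S_\Delta$ by the previous paragraph. It then remains to verify that this transformed set is actually the prolongation graph of some function $\tilde f$ in the new coordinates. This is precisely where the naturality of prolongation under point transformations is used: because $\operatorname{pr}^{(n)} g$ was defined so that it takes derivatives of $f$ to derivatives of the transformed $\tilde f$, the image is automatically $\operatorname{pr}^{(n)} \tilde f(\tilde x)$, and so $\tilde u = \tilde f(\tilde x)$ solves $\Delta = 0$.

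The main obstacle I expect is the last step, namely making rigorous that the pushed-forward graph is again the prolongation of a function rather than some more singular section of the jet bundle. This is a local/implicit-function-theorem issue that needs the maximal rank condition together with smallness of $\varepsilon$; away from the identity the transformed "function" may only be defined implicitly, which is why the result is naturally stated for local groups. The conceptual content, however, is entirely captured by the tangency reformulation plus the prolongation/flow commutation identity.
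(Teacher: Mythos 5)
The paper does not prove this theorem: it appears only as a cited background fact (Theorem 2.31 in \citet{olver1993applications}) in the appendix review, which explicitly defers to that reference for details, so there is no in-paper proof to compare against. Measured against the standard argument in Olver, your sketch is faithful and follows the same decomposition: (i) the maximal-rank hypothesis makes $S_\Delta$ a regular submanifold with $\ker d\Delta = T S_\Delta$, so the hypothesis $\operatorname{pr}^{(n)}\mathrm{v}[\Delta]=0$ on $S_\Delta$ is exactly tangency of the prolonged vector field to $S_\Delta$ (Olver's infinitesimal criterion for invariance of a subvariety); (ii) the local flow of a tangent vector field preserves the submanifold, which combined with the commutation $\operatorname{pr}^{(n)}[\exp(\varepsilon\mathrm{v})]=\exp(\varepsilon\,\operatorname{pr}^{(n)}\mathrm{v})$ and connectedness of the local group yields invariance of $S_\Delta$ under $\operatorname{pr}^{(n)}G$; and (iii) invariance of $S_\Delta$ implies solutions map to solutions, which is Olver's Theorem 2.27. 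One small terminological slip: what you invoke in step (ii) is not the flow-box (straightening) theorem but the elementary fact that the local flow of a vector field tangent to a regular submanifold preserves that submanifold; the straightening theorem is stronger than needed. You also correctly isolate the only genuinely delicate point in step (iii), namely that the image of a prolongation graph under $\operatorname{pr}^{(n)}g$ is again the prolongation graph of a function: this follows by the construction of prolongation (naturality under point transformations), but requires shrinking domains so that the transformed graph is still a graph over the base, which is precisely why the statement is formulated for local groups of transformations.
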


\section{Further MNIST Visualisation}
\begin{figure}
\centering
\begin{subfigure}{0.34\textwidth}
    \includegraphics[width=\textwidth]{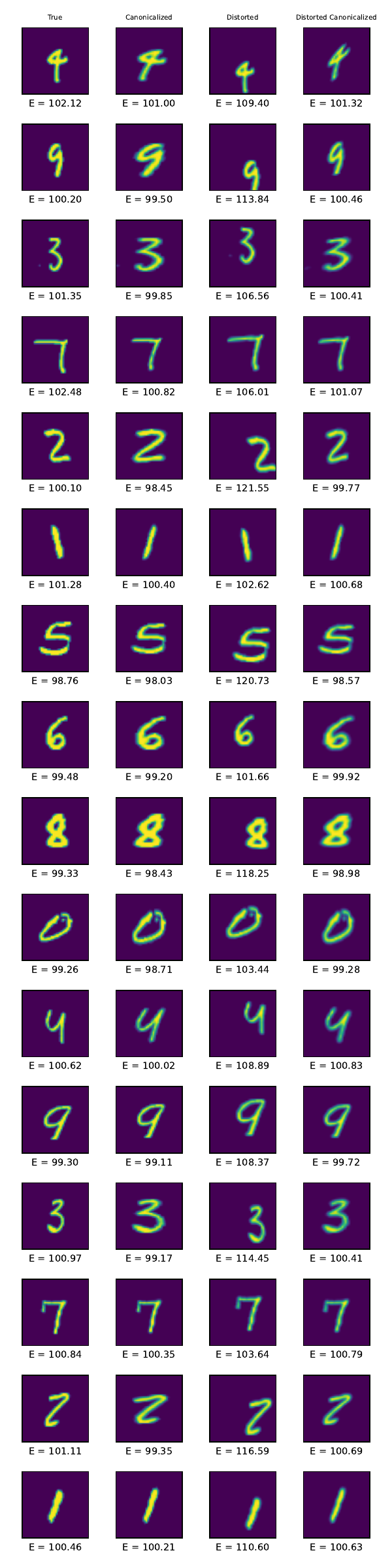}
    \caption{Affine MNIST examples}
\end{subfigure}
\hfill
\begin{subfigure}{0.34\textwidth}
    \includegraphics[width=\textwidth]{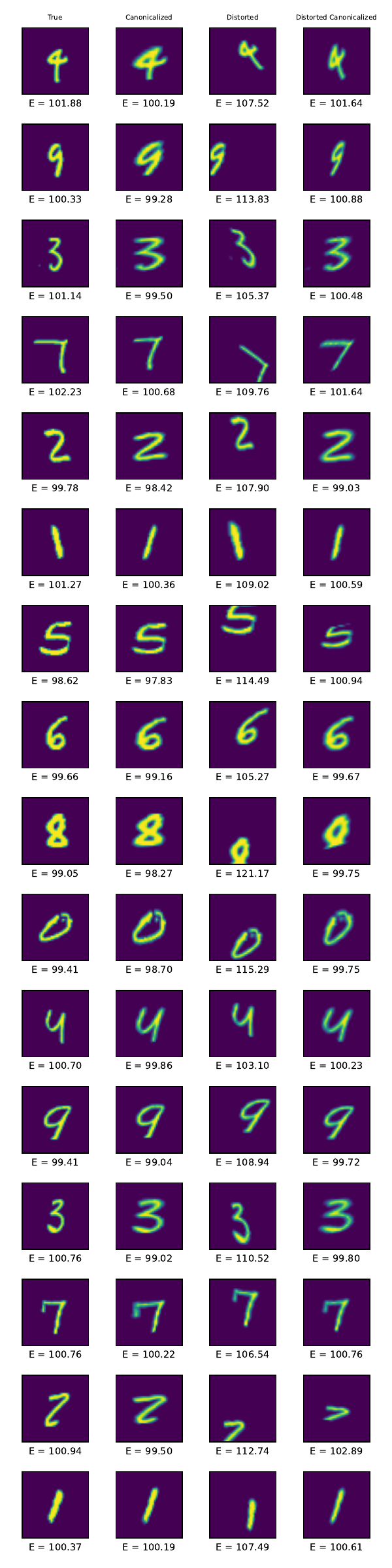}
    \caption{Homography MNIST examples}
\end{subfigure}
\hfill
\begin{subfigure}{0.27\textwidth}
    \includegraphics[width=\textwidth]{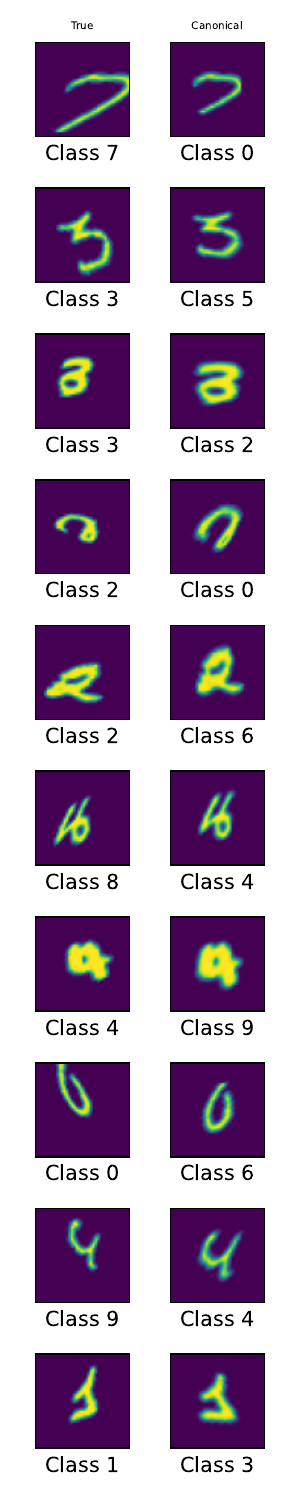}
    \caption{Misclassification examples}
    \label{fig:failure}
\end{subfigure}

\caption{Expanded Visualisations}
\label{fig:mnist_extra}
\end{figure}

\begin{figure}
\centering
\begin{subfigure}{0.48\textwidth}
    \includegraphics[width=\textwidth]{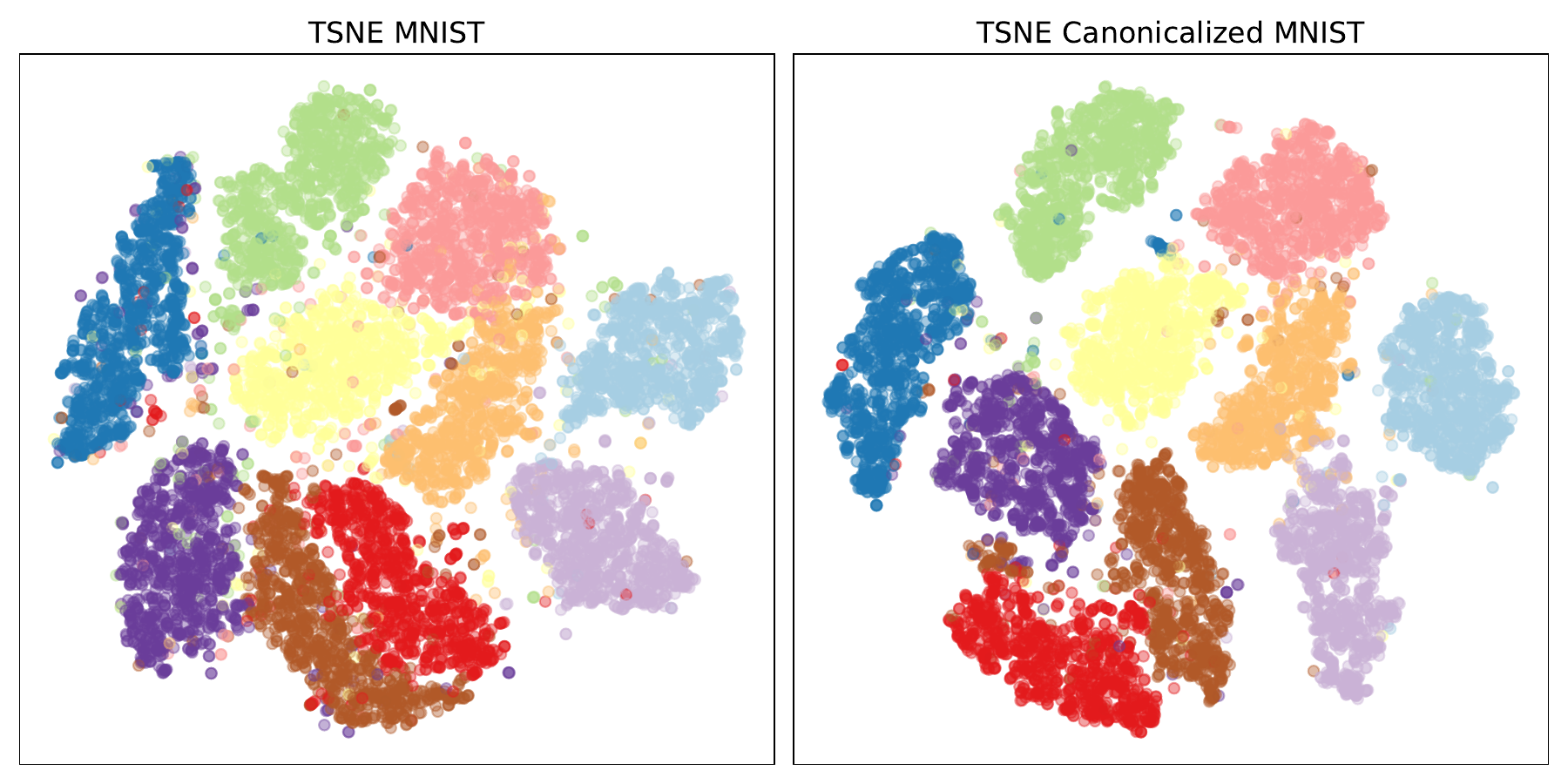}
    \caption{TSNE post-affine canonicalization }
    \label{fig:tsne_mnist_affine}
\end{subfigure}
\hfill
\begin{subfigure}{0.48\textwidth}
    \includegraphics[width=\textwidth]{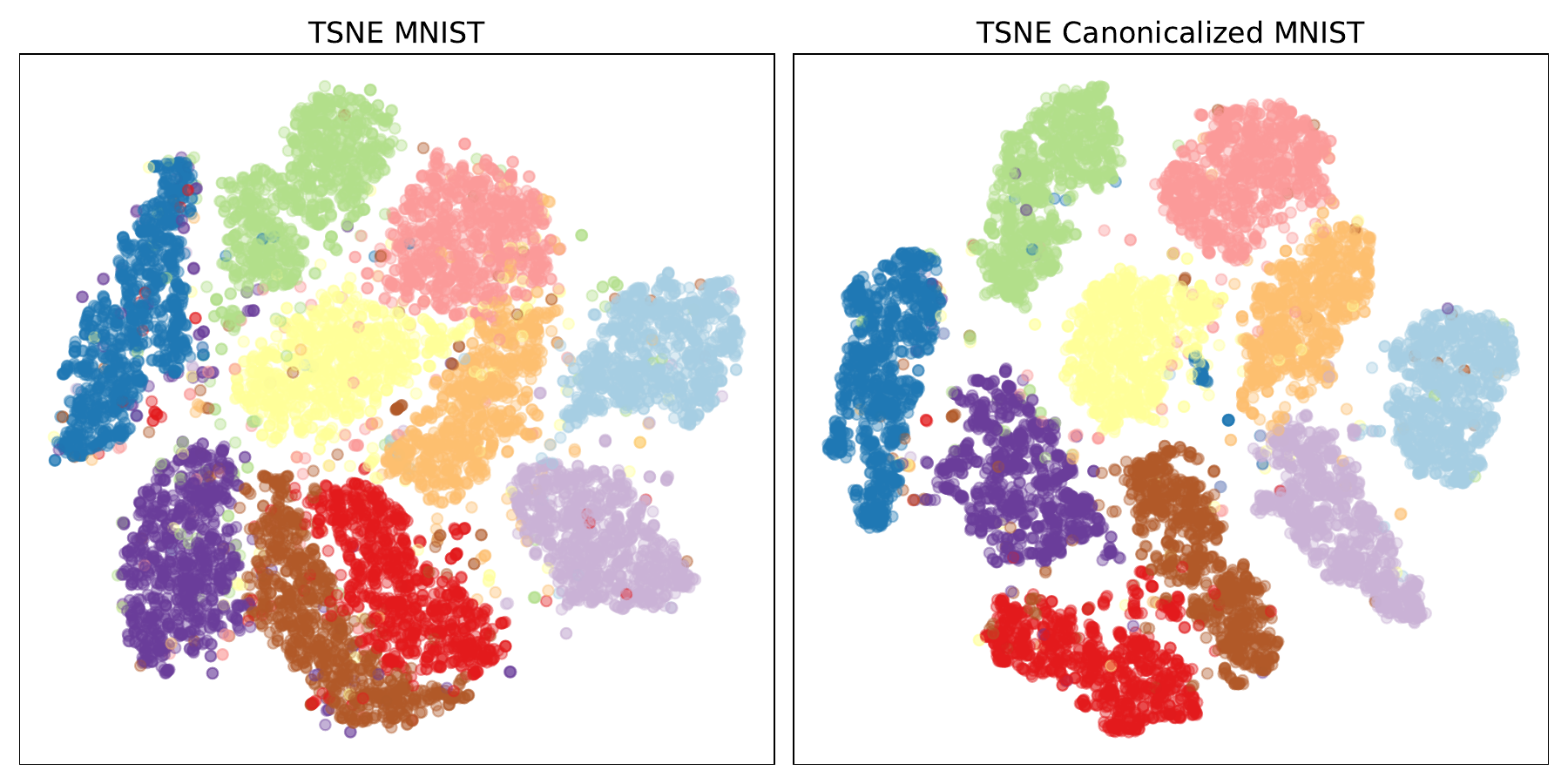}
    \caption{TSNE post-homography canonicalization}
    \label{fig:tsne_mnist_homography}
\end{subfigure}
\caption{TSNE plots}
\label{fig:tsne_mnist}
\end{figure}
Of particular interest are the failure modes of canonicalization, warranting further theoretical investigation. We present the failure modes in \Cref{fig:failure}, and as we can see from the examples provided - in majority of the cases the reason why failure modes exists is due to existence of group transformations resulting in completely different digits! As such, the overall distribution is \emph{not} exactly equivariant. 
\newpage
\section{Choosing the energy}\label{sec:enchoices}

Based on discussion in \Cref{sec:practice}, we wish to construct an energy, which contains information about the domain (or dataset) and the energy levels are transverse to orbits. The first we achieve by using a variational autoencoder, and choosing the ELBO as the energy itself; and second by training an appropriate (convex) adversarial regularizer. Thus the overall energy $E = \mathcal{L} + \alpha R$, for some constant $\alpha > 0$ and terms below. We emphasize once again that this is simply \emph{a construction}, which have observed to work well in practice. Effectively all that is required is to align minimas of such an energy with the training set - doing this directly is not that simple and should be a fruitful direction for future work.
\paragraph{Variational Autoencoders}

Variational Autoencoders (VAEs) \citep{kingma2013auto} are a class of generative models that learn the underlying probability distribution of a dataset. VAEs consist of two main components:

\begin{enumerate}
\item \textbf{Encoder:} This component maps an input data point $x$ to a lower-dimensional latent representation $z$.  Instead of directly mapping to a single point in latent space, the encoder outputs parameters for a probability distribution, typically a Gaussian distribution,  $q(z|x) = \mathcal{N}(z; \mu(x), \sigma^2(x))$. Here, $\mu(x)$ and $\sigma^2(x)$ are the mean and variance of the latent distribution, respectively, and are outputs of the encoder network.

\item \textbf{Decoder:} This component takes a point in the latent space $z$ and maps it back to the original data space, reconstructing the input data $\hat{x}$. This is achieved through a probability distribution $p(x|z)$, also often modeled as a Gaussian.
\end{enumerate}

VAEs are trained by maximizing a lower bound on the data log-likelihood, known as the Evidence Lower Bound (ELBO):

\begin{equation}
\mathcal{L}(x) = \mathbb{E}_{q_\theta(z|x)} [\log p_\theta(x|z)] - D_{\mathrm{KL}}(q_\theta(z|x) || p(z))
\end{equation}

The first term in the ELBO encourages the decoder to reconstruct the input data accurately. The second term is the Kullback-Leibler (KL) divergence between the learned latent distribution $q(z|x)$ and a prior distribution $p(z)$, typically a standard normal distribution. This term acts as a regularizer, encouraging the latent space to be well-structured and preventing overfitting.

\paragraph{Adversarial Regularization}

Adversarial regularization \cite{lunz2018adversarial, mukherjee2024data} is a framework for solving inverse problems that incorporates a discriminator network to learn and impose data-driven regularization. The discriminator network is trained to distinguish between unregularized reconstructions and real solutions obtained from ground truth data. The regularizer is trained with the objective of favoring solutions that are similar to the ground-truth images in the training dataset and penalizing reconstructions with artifacts.

The adversarial loss function for the regularizer can be formulated as:

\begin{equation}
\mathcal{L}_{\mathrm{adv}}(R) = \mathbb{E}_{p_X(x)} [R(x)] - \mathbb{E}_{p_{n}(\hat{x})} [R(\hat{x})]
\end{equation}

where $R$ is the regularizer, $x$ is a ground truth image, $\hat{x}$ is a noisy measurement, $p_X$ and $p_n$ are the data distributions.  This loss encourages the regularizer to produce a small output when a true image is given as input and a large output when it is presented with an unregularized reconstruction.  

In our setting, the noisy measurement distribution is chosen to be $\hat{x} = g\cdot x$, for $x\sim p_X$ and $g\in G$ sampled randomly. Thus the regularizer learns to discern between true looking images and group transformed ones - leading to level sets lying transverse to the orbits.

Adversarial regularization has been shown to provides a flexible data-driven approach to impose priors in inverse problems, which lends it readily to be used in our problem.
\section{Further PDE discussion}
\subsection{Canonicalization as a unifying view}\label{sec:canon_pre}
The idea of putting the evolutionary equation and initial into a canonical form prior to evolving is not novel, and in fact quite standard in all numerical analysis textbooks. The first step in solving an ODE numerically is to normalise the equations and remove units and normalise the domain. For example, when solving a time evolution equation on the domain $x\in[0,15]$, one would first normalise the $x$ variable by rescaling it to be in $[0,1]$, and solve on this domain instead. Floating point representations are much coarser for smaller values and as such, this can lead to lower numerical errors. 

Another example is the recent work on achieving exact unit equivariance - which is precisely scaling group canonicalisation \citep{villar2023dimensionless}. Similar problem also often arises in the problems of image registration \citep{kanter2022flexible} with an infinite dimensional Lie algebra of diffeomorphisms. Another particular example being Procrustes analysis \citep{gower1975generalized}, when considering only the homography group.

In this work, we present that canonicalization can be seen as the unifying approach, especially through the lens of energy canonicalization. Insights from the problems above can then be used to improve the optimization, and used for PDE symmetry groups.
\subsection{On Deep Learning for PDEs}\label{sec:mlpde_ap}

In what follows we demonstrate the effectiveness of canonicalization across three example PDEs Heat, Burgers', Allen-Cahn with baseline models taken from three recent state of the art physics-informed machine learning papers \citep{akhoundsadegh2023lie, wang2021learning, herde2024poseidonefficientfoundationmodels} respectively. In each case we take the pre-trained neural PDE solver and demonstrate a failure mode when evaluating on out-of-distribution initial conditions. We then optimize for a group action derived from the Lie algebra based energy minimisation and report results after the canonicalization technique is applied. Results and additional experimental details for each PDE are provided in the following sections. In PINNs, the PDE solution $u(t, x)$ is parameterised as a neural network $u_\theta(t, x)$ with parameters $\theta$. The loss function is then compromised of two parts, with different weightings:
$$
\mathcal{L}(\theta)=\alpha_{\mathrm{PINN}}\mathcal{L}_{\text {PDE }}+\gamma_{\mathrm{data}}\mathcal{L}_{\text {data-fit }}
$$
The first term is the physics-informed objective of \citet{raissi2019physics}, ensuring the neural network satisfies the PDE. The PDE loss is the calculated as the residual, with derivatives are calculated using automatic differentiation, as calculated on a finite set of points $(t, {x})_{1: N_{\mathrm{dom}}}$ which are sampled from inside the domain $[0, T] \times \Omega$ to obtain the PDE loss:
\begin{equation}\label{eq:pinn_loss}
\mathcal{L}_{\mathrm{PDE}}=\frac{1}{N_{\mathrm{dom}}} \sum_{i=1}^{N_{\mathrm{dom}}}\left\| \partial_t u_\theta(t_i, {x_i})+\mathcal{D}_{{x}}\left[u_\theta(t_i, {x_i})\right]\right\|_2^2
\end{equation}
The second term, is a supervised loss enforcing initial and boundary conditions:
\begin{equation}\label{eq:pinn_dataloss}
\mathcal{L}_{\text {data-fit }}=\frac{1}{N_{\mathrm{ic}}} \sum_{i=1}^{N_{\mathrm{ic}}}\left\|u_\theta\left(0, {x}_i^0\right)-f\left(x_i^0\right)\right\|_2^2+\frac{1}{N_{\mathrm{bc}}} \sum_{i=1}^{N_{\mathrm{bc}}}\left\|u_\theta\left(t_i^b, {x}_i^b\right)-g\left(t_i^b, x_i^b\right)\right\|_2^2,
\end{equation}
with $\left({x}^0\right)_{1: N_{\mathrm{ic}}} \in \Omega$ sample at which the initial condition function $f$ is evaluated. $\left(t^b, {x}^b\right)_{1: N_{\mathrm{bc}}}$ are $N_{\mathrm{bc}}$ points sampled on the boundary $[0, T] \times \partial \Omega$. The two loss terms are then weighted by $\alpha_{\mathrm{PINN}}$ for PDE loss \ref{eq:pinn_loss} and $\gamma_{\mathrm{data}}$ for data-fit \ref{eq:pinn_dataloss} to be consistent with \citet{akhoundsadegh2023lie}.

\subsection{Heat equation}\label{sec:heat_ap}
This and following sections primarily recap the relevant facts regarding the symmetry groups of the PDEs considered, which act to motivate the need for new equivariant architectures due to the non-compactness of the resulting symmetry groups, and the difficulty of constructing global parametrisations of either the action or the group considered.
\subsubsection{Point symmetries}
Simplest PDE example is the heat equation, admitting a non-trivial group of Lie point symmetries, beyond space isometries.
\begin{equation}\label{eq:heat_ap}
    u_t-\nu u_{x x} = 0
\end{equation}
This equation admits the following 6-dimensional Lie algebra (note the typo in \citet{akhoundsadegh2023lie}), see \citet{olver1993applications,koval2023point}:
\begin{align}
& v_1 = \partial_x, \quad v_2=\partial_t, \quad v_3=\frac{1}{\nu} u \partial_u, \quad v_4 = 2 t \partial_t+x \partial_x-\frac{1}{2} u \partial_u, \\
& v_5=t \partial_x-\frac{1}{2\nu} x u \partial_u,  \quad v_6=t^2 \partial_t+t x \partial_x-\frac{1}{4\nu}\left(x^2+2\nu  t\right) u \partial_u.
\end{align}
The resulting one parameter flows are: 
\[
\begin{array}{llll}

\exp[v_1](\epsilon): & \tilde{t}=t, & \tilde{x}=x+\epsilon, & \tilde{u}=u, \\ 

\exp[v_2](\epsilon): & \tilde{t}=t+\epsilon, & \tilde{x}=x, & \tilde{u}=u, \\

\exp[v_3](\epsilon): & \tilde{t}=t, & \tilde{x}=x, & \tilde{u}=\mathrm{e}^{\epsilon/\nu} u, \\

\exp[v_4](\epsilon): & \tilde{t}=\mathrm{e}^{2 \epsilon} t, & \tilde{x}=\mathrm{e}^\epsilon x, & \tilde{u}=\mathrm{e}^{-\frac{1}{2} \epsilon} u, \\ 

\exp[v_5](\epsilon): & \tilde{t}=t, & \tilde{x}=x+\epsilon t, & \tilde{u}=\mathrm{e}^{-\frac{1}{4\nu}\left(\epsilon^2 t+2 \epsilon x\right)} u, \\ 

\exp[v_6](\epsilon): & \tilde{t}=\frac{t}{1-\epsilon t}, & \tilde{x}=\frac{x}{1-\epsilon t}, & \tilde{u}=\sqrt{|1-\epsilon t|} \mathrm{e}^{\frac{\epsilon x^2}{4\nu(\epsilon t-1)}} u, \\ 
\end{array}
\]

The only non-zero Lie brackets are: 
\begin{align} 
& {\left[v_4, v_2\right]=-2 v_2, \quad[v_4, v_6]=2 v_6, \quad\left[v_2, v_6\right]=v_4,} \\ & {\left[v_2, v_5\right]=v_1, \quad\left[v_4, v_5\right]=v_5, \quad\left[v_4, v_1\right]=-v_1, \quad\left[v_6, v_1\right]=-v_5,} \\ & {\left[v_5, v_1\right]=\frac{1}{2} v_3 .}\end{align}

In particular, these would be enough for LieLAC, however in this particular example it is possible to show more.
To be precise, as shown in \citet{koval2023point}, the global structure, as explained below, of the symmetry group (the finite simply connected component) is 
$G^H = \mathrm{SL}(2, \R) \ltimes_\phi \mathrm{H}(1, \R),$ 
where $\mathrm{H}(1, \R)$ is the rank one polarized Heisenberg group.
\begin{theorem}[\citet{koval2023point}  for $\color{red}{\nu}\neq1$]\label{thm:heat_group}
The point symmetry pseudogroup $G$ of the $(1+1)$-dimensional linear heat equation is constituted by the point transformations of the form
$$
\begin{aligned}
& \tilde{t}=\frac{\alpha t+\beta}{\gamma t+\delta}, \quad \tilde{x}=\frac{x+\lambda_1 t+\lambda_0}{\gamma t+\delta}, \\
& \tilde{u}=\sigma \sqrt{|\gamma t+\delta|} \exp \left(\frac{\gamma\left(x+\lambda_1 t+\lambda_0\right)^2}{4{\color{red}{\nu}}(\gamma t+\delta)}-\frac{\lambda_1}{2{\color{red}{\nu}}} x-\frac{\lambda_1^2}{4{\color{red}{\nu}}} t\right)(u+h(t, x)),
\end{aligned}
$$
where $\alpha, \beta, \gamma, \delta, \lambda_1, \lambda_0$ and $\sigma$ are arbitrary constants with $\alpha \delta-\beta \gamma=1$ and $\sigma \neq 0$, and $h$ is an arbitrary solution of the heat equation.
\end{theorem}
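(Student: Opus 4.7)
The plan is to derive the symmetry group from the infinitesimal symmetry criterion (Olver, Theorem 2.31), compute the full Lie algebra, and then integrate it into the global form stated. Let $v = \xi(x,t,u)\partial_x + \tau(x,t,u)\partial_t + \phi(x,t,u)\partial_u$ be a general vector field on $M \subset X\times U$, and apply the prolongation formula of Olver to obtain $\mathrm{pr}^{(2)} v$, so that the condition $\mathrm{pr}^{(2)} v[\Delta] = 0$ on the variety $\{u_t = \nu u_{xx}\}$ becomes a polynomial identity in the remaining jet coordinates $u, u_x, u_{xx}$ (after using $u_t = \nu u_{xx}$ to eliminate $u_t$). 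Separating coefficients of the independent monomials in these jet coordinates yields the determining system for $\xi, \tau, \phi$.

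I would next solve this determining system. Following the standard procedure, the equations obtained from collecting $u_x^2$, $u_x u_{xx}$, etc., force $\tau = \tau(t)$ and $\xi = \xi(t,x)$ with $\xi$ at most linear in $x$, while $\phi$ is affine in $u$, i.e.\ $\phi(x,t,u) = \alpha(x,t) u + \beta(x,t)$ for some $\beta$ that must itself satisfy the heat equation (this is the infinite-dimensional family arising from linearity). The remaining determining equations reduce to a small ODE/PDE system on $\tau$, the coefficients of $\xi$, and $\alpha$; its general solution is precisely the span of $v_1,\dots,v_6$ listed in the excerpt plus the linear-superposition symmetry $h(t,x)\partial_u$. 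This recovers the Lie algebra $\mf{g}^H$.

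With the Lie algebra in hand, I would pass to the connected component of the group by exponentiating and composing one-parameter subgroups. The key structural observation is that $v_2,v_4,v_6$ span a copy of $\mf{sl}(2,\R)$ (as is clear from the bracket relations displayed in the excerpt) and act on $(t,x)$ by the Möbius-type transformation $\tilde t = (\alpha t + \beta)/(\gamma t + \delta)$, $\tilde x = x/(\gamma t+\delta)$; the generators $v_1, v_5, v_3$ span the rank-one Heisenberg algebra $\mf{h}(1,\R)$ and contribute the Galilean shifts $\lambda_0, \lambda_1$ together with the scalar $\sigma$; and the commutators $[v_4,v_1]=-v_1$, $[v_4,v_5]=v_5$, $[v_6,v_1]=-v_5$, $[v_5,v_1]=\tfrac12 v_3$ encode the semidirect action $\phi$ of $\mathrm{SL}(2,\R)$ on $\mathrm{H}(1,\R)$. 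Composing the explicit one-parameter flows in the order suggested by this decomposition (first the Heisenberg piece, then the $\mathrm{SL}(2,\R)$ piece) and tracking how each factor transforms $u$ via the Gaussian weight and prefactor $\sqrt{|\gamma t+\delta|}$ reproduces the closed form in the statement; adding an arbitrary heat solution $h$ accounts for the linear-superposition part.

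Finally, to obtain the full \emph{pseudogroup} (and not merely the identity component), I would argue that any point transformation mapping solutions of \eqref{eq:heat_ap} to solutions must have its infinitesimal generator in $\mf g^H$, and then show that the remaining discrete connected components are absorbed into the stated formula by allowing $\alpha\delta - \beta\gamma = 1$ with arbitrary sign of $\gamma t + \delta$ and $\sigma\in\R\setminus\{0\}$ (the absolute value and $\mathrm{sgn}\,\sigma$ handling the reflections). The main obstacle is the determining-system solve: ensuring that all cross-coefficient equations coming from $\mathrm{pr}^{(2)} v[\Delta]=0$ are treated simultaneously, and that no spurious symmetries are introduced or missed when $\nu \neq 1$ (the excluded case $\nu=1$ in the cited reference reflects a scaling normalisation rather than an extra symmetry, so care is needed to verify that the parameter $\nu$ only appears through the weighting of $u$). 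All other steps are essentially bookkeeping via the Baker--Campbell--Hausdorff-style composition of the explicit flows $\exp[v_i](\epsilon)$ recorded above.
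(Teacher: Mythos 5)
This theorem is cited from \citet{koval2023point}; the paper does not contain a proof of it. The appendix section on the heat equation explicitly states that it ``primarily recaps the relevant facts regarding the symmetry groups of the PDEs considered,'' and the only material the authors add around the cited statement is the explicit identification of the group as a semidirect product $\mathrm{SL}(2,\R)\ltimes_\phi\mathrm{H}(1,\R)$, with the antihomomorphism $\phi$, the product rule, and the inverse formula written out in coordinates. So there is no ``paper's own proof'' to compare against, and your reconstruction should be measured against the cited reference.

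Your proposed route---Olver's infinitesimal criterion, solve the determining system, recognize $\langle v_2,v_4,v_6\rangle\cong\mf{sl}(2,\R)$ and $\langle v_1,v_5,v_3\rangle\cong\mf{h}(1,\R)$, then compose one-parameter flows---correctly recovers the \emph{Lie algebra} and the \emph{identity component} of the group, and your observation about the composition order (Heisenberg piece first, then $\mathrm{SL}(2,\R)$) matches the structure of the stated formula. The two genuine gaps are in passing from the identity component to the full \emph{pseudogroup}. First, ``any point transformation mapping solutions to solutions must have its infinitesimal generator in $\mf{g}^H$'' is not a statement one can make about a single discrete transformation: infinitesimal generators exist only for one-parameter families, so that argument proves nothing about discrete components. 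Second, asserting that the discrete part is ``absorbed'' by permitting $\sigma\in\R\setminus\{0\}$ and the absolute value under the square root is precisely the claim that needs proof---you would have to rule out additional discrete symmetries not of this form. This is exactly what \citet{koval2023point} establish with the \emph{algebraic method}: they use the fact that any finite point symmetry must push forward the known maximal Lie invariance algebra to itself, which converts the classification of the full pseudogroup into a tractable algebraic problem on the transformation components, rather than an exponentiation of the algebra. A third, smaller point: $\exp:\mf{sl}(2,\R)\to\mathrm{SL}(2,\R)$ is not surjective, so even the identity component is not covered by single exponentials; you correctly fall back on finite products of one-parameter flows (Olver's Proposition 1.24), but then the claim that this product lands exactly in the two-parameter M\"obius form in $t$ needs an actual calculation, not ``BCH-style bookkeeping.'' If you want a self-contained argument, either carry out the algebraic-method step explicitly, or derive the determining equations for \emph{finite} transformations directly (tractable here because the PDE is linear) and show their general solution is the stated family.
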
 

We now describe the product on $G$. First we note that there is a subgroup $R$ consisting of those elements for which $\lambda_1 = \lambda_0 = 0$ and $\sigma = 1$. This is evidently isomorphic to $\mathrm{SL}(2, \R)$. Therefore we shall group the $\alpha, \beta, \gamma, \delta$ terms into a matrix $A = \begin{pmatrix}\alpha & \beta \\ \gamma & \delta\end{pmatrix}$. Further we note that there are two connected components of this group, corresponding to sign of $\sigma$, and the group can be written as a direct product corresponding to this sign
\begin{equation}
    G = G' \times \mathbb{Z}_2
\end{equation}
Thus we may simply describe the product on $G'$. For this we may assume that the two elements we multiply have $\sigma > 0$. For clarity, we shall write out the product rule in terms of the logarithm of $\sigma$. The product can now be written as
\begin{align*}
    (A', \lambda_1', \lambda_0', \ln \sigma') \cdot (A, \lambda_1, \lambda_0, \ln \sigma) = (A'A, \lambda_1 + \alpha\lambda_1' + \gamma\lambda_0', \lambda_0 + \beta\lambda_1' + \delta\lambda_0', \\ \ln \sigma + \ln \sigma' + \frac{1}{4}(\lambda_0' \lambda_1' - (\alpha \lambda_1' + \gamma \lambda_0')(\beta \lambda_1' + \delta \lambda_0')) - \frac{\lambda_0}{2}(\alpha \lambda_1' + \gamma \lambda_0'))  
\end{align*}
We see that the subgroup $F$ consisting of those elements for which $\alpha = \delta = 1, \beta = \gamma = 0$ and $\sigma > 0$ is isomorphic to the rank one polarized Heisenberg group $\mathrm{H}(1, \R)$ via

\begin{equation}
    (\lambda_1, \lambda_0, \ln \sigma) \mapsto \begin{pmatrix}
        1 & - \frac{1}{2} \lambda_1 & \ln \sigma \\ 0 & 1 & \lambda_0 \\ 0 & 0 & 1
    \end{pmatrix} 
\end{equation}
where we use the standard embedding of $\mathrm{H}(1, \R)$ into the group of $3 \times 3$ real valued matrices. Note, that while the parameterization here looks the same as the one from \citet{koval2023point}, it is different (resulting in different $\phi)$, as in that paper, the Heisenberg group and the polarized Heisenberg group are conflated, making the product rule unclear. Here we fix a specific form of the Heisenberg group as a subgroup of the group of $3 \times 3$ real matrices, with standard matrix multiplication as the product rule. 
The product formula can then be described in terms of $F$ and $R$ as well. Let $\phi : \mathrm{SL}(2, \R) \rightarrow \text{Aut}(\mathrm{H}(1, \R))$ be the antihomomorphism defined by
\begin{equation}
    \phi(A) = 
    \begin{pmatrix}
           \lambda_1 \\
           \lambda_0 \\
           \ln \sigma 
    \end{pmatrix}
    \mapsto 
    \begin{pmatrix}
           \alpha \lambda_1 + \gamma \lambda_0 \\
           \beta \lambda_1 + \delta \lambda_0 \\
           \ln \sigma + \frac{1}{4}\lambda_0\lambda_1 - \frac{1}{4}(\alpha \lambda_1 + \gamma \lambda_0)(\beta \lambda_1 + \delta \lambda_0) 
    \end{pmatrix}
\end{equation}
where write $\mathrm{H}(1, \R)$ using the identification $F \cong \mathrm{H}(1, \R)$.
Then we can write
\begin{equation}
    G' = \mathrm{SL}(2, \R) \ltimes_\phi \mathrm{H}(1, \R)
\end{equation}
As the semidirect product with an antihomomorphism is rarely written out in literature, we write out explicitly what this entails. We have that the underlying set of $G'$ is $\mathrm{SL}(2, \R) \times \mathrm{H}(1, \R)$ and the product is
\begin{equation}
    (A', H') \cdot (A, H) = (A'A, \phi(A)(H')H)
\end{equation}
In order to decanonicalize, we also require access to the inverse group element. Particularly, for our implementation we can either find the action of the inverse of $\exp(e_6v_6)\dots\exp(e_1v_1)$ as $\exp(-e_1v_1)\dots\exp(-e_6v_6)$, but given access to a global parametrization of the group, we can write down an inverse of a given element in the form of $A,H$. This is  $\left(A^{-1},\phi(A^{-1})\left(H^{-1}\right)\right)$. 
I.e. for $\alpha,\beta,\gamma,\delta,\lambda_0,\lambda_1,\sigma$, the inverse is 
\[
A^{-1} = \begin{pmatrix}\delta & -\beta \\ -\gamma & \alpha\end{pmatrix} \quad H^{-1} = 
    \begin{pmatrix}
        1 & \frac{1}{2} \lambda_1 & -\ln \sigma - \frac{1}{2} (\lambda_1 \lambda_0) \\ 0 & 1 & -\lambda_0 \\ 0 & 0 & 1
    \end{pmatrix}
\]
Writing this out in coordinates this becomes
$$\delta,-\beta,-\gamma,\alpha,\beta\lambda_1-\alpha\lambda_0,-\delta\lambda_1 + \gamma\lambda_0,-\ln\sigma- \frac{1}{4}\lambda_0\lambda_1 - \frac{1}{4}(-\delta \lambda_1 + \gamma \lambda_0)(\beta \lambda_1 - \alpha \lambda_0)$$

\subsubsection{Numerical results}\label{sec:heat_numerics}
For the numerical experiments, we consider solving  \Cref{eq:heat_ap} with diffusivity $\nu=0.1$ on the spatial domain $\Omega = [0,2\pi]$, with periodic boundary conditions for $t\in[0,16]$. 
We train a physics informed DeepONet \citep{lu2019deeponet, wang2021learning} with initial conditions of the form following \citet{akhoundsadegh2023lie, brandstetter2022lie} 
\begin{equation}\label{eqn:heat_ics}
    u_0(x) = \sum_{i=1}^K A_k \sin \left(2 \pi l_k x / L+\phi_k\right).
\end{equation}

To focus on the effects of data augmentation and canonicalization, we train the model under two regimes. In the first regime (which we refer to as in-distribution), the amplitude parameter is fixed to $A_k=1$, with the other parameters as $K=1$, $l_k=2$, $\phi_k=0$, analogous to PINN training. In the second regime, the amplitude parameter is sampled from $A\sim\mathcal{U}[0.5, 5.0]$, representing a broader operator training distribution. 

The model is then trained using the physics loss with sampling weights of $\alpha_{\mathrm{PINN}}=150$, $\gamma_{\mathrm{data}}=20$, we use a learning rate of 0.001 and batch size of 8 and train for 100k epochs following \citet{akhoundsadegh2023lie}. Training data consists of $N_{f,\mathrm{train}}$ initial condition functions, evaluated at $N_{\mathrm{ic}}=200$ points at $t=0$ to evaluate $u_0$, $N_{\mathrm{bc}}=100$ points on the boundaries $x=0$ or $x=2\pi$ to enforce the periodic boundary conditions and finally $N_{\mathrm{dom}}=500$ points in the interior of the domain where the residual physics loss is enforced. The official implementation of DeepONet from the code and examples provided by the library DeepXDE \citep{lu2021deepxde} was used.

We report results for both training regimes, averaged over 10 random seeds, in \Cref{tab:combined}. The first rows in both tables show the accuracy of the standard DeepONet. In both cases, the first column shows a slight improvement with LieLAC canonicalizing to the average of the tight distribution. In \Cref{tab:combined} the second column shows DeepONet's failure to generalize when tested on $Nf_{test}=10$ test functions sampled $A\sim\mathcal{U}[0.5, 5.0]$, which is outside of the training range. Applying LieLAC, following the canonicalization pipeline shown in \Cref{fig:Heat_can}, restores test accuracy to in-distribution levels. In \Cref{tab:combined}, extending the training range to $A\sim\mathcal{U}[0.5, 5.0]$ improves generalization by exposing the model to a broader range of amplitudes. However, even in this setting, LieLAC outperforms data augmentation by leveraging the canonicalizing group action.

\paragraph{Choice of Energy}
Given the initial conditions in \Cref{eqn:heat_ics}, we see that the distribution of initial conditions can be described by the bounding box of the jet. 
The energy is therefore chosen to be the distance to the trained domain. Letting $\operatorname{jet}_0 = (u_0,x_0,t_0)$ and $\operatorname{jet}_\Omega = (1,\Omega, 0)$, the energy is:
\begin{align}\label{eq:heat_energy_ap}
E_{\textrm{Heat}}(x_0, t_0, u_0, x_f, t_f) = \operatorname{dist}\left[\operatorname{max}(\operatorname{jet}_0),\operatorname{max}(\operatorname{jet}_\Omega)\right] &+ \operatorname{dist}\left[\operatorname{min}(\operatorname{jet}_0),\operatorname{min}(\operatorname{jet}_\Omega)\right] \\
&\quad+ \operatorname{dist}\left[x_f,\Omega\right] + \operatorname{dist}\left[t_f,16\right]\nonumber
\end{align}

\subsubsection{Further heat illustrations}\label{sec:heat_ap_figs}

\Cref{fig:equivariance_and_different_equiv} shows canonicalicalisation of the initial conditions, out-of-distribution and canonicalized DeepONet predictions.

\begin{figure}[h]
\centering
   \includegraphics[width=.45\textwidth]{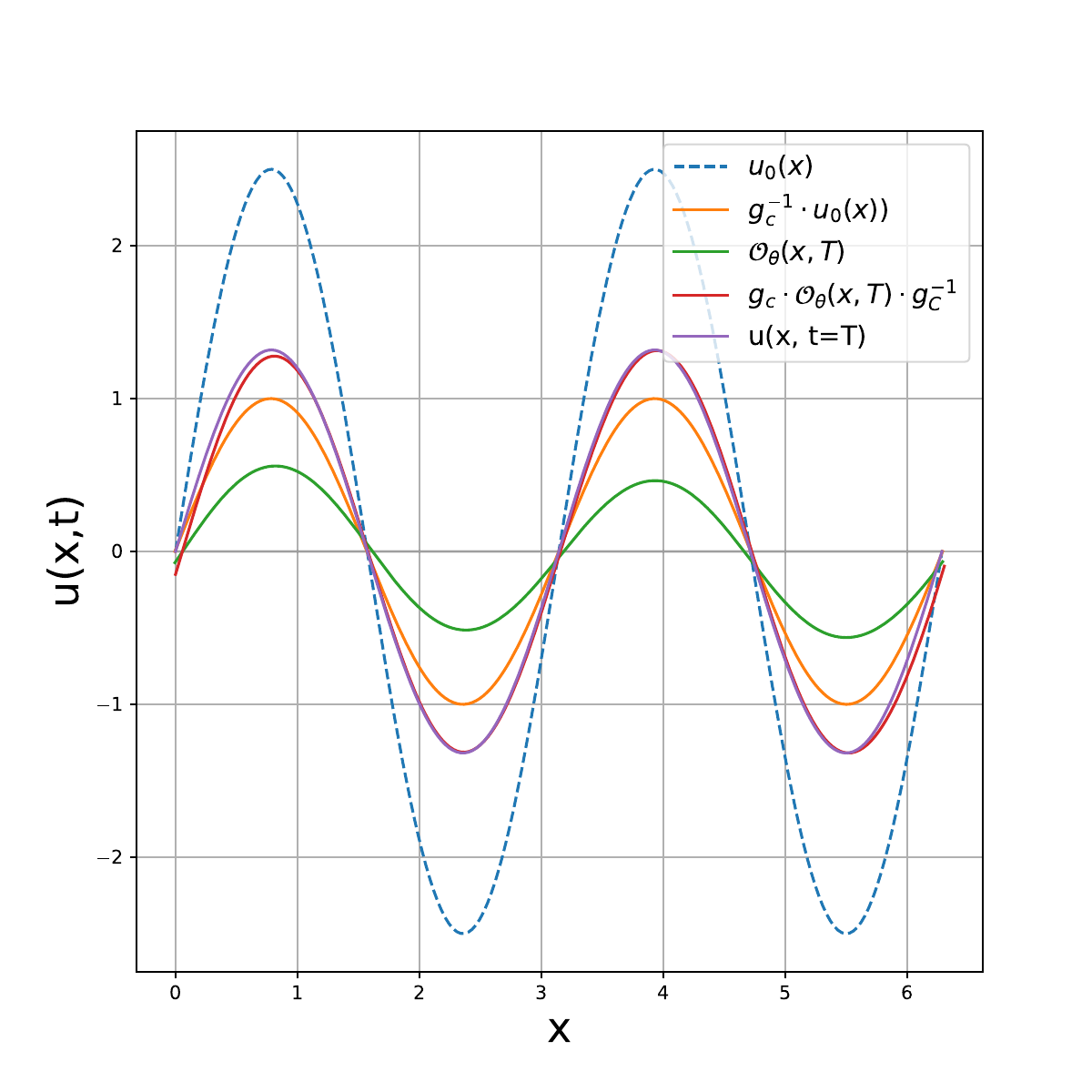}
   \hspace{-0.3cm} \includegraphics[width=.45\textwidth]{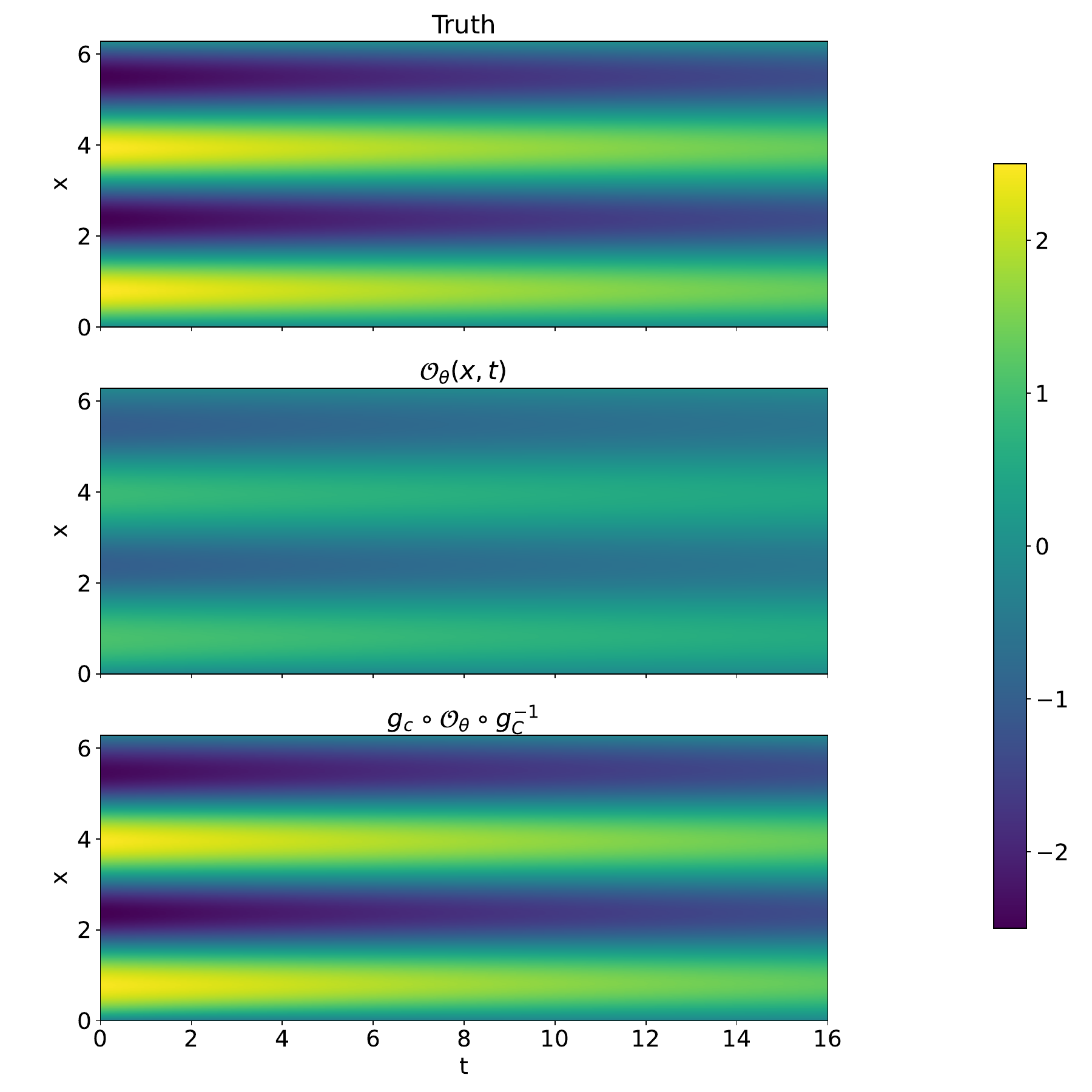}
   \caption{Canonicalization for the Heat equation given out of distribution $u_0(x)$ and improved equivariant prediction}
   \label{fig:equivariance_and_different_equiv}
\end{figure}

\subsection{Burgers' equation}\label{sec:burgers_ap}
\subsubsection{Point symmetries}
Burgers' Equation combines diffusion and non-linear advection, which can be related to the linear heat equation above via the Cole–Hopf transformation. The nonlinearity of the equation results in more complex dynamics, e.g. via shock formation, while still admitting non-trivial  Lie point symmetries.
\begin{equation}\label{eq:burgers_ap}
u_t+u u_x-\nu u_{x x} = 0    
\end{equation}

This equation admits the following 5-dimensional Lie algebra (note the typo in \citet{akhoundsadegh2023lie}), see \citet{freire2010note,koval2023point}:
\begin{align} & v_1=\partial_x, \quad v_3=2 t \partial_t+x \partial_x-u \partial_u, \quad v_5=t^2 \partial_t+t x \partial_x+(x-t u) \partial_u, \\ \nonumber & v_2=\partial_t, \quad v_4=t \partial_x+\partial_u .
\end{align}
The resulting one parameter flows, for $\epsilon$ arbitrary constant, are: 
\[
\begin{array}{llll}

\exp[v_1](\epsilon): & \tilde{t}=t, & \tilde{x}=x+\epsilon, & \tilde{u}=u, \\ 

\exp[v_2](\epsilon): & \tilde{t}=t+\epsilon, & \tilde{x}=x, & \tilde{u}=u, \\

\exp[v_3](\epsilon): & \tilde{t}=\mathrm{e}^{2 \epsilon} t, & \tilde{x}=\mathrm{e}^\epsilon x, & \tilde{u}=\mathrm{e}^{-\epsilon} u, \\ 

\exp[v_4](\epsilon): & \tilde{t}=t, & \tilde{x}=x+\epsilon t, & \tilde{u}= u + \epsilon, \\ 

\exp[v_5](\epsilon): & \tilde{t}=\frac{t}{1-\epsilon t}, & \tilde{x}=\frac{x}{1-\epsilon t}, & \tilde{u}= u(1-\epsilon t) +\epsilon x, \\ 
\end{array}
\]
The non-zero brackets are: 
\begin{align}
& {\left[v_3, v_2\right]=-2 v_2, \quad[v_3, v_5]=2 v_5, \quad\left[v_2, v_5\right]=v_3,} \\
& {\left[v_2, v_4\right]=v_1, \quad\left[v_3, v_4\right]=v_4, \quad\left[v_3, v_1\right]=-v_1, \quad\left[v_5, v_1\right]=-v_4}
\end{align}
Once again, this would be enough for the LieLAC formulation, however in this particular example, we can say more: 
\begin{theorem}[\citet{koval2023point}\label{thm:burgers_group}  for ${\color{red}\nu}\neq1$]
The point symmetry group $G^{\mathrm{B}}$ of the Burgers' equation consists of the point transformations of the form
$$
\tilde{t}=\frac{\alpha t+\beta}{\gamma t+\delta}, \quad \tilde{x}=\frac{x+\lambda_1 t+\lambda_0}{\gamma t+\delta}, \quad \tilde{u}=(\gamma t+\delta) u-\gamma x+\lambda_1 \delta-\lambda_0 \gamma,
$$
where $\alpha, \beta, \gamma, \delta, \lambda_1$ and $\lambda_0$ are arbitrary constants with $\alpha \delta-\beta \gamma=1$.
\end{theorem}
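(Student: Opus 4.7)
The plan is to establish the two containments separately. For the forward direction (``every listed transformation lies in $G^{\mathrm{B}}$''), I would identify the displayed family with the subgroup generated by the five one-parameter flows $\exp[\epsilon v_i]$ already recorded in \Cref{sec:burgers_ap} and verify the symmetry property on those generators. For the reverse direction (``no other point symmetries exist''), I would compute the Lie algebra of infinitesimal symmetries via Olver's criterion (Theorem~2.31 of \Cref{sec:lps_ap}) and integrate to recover the connected component of the identity.

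\paragraph{Forward direction.} First I would check that the displayed parameterisation is closed under composition: composing $(A', \lambda_1', \lambda_0')$ with $(A, \lambda_1, \lambda_0)$, where $A, A' \in \mathrm{SL}(2,\R)$, produces another transformation of the same form, with $\mathrm{SL}(2,\R)$-part equal to the matrix product $A'A$ and translation part twisted by $\breve{\varphi}$, so the total family is isomorphic to $\mathrm{SL}(2,\R) \ltimes_{\breve{\varphi}} \R^2$. Because this identification is a group homomorphism, invariance of $\Delta := u_t + u u_x - \nu u_{xx}$ on solutions need only be verified on a generating set of the Lie group. Taking the three standard one-parameter $\mathrm{SL}(2,\R)$-subgroups (upper triangular, lower triangular, and diagonal with determinant $1$), together with the two translations in $(\lambda_0, \lambda_1)$, recovers exactly $\exp[\epsilon v_2], \exp[\epsilon v_5], \exp[\epsilon v_3], \exp[\epsilon v_1], \exp[\epsilon v_4]$, and each preserves $\Delta$ by a short chain-rule calculation.

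\paragraph{Reverse direction.} For a general point vector field $\mathrm{v} = \xi^t \partial_t + \xi^x \partial_x + \eta \partial_u$, I would expand $\mathrm{pr}^{(2)} \mathrm{v}$ using the standard prolongation formula and impose $\mathrm{pr}^{(2)} \mathrm{v}[\Delta] = 0$ on the variety $\{u_t = \nu u_{xx} - u u_x\}$. Separating the resulting expression by powers of the independent jet variables $u, u_x, u_{xx}$ yields an overdetermined linear PDE system for $(\xi^t, \xi^x, \eta)$; integrating the resulting cascade of linear ODEs produces a five-parameter family spanned by $v_1, \ldots, v_5$. The dimension count then forces the full symmetry algebra to equal this five-dimensional algebra. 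Integrating via the solvable decomposition (abelian radical $\langle v_1, v_4 \rangle$ with $\mathfrak{sl}_2$ quotient $\langle v_2, v_3, v_5 \rangle$) produces the connected component of the identity as an ordered product of exponentials along the basis, which evaluates to precisely the displayed parameterisation. Since $\mathrm{SL}(2,\R)$ and $\R^2$ are each connected, the semidirect product is connected and so $G^{\mathrm{B}}_0$ already exhausts $G^{\mathrm{B}}$.

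\paragraph{Main obstacle.} The genuinely delicate step is the PDE integration in the reverse direction: one must integrate the overdetermined system without missing accidental solution branches that would enlarge the algebra, and it is here that the hypothesis $\nu \neq 1$ enters, ruling out distinguished rescalings. Two secondary subtleties also require care: one must confirm that the exponential map from the solvable algebra surjects onto $G^{\mathrm{B}}_0$ (non-automatic for non-compact groups, but true here because the radical is abelian and the $\mathrm{SL}(2,\R)$ quotient is generated by its one-parameter subgroups), and one must verify that no disconnected discrete symmetries are silently dropped. The latter follows by noting that any such discrete symmetry induces an outer automorphism of the symmetry algebra preserving the structure of $\Delta$, and every such automorphism is already realised inside the $\mathrm{SL}(2,\R)$ factor (for instance, the involution $(t, x, u) \mapsto (t, -x, -u)$ corresponds to $\alpha = \delta = -1$, $\beta = \gamma = 0$).
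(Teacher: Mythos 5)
The paper does not actually prove this theorem: it cites it from \citet{koval2023point} and only lists the infinitesimal generators, flows, and brackets in \Cref{sec:burgers_ap}. So you are not deviating from a proof that exists in the paper; you are supplying one that the paper delegates to the reference. The broad structure you propose (verify symmetry on generators and closure under composition for the forward inclusion; Olver's infinitesimal criterion plus integration for the reverse) is the standard and correct template.

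That said, a few steps as written would not go through. First, the Lie algebra spanned by $v_1,\dots,v_5$ with the brackets recorded in the appendix is \emph{not} solvable: $\langle v_2, v_3, v_5\rangle$ is an $\mathfrak{sl}_2$-triple, so the algebra is a Levi extension $\mathfrak{sl}_2(\R)\ltimes\R^2$, not a solvable algebra with a chain of derived ideals. (The main text of the paper repeats this slip, so you may have inherited it.) This matters because the ``train of exponentials'' device the paper cites from Varadarajan for solvable algebras is not available, and your claim that ``integrating via the solvable decomposition\dots produces the connected component as an ordered product of exponentials along the basis'' is precisely where it fails: the map $(a,b,c)\mapsto\exp(a v_2)\exp(b v_3)\exp(c v_5)$ is a Gauss decomposition of $\mathrm{SL}(2,\R)$ and misses an entire Bruhat cell. $\mathrm{SL}(2,\R)$ is indeed generated by its one-parameter subgroups, but elements generally require an unbounded number of factors, so a fixed-order product over a basis does not reproduce the global $(\alpha,\beta,\gamma,\delta)$ parameterisation in the statement. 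One instead has to verify the displayed composition law and the symmetry property directly (or use the matrix-entry parameterisation of $\mathrm{SL}(2,\R)$ from the start, as Koval--Popovych do). Second, the role of $\nu\neq 1$ is misread: for Burgers' the group action in the theorem contains no $\nu$ at all (unlike the heat equation case), because $\nu$ can be absorbed by the scaling symmetry $v_3$; the annotation in the theorem header marks the passage from the $\nu=1$ statement in the cited reference to arbitrary positive $\nu$, not an obstruction that kills extra symmetries. Third, the closing argument that $G^{\mathrm{B}}_0$ already exhausts $G^{\mathrm{B}}$ needs a genuine completeness argument; the example involution you exhibit already lies in the connected component (it equals $-I\in\mathrm{SL}(2,\R)$), so it does not address the possibility of truly disconnected discrete symmetries, and one really does have to run the classification of automorphisms/discrete symmetries as in the cited reference rather than wave at it.
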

And similar to before, the global structure of the group can be inferred, with the group $G^{\mathrm{B}}$ isomorphic to the group $\mathrm{SL}(2, \mathbb{R}) \ltimes_{\breve{\varphi}}\left(\mathbb{R}^2,+\right)$, where $\breve{\varphi}: \mathrm{SL}(2, \mathbb{R}) \rightarrow \operatorname{Aut}\left(\left(\mathbb{R}^2,+\right)\right)$ is the group antihomomorphism defined by $\breve{\varphi}(\alpha, \beta, \gamma, \delta)=\left(\lambda_1, \lambda_0\right) \mapsto\left(\lambda_1, \lambda_0\right) \varrho_1(\alpha, \beta, \gamma, \delta)$. Thus, the group $G^{\mathrm{B}}$ is connected.

$$
\varrho_1=(\alpha, \beta, \gamma, \delta)_{\alpha \delta-\beta \gamma=1} \mapsto\left(\begin{array}{cc}
\alpha & \beta \\
\gamma & \delta
\end{array}\right), \quad\left(\lambda_1, \lambda_0\right) \mapsto\left(\lambda_1, \lambda_0\right) .
$$

The standard conjugacy action of the subgroup $\breve{F}<G^{\mathrm{B}}$ on the normal subgroup $\breve{R} \triangleleft G^{\mathrm{B}}$ is given by $(\tilde{\lambda}_1, \tilde{\lambda}_0)=\left(\lambda_1, \lambda_0\right) \varrho_1(\alpha, \beta, \gamma, \delta)$. The inverse can be found similar to \Cref{sec:heat_ap}.

\subsubsection{Numerical results}\label{sec:burgers_numerics}
For the numerical experiments, we consider solving \Cref{eq:burgers_ap} on the domain $x\in\Omega = [0,1]$, with periodic boundary conditions on $t\in[0,1]$. An example of a canonicalized solution is shown in  \Cref{fig:can_illustration}, with further illustrations in  \Cref{fig:equivariance_and_different_equiv_burgers,fig:equivariance_and_different_equiv_burgers2}. 

For Burgers' equation we take the code and pretrained Physics Informed DeepONet (PI-DeepONet) model weights provided by \cite{wang2021learning}. This includes a model trained on $N_{f,\mathrm{train}}=1000$ initial conditions sampled from a Gaussian random field (GRF) parameterised as $\mathcal{N}\left(0,25^2(-\Delta+5^2 I)^{-4}\right)\label{eq:burgers_ic}$ on the periodic domain $\Omega=[0,L]\times[0,T]=[0,1]\times[0,1]$ discretized by a meshgrid with $101\times 101$ points.

In order to sample out-of-distribution initial conditions from the GRF we apply a uniform 0.2 shift to $u_0$ away from the mean of 0. Figure \ref{fig:equivariance_and_different_equiv_burgers2} shows surprisingly even this small perturbation is enough to break the pre-trained PI DeepONet. Interestingly in both in and out of distribution cases LieLAC learns a group action that is able to redistribute around zero mean, leading to a more accurate prediction.

\paragraph{Choice of Energy}
Given the slightly different form of initial conditions to the heat equation, now sampled as GRFs we adjust the energy to focus on the bounding box of the domain and mean of $u_0$. The energy is therefore chosen to be the distance to the training domain. Letting $\operatorname{trunk}_0 = (x_0,t_0)$ and $\operatorname{trunk}_\Omega = (\Omega, 0)$, the energy is:
\hspace{4mm}
\begin{align}\label{eq:burg_energy_ap}
E_{\text{Burgers}}(x_0, t_0, u_0, x_f, t_f) &= \operatorname{dist}\left[\operatorname{max}(x_0)-\operatorname{min}(x_0),1)\right] \\
&\quad + \operatorname{dist}\left[t_0,\Omega\right] + \operatorname{dist}\left[\operatorname{mean}[u_0], 0\right] \nonumber\\
&\quad+ \operatorname{dist}\left[x_f,\Omega\right] + \operatorname{dist}\left[t_f,\Omega\right]\nonumber
\end{align}

\subsubsection{Further Burgers' illustrations}\label{sec:burg_ap_figs}
\begin{figure}[h]

\Cref{fig:equivariance_and_different_equiv_burgers} shows canonicalicalisation of the initial conditions, out-of-distribution and canonicalized DeepONet predictions.

\centering
   \includegraphics[width=.45\textwidth]{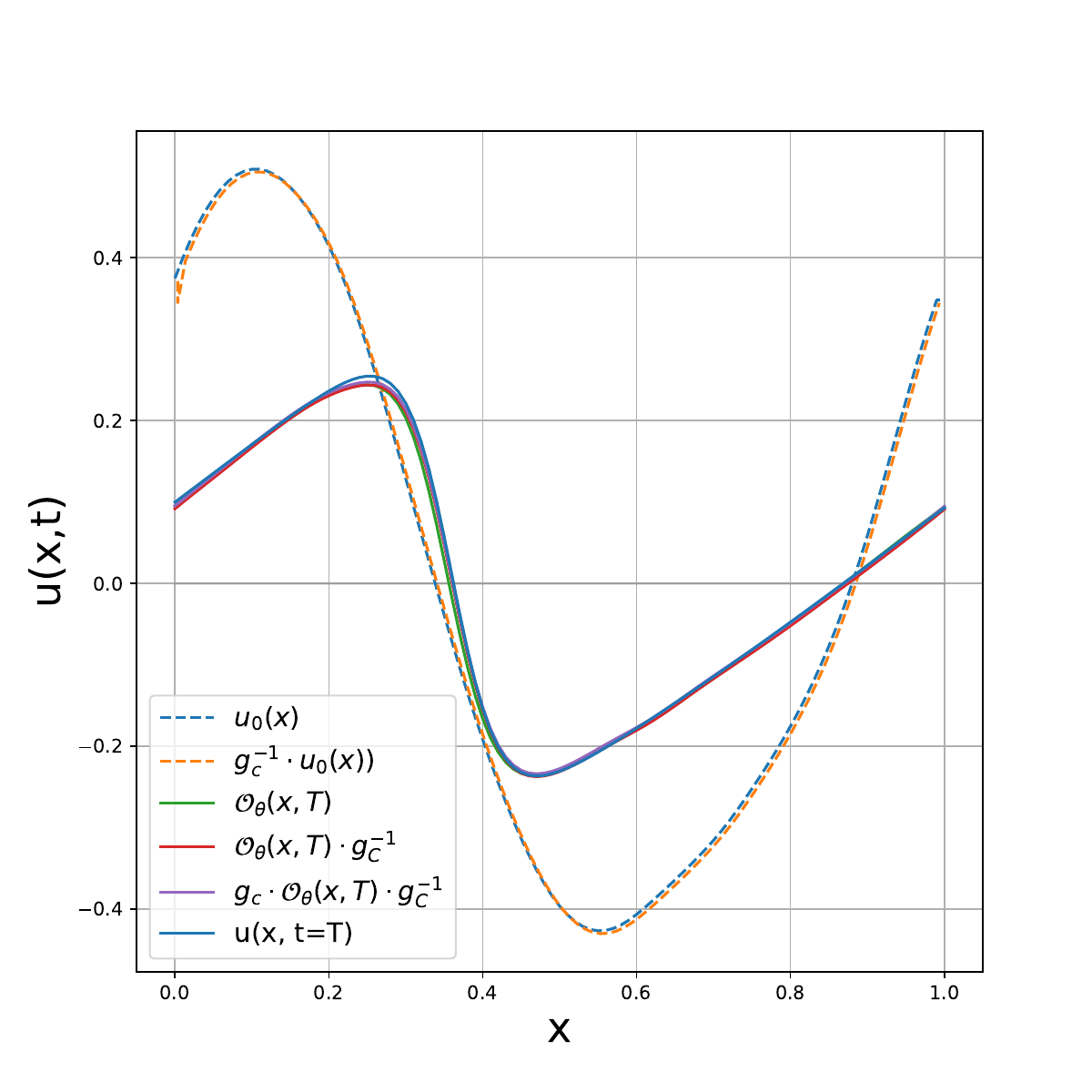}
   \hspace{-0.3cm} \includegraphics[width=.45\textwidth]{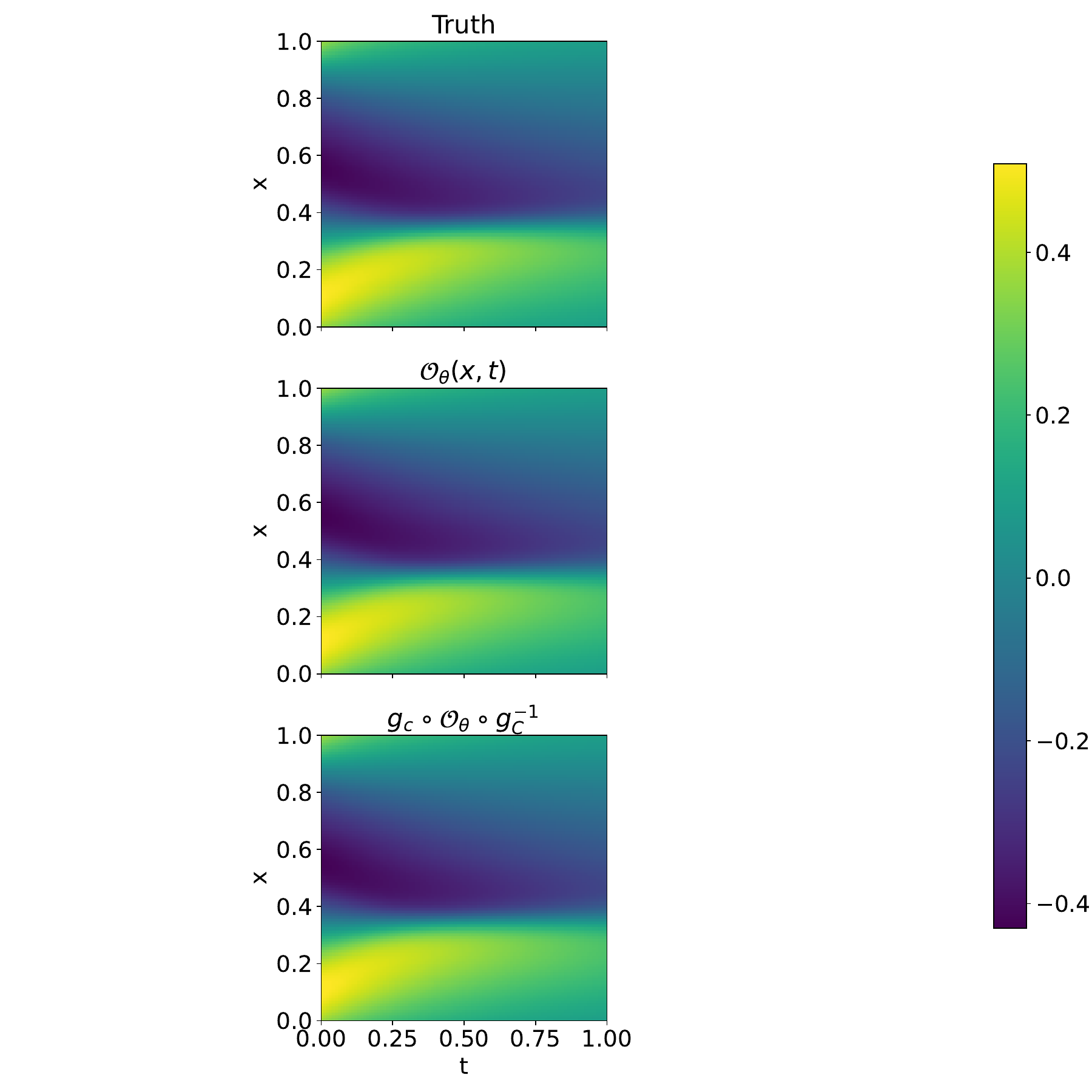}
   \caption{Illustration of Burgers' equation for in distribution GRF $u_0(x)$}
   \label{fig:equivariance_and_different_equiv_burgers}
\end{figure}

\begin{figure}[h]
\Cref{fig:equivariance_and_different_equiv_burgers2} shows canonicalicalisation of the initial conditions, out-of-distribution and canonicalized DeepONet predictions.

\centering
   \includegraphics[width=.45\textwidth]{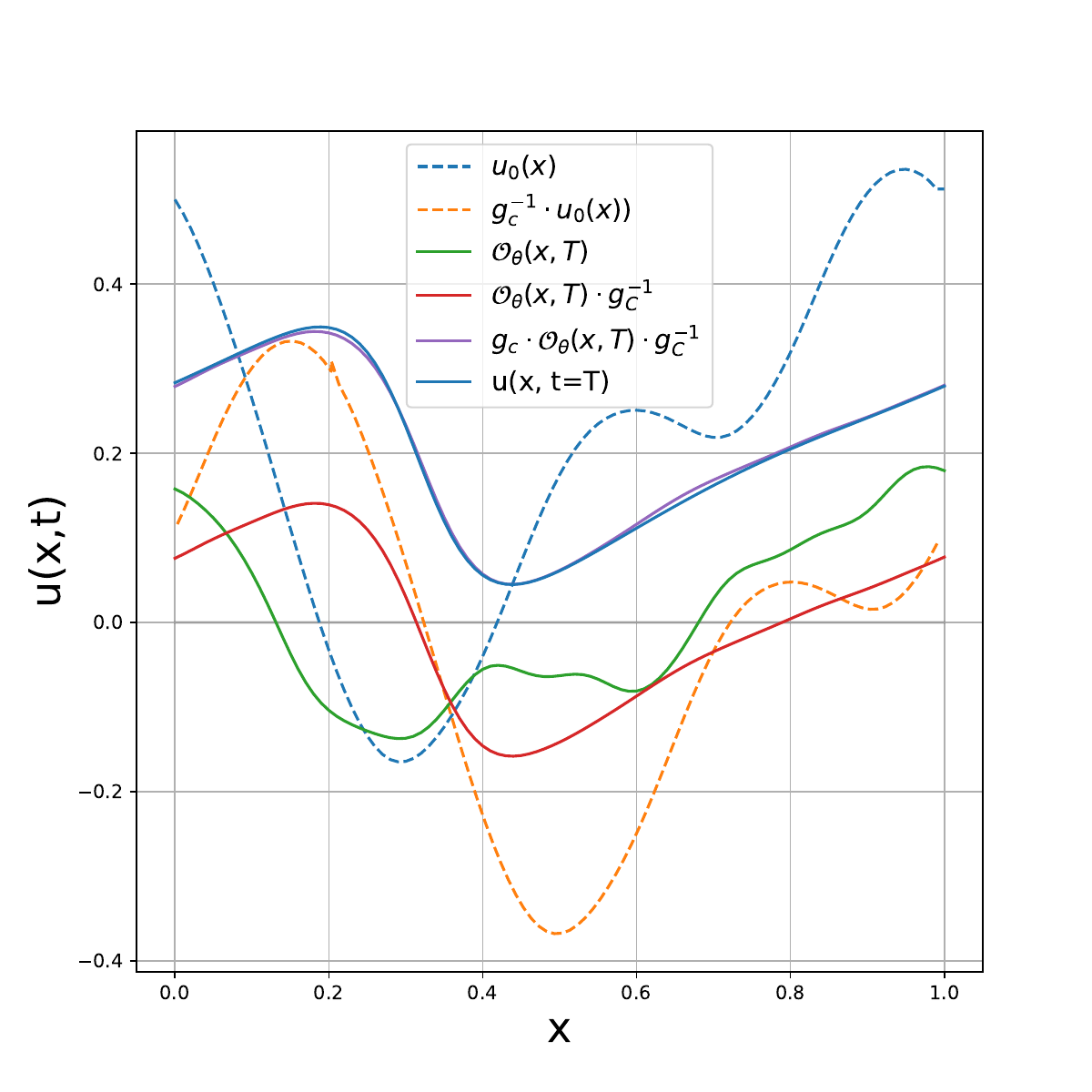}
   \hspace{-0.3cm} \includegraphics[width=.45\textwidth]{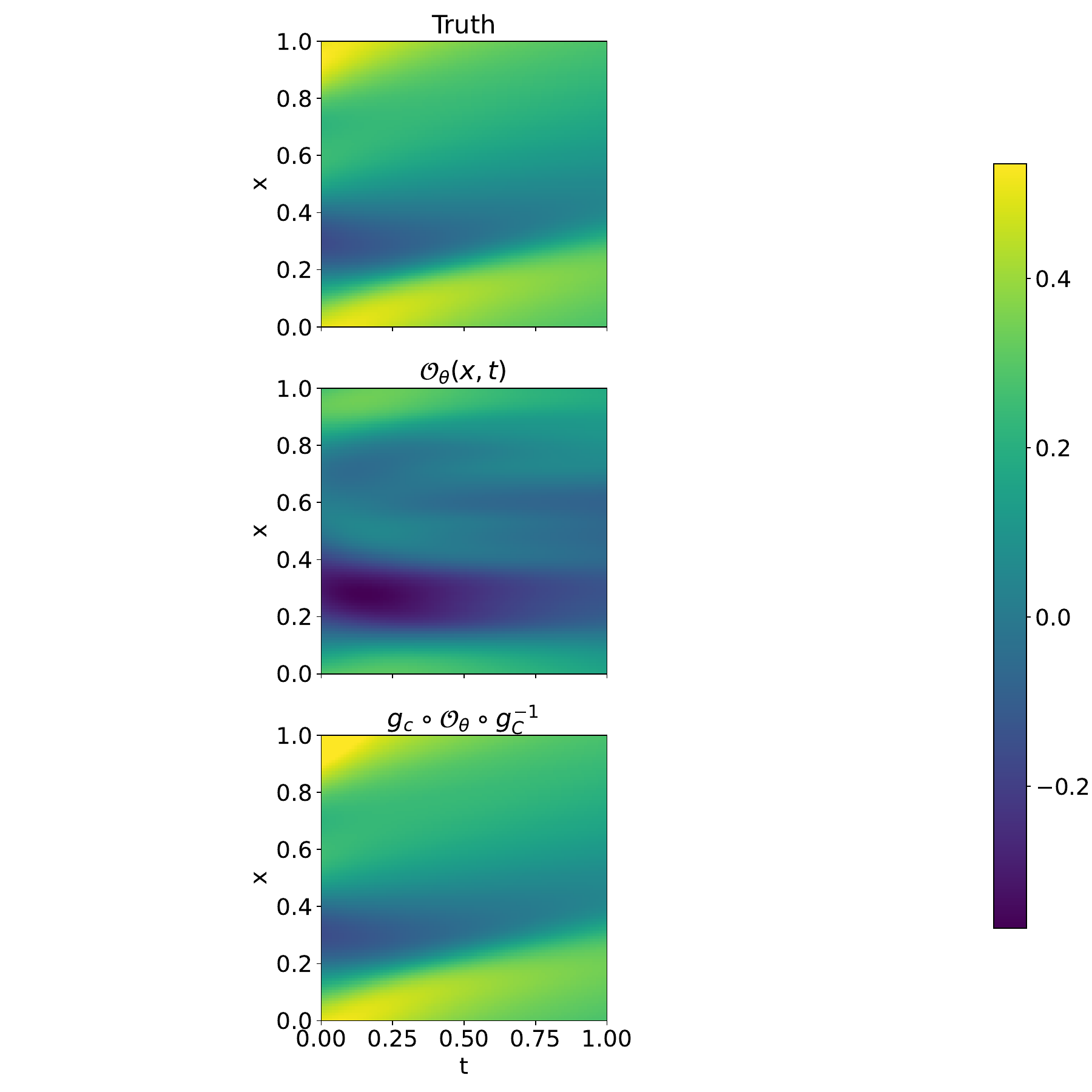}
   \caption{Canonicalization for Burgers' equation given out of distribution $u_0(x)$ and improved equivariant prediction}
   \label{fig:equivariance_and_different_equiv_burgers2}
\end{figure}

\subsection{Allen-Cahn equation}\label{sec:ace_ap}
The Lie point symmetries of \Cref{eq:ACE} can be found either by hand or using the computational software of \citet{Baumann1998MathLieAP}. Both yield a 4-dimensional Lie algebra with basis vectors:
\begin{align}\label{eq:ACE_syms}
& v_1=\partial_t, \quad v_2 =\partial_x,\quad v_3 =\partial_y, \quad v_4=-y\partial_x + x\partial_y
\end{align}
Noting that there is no coupling between the space, time and the dependent components, we can identify this group as the connected component of the group of isometries of the underlying space. It is worth noting here that the full symmetry group can be observed to be $\mathrm{E}(2)$, while the group generated by vectors in \ref{eq:ACE_syms} is $\mathrm{SE}(2)$, i.e. the identity connected component of $\mathrm{E}(2)$.
\paragraph{Training data}
The \textsc{Poseidon} \citep{herde2024poseidonefficientfoundationmodels} model is trained on random initial conditions\footnote{Note here that this is slightly different from the original paper. The form below was chosen, as the generated initial conditions provided by \citep{herde2024poseidonefficientfoundationmodels} either due to pre or post-processing turn out to not be periodic.} in the form of:
\begin{equation}\label{eq:ACE_IC}
    u_0(x, y)=\sum_{i, j=1}^K a_{i j} \cdot\left(i^2+j^2\right)^{-r} \sin (\pi i x) \sin (\pi j y), \quad \forall x, y \in(0,1),
\end{equation}
where $K$ is an integer drawn uniformly at random from $[16,32]$, and $r \sim U_{[0.7,1.0]}$ and $a_{i j} \sim U_{[-1,1]}$. Based on this, one obvious limitation of the training data is that all sampled initial conditions are zero on the boundary. However, due to the semi-group property based training of \citet{herde2024poseidonefficientfoundationmodels}, as trajectories are sampled at different points, the values on the boundary are not guaranteed to remain zero over time. As a result, we expect the model to not be equivariant under the group of symmetries of ACE, as illustrated on \Cref{fig:ace_eg}, but we do not have an obvious way to construct the energy. 
Based on the positive results in \Cref{sec:MNIST}, we employ the same approach, by training a variational autoencoder and an adversarial regulariser. For the experiments, we considered the \textsc{Tiny Poseidon} model from \citet{herde2024poseidonefficientfoundationmodels}.
\paragraph{Testing data} To test the equivariance, we use a dataset of modified initial conditions, achieved by transformations of \Cref{eq:ACE_IC}.
\begin{equation}\label{eq:ACE_IC_OOD}
    u_0(x, y)=\sum_{i, j=1}^K a_{i j} \cdot\left(i^2+j^2\right)^{-r} \sin (\pi i \{x-x_0\}) \sin (\pi j \{y-y_0\}), \quad \forall x, y \in(0,1),    
\end{equation}
where $\{\cdot\}$ denotes the sawtooth function, with parameters as in \Cref{eq:ACE_IC}, with $x_0,y_0 \sim U_{[0,1]}$.
\subsubsection{Further ACE illustrations}\label{sec:ace_ap_figs}
\begin{figure}
    \centering
    \includegraphics[width=\textwidth]{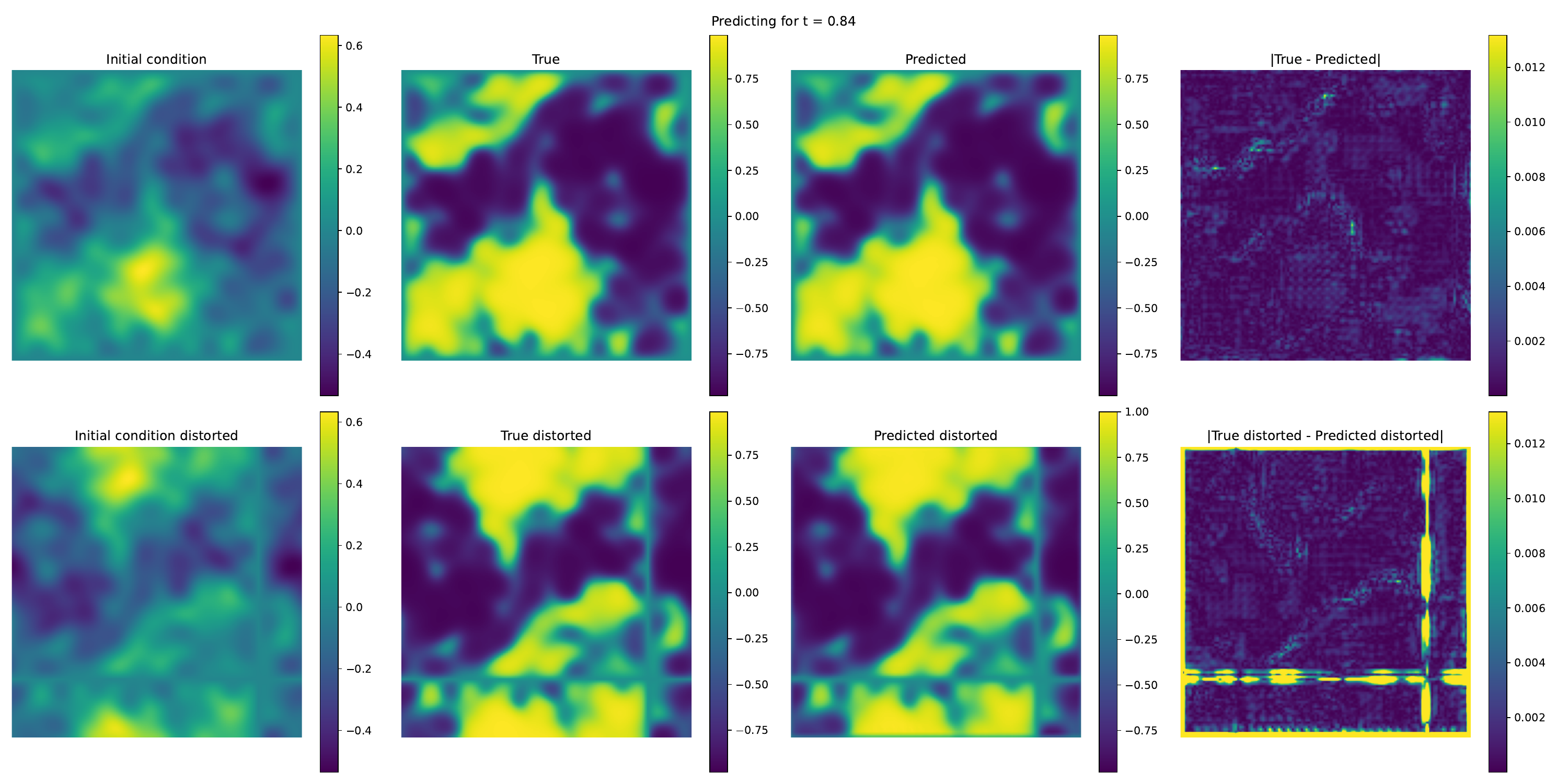}
    \caption{Example of failure of equivariance of the original \textsc{Poseidon} model.}
    \label{fig:ace_eg}
\end{figure}
\begin{figure}[htp]
    \centering
    \includegraphics[width=\linewidth]{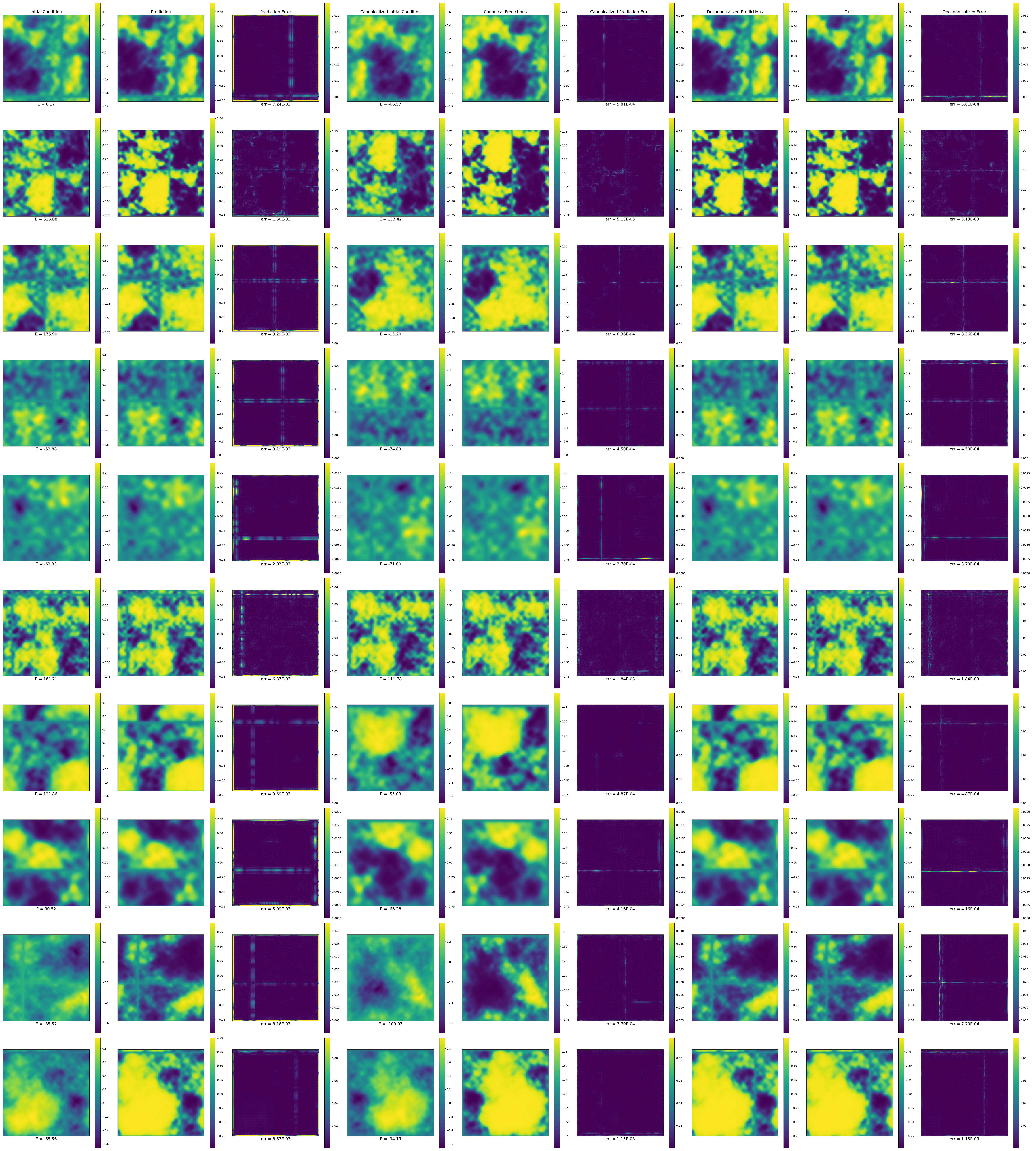}
    \caption{Further canonicalization illustration for the ACE.}
    \label{fig:ace_further_eg}
\end{figure}

\newpage
\section{Measure Theory Background}\label{sec:measuretheory}
Let $X$ be a set. Then a $\sigma$-algebra $\Sigma$ is a subset of $2^X$ such that
\begin{itemize}
    \item $X \in \Sigma$
    \item If $A \in \Sigma$ then $X \backslash A \in \Sigma$
    \item For any countable collection $(A_n)_{n=1}^{\infty}$ of sets in $\Sigma$, the union $\bigcup_{n=1}^\infty A_n \in \Sigma$
\end{itemize}

A set $A \in \Sigma$ is called measurable. A pair $(X, \Sigma)$ of a set and a $\sigma$-algebra is called a measure space. Suppose $(X, \Sigma), (Y, \Sigma')$ are measure spaces. Then a function $f : X \rightarrow Y$ is measurable if for every measurable set $A \in \Sigma'$ the set $f^{-1}(A)$ is measurable. 

Given a measure space, a measure on it is a set function $\mu : \Sigma \rightarrow [0, \infty]$ such that
\begin{itemize}
    \item $\mu(\varnothing) = 0$
    \item For all countable collections $(A_n)_{n=1}^{\infty}$ of pairwise disjoint sets in $\Sigma$
    \begin{equation}
        \mu(\bigcup_{n=1}^\infty A_n) = \sum_{i=1}^\infty \mu(A_n)
    \end{equation}
\end{itemize}
We may also define signed and complex measures, which simply replace the codomain of a measure with either $\R \cup \{-\infty, +\infty\}$ or $\mathbb{C}$ respectively.

If $f : (X, \Sigma) \rightarrow (Y, \Sigma')$ is measurable, and $\mu$ is a measure on $(X, \Sigma)$, we define the pushforward measure $f_* \mu$ as follows
\begin{equation}
    f_*\mu(A) = \mu(f^{-1}(A))
\end{equation}

There is always a measure on $(X, \Sigma)$ which simply sends every measurable set to $0$, we call this the zero measure. A finite measure is a measure $\mu$ such that $\mu(X) < \infty$. A probability measure is a measure $\mu$ such that $\mu(X) = 1$. Given a finite non-zero measure $\mu$ on $(X, \Sigma)$ we may always define a probability measure $\mu'$ on $(X, \Sigma)$ as follows
\begin{equation}
    \mu'(A) = \frac{\mu(A)}{\mu(X)}
\end{equation}
We will often call this the normalized measure associated to $\mu$.

If $X$ is a topological space there is a canonical $\sigma$-algebra associated to it, known as the Borel $\sigma$-algebra $\mc{B}(X)$. It is the smallest $\sigma$-algebra such that all open sets are measurable. Then any continuous function between $X$ and $Y$ is measurable when $X$ and $Y$ are equipped with their respective Borel $\sigma$-algebras. We often call the pair $(X, \mc{B}(X))$ a Borel space and when clear we will simply refer to this pair by $X$.

For a topological space $S$ and a real number $s \geq 0$ there is an associated Hausdorff measure of dimension $s$ which we shall denote $\mc{H}^s_S$. To $S$ we can also define what is known as the Hausdorff dimension, $\dim_{\mc{H}}(S)$, which is always non-negative. There is then a Hausdorff measure of that dimension, which we write as $\mc{H}_S$. All Borel sets are measurable for this measure. If $S$ is compact then $\mc{H}_S(S) < \infty$, but it may still be the zero measure. If it is not zero then we may as above construct the normalized Hausdorff measure. In this case this will be a Radon measure. For $S \cong [0,1]^n$ the Hausdorff dimension is $n$ and the normalized Hausdorff measure is exactly the Lebesgue measure. Similarly for $S$ finite and discrete, the Hausdorff dimension is $0$ and the normalized Hausdorff measure is the normalized counting measure.

The Hausdorff measure is important to consider beacuse of Frostman's Lemma \cite{Mattila_1995}.

\begin{lemma}[Frostman's Lemma]
    Let $A$ be a Borel subset of $\R^n$ and let $s > 0$. Then the following are equivalent.
    \begin{itemize}
        \item $\mc{H}^s(A) > 0$
        \item There is an unsigned Borel measure $\mu$ on $\R^n$ satisfying $\mu(A) > 0$ and such that
        \begin{equation}
            \mu(B(x, r)) \leq r^s
        \end{equation}
        for all $x \in \R^n, r > 0$.
    \end{itemize}
\end{lemma}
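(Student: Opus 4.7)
The plan is to prove the two implications separately. The $(\Leftarrow)$ direction is a one-line covering estimate, while the $(\Rightarrow)$ direction requires constructing the measure $\mu$ explicitly by a greedy dyadic mass distribution followed by a weak-$*$ compactness argument.

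For the easy direction, suppose $\mu$ is an unsigned Borel measure with $\mu(A) > 0$ and $\mu(B(x,r)) \leq r^s$ for every $x \in \R^n, r > 0$. Given any countable cover $\{U_i\}$ of $A$ with $\operatorname{diam}(U_i) \leq \delta$, each $U_i$ is contained in some ball $B(x_i, \operatorname{diam}(U_i))$, so
\begin{equation*}
\mu(A) \leq \sum_i \mu(U_i) \leq \sum_i \operatorname{diam}(U_i)^s.
\end{equation*}
Taking the infimum over all such covers gives $\mc{H}^s_\delta(A) \geq \mu(A)$, and letting $\delta \to 0$ yields $\mc{H}^s(A) \geq \mu(A) > 0$.

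For the hard direction, by inner regularity and $\sigma$-additivity of $\mc{H}^s$ we may assume $A$ is compact and, after translation and scaling, $A \subseteq [0,1]^n$. Let $\mathcal{D}_j$ denote the partition of $[0,1]^n$ into dyadic cubes of side length $2^{-j}$. For each integer $k \geq 0$ I would build a Borel measure $\mu_k$ supported on $A$ by a two-stage procedure: (a) at the finest scale, place on each $Q \in \mathcal{D}_k$ with $Q \cap A \neq \varnothing$ a measure of total mass $2^{-ks}$ (for example, a point mass at some chosen point of $Q \cap A$); (b) sweep upward through $j = k-1, \ldots, 0$, rescaling $\mu_k|_Q$ by $\min(1, 2^{-js}/\mu_k(Q))$ for each $Q \in \mathcal{D}_j$. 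By construction $\mu_k(Q) \leq 2^{-js}$ for every dyadic cube $Q$ of side $2^{-j}$ with $j \leq k$. A stopping-time argument shows that every point of $\operatorname{supp} \mu_k$ lies in a dyadic cube where the cap was saturated, and a disjoint subcollection of such cubes covers $\operatorname{supp} \mu_k$, yielding $\mu_k(A) \geq c_n\, \mc{H}^s_\infty(A)$, where $\mc{H}^s_\infty$ is the $s$-Hausdorff content --- positive exactly when $\mc{H}^s > 0$. Since the $\mu_k$ are uniformly bounded Radon measures on the compact set $[0,1]^n$, Banach--Alaoglu delivers a subsequential weak-$*$ limit $\mu$ supported on $A$. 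Any open ball $B(x,r)$ with $r \leq 1$ is covered by $O_n(1)$ dyadic cubes of side at most $2r$, giving $\mu_k(B(x,r)) \leq C_n r^s$ uniformly in $k$; the Portmanteau inequality $\mu(B(x,r)) \leq \liminf_k \mu_k(B(x,r))$ for open sets then transfers this bound to $\mu$, and rescaling $\mu \leftarrow C_n^{-1}\mu$ produces the Frostman bound. The lower bound $\mu(A) \geq c_n\, \mc{H}^s_\infty(A) > 0$ survives the weak limit via $\mu(A) \geq \limsup_k \mu_k(A)$, since $A$ is closed.

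The main obstacle is the combinatorial lower bound $\mu_k(A) \geq c_n\, \mc{H}^s_\infty(A)$ in step (b): one must verify that the upward rescaling never deletes too much mass. The argument assigns to each $x$ in the support its deepest ancestor cube $Q_x$ where the cap $2^{-js}$ was binding, and shows that $\{Q_x\}$ admits a pairwise disjoint subcollection whose $s$-content lower-bounds the Hausdorff content of $A$; this is the only nontrivial combinatorial input. A secondary subtlety is that weak-$*$ convergence controls open and closed sets in opposite directions, so to recover the Frostman bound on closed balls $\mu(\overline{B(x,r)}) \leq Cr^s$ (the more common convention), one applies inner continuity $\mu(\overline{B(x,r)}) = \lim_{\epsilon \to 0^+} \mu(B(x,r+\epsilon))$ to the open-ball bound established above.
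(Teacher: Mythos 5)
First, note that the paper does not actually prove this lemma: it is stated as background in the measure-theory appendix with a citation to Mattila (1995), where the corresponding result is established (for compact sets) by exactly the dyadic mass-distribution argument you sketch. So your proposal is not a different route from the paper so much as a reconstruction of the cited textbook proof; the easy direction, the capped dyadic construction, the uniform ball bound via covering $B(x,r)$ by boundedly many dyadic cubes, and the two portmanteau inequalities (open sets for the upper Frostman bound, the closed set $A$ for positivity of $\mu(A)$) are all used correctly.

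There are, however, two points to flag. The substantive one is the opening reduction: ``by inner regularity and $\sigma$-additivity of $\mc{H}^s$ we may assume $A$ is compact'' is only justified when $\mc{H}^s(A)<\infty$, in which case $\mc{H}^s\restriction A$ is a finite Borel measure and inner regularity applies. When $\mc{H}^s(A)=\infty$, producing a compact subset of positive (finite) $\mc{H}^s$-measure is the Besicovitch--Davies subset theorem (equivalently, a Choquet-capacitability argument for the Hausdorff content on Souslin sets), and this is precisely the step Mattila singles out as requiring deeper tools than the dyadic construction. As written, your proof covers compact sets and Borel sets of finite measure, which is all the paper needs (it applies the lemma to the compact sets $\mc{M}_E(x)$), but it does not yet prove the statement for arbitrary Borel $A$. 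The second point is minor but worth fixing in a careful write-up: in the stopping-time step you should select, for each point of the support, the \emph{coarsest} (maximal) dyadic cube at which the cap was binding, not the deepest one. Only the maximal capped cubes retain final mass exactly $\ell(Q)^s$ (a deeper capped cube may be rescaled again when a coarser ancestor saturates), and maximality among dyadic cubes gives pairwise disjointness for free; these cubes cover $A$, so $\mu_k(A)=\sum_Q \mu_k(Q)\geq \sum_Q \ell(Q)^s \geq n^{-s/2}\,\mc{H}^s_\infty(A)$, which is the bound your argument needs.
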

Let $A$ be compact with Hausdorff dimension greater than zero. Then we know that the Hausdorff measure at the Hausdorff dimension is finite. By Frostman's Lemma, the Hausdorff measure of $A$ being zero means that we cannot find any probability measure on $A$ that interacts nicely with the metric structure. This shows that our assumption of not allowing sets whose Hausdorff measure is the zero measure is reasonable.

\section{More on Frames and Canonicalization}\label{sec:frames_ap}
\begin{figure}[t]
    \centering
    \includegraphics[width=.75\textwidth]{figures/liesyms.pdf}
    \caption{This diagram illustrates connections between the various notions of frames and canonicalizations introduced previously and in this work. The top row are the finite frames and canonicalizations, which all map vertically into their weighted versions by taking normalized counting measures. Inside of each we have the sequentially closed subspaces of those that perserve continuity.\label{fig:commutative_ap}}
\end{figure}

Recall that a frame is defined as a function $\mc{F} : X \rightarrow 2^G \backslash \varnothing$ such that for $x \in X$ and $g \in G$
\begin{equation}
    \mc{F}(gx) = g\mc{F}(x)
\end{equation}

\begin{lemma}
    Let $\mc{F}$ be frame for the action of a group $G$ on a set $X$. Then $\forall x \in X$, $\mc{F}(x)$ is a union of left cosets of $G_x \subseteq G$. 
\end{lemma}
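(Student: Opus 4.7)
The goal is to show that $\mc{F}(x)$ is a union of cosets, which is equivalent to showing that $\mc{F}(x)$ is closed under left multiplication by $G_x$, i.e.\ $G_x \cdot \mc{F}(x) = \mc{F}(x)$. Once this closure property is established, writing $\mc{F}(x) = \bigcup_{f \in \mc{F}(x)} G_x f$ immediately presents it as a union of cosets, since two cosets of $G_x$ are either equal or disjoint.

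The first step is to apply the defining equivariance relation $\mc{F}(gx) = g\mc{F}(x)$ to an element $h \in G_x$. Since by definition $hx = x$, this specialization yields
\begin{equation*}
    \mc{F}(x) = \mc{F}(hx) = h\,\mc{F}(x),
\end{equation*}
valid for every $h \in G_x$. Ranging over all $h \in G_x$ gives $\mc{F}(x) = G_x \cdot \mc{F}(x)$, which is the required closure property.

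The final step is bookkeeping: for any $f \in \mc{F}(x)$, the closure just established implies $G_x f \subseteq \mc{F}(x)$, so $\mc{F}(x)$ decomposes as $\bigcup_{f \in \mc{F}(x)} G_x f$, a union of cosets of $G_x$. There is essentially no obstacle here — the lemma is a direct two-line consequence of the equivariance axiom applied to the stabilizer, and the only subtlety is a convention check on which side the cosets lie (here $G_x$ acts on $G$ by left multiplication, so the orbits $G_x f$ are the cosets in question, matching the paper's terminology).
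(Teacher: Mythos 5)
Your proof is correct and follows exactly the paper's argument: apply the frame equivariance condition to $h \in G_x$ (using $hx=x$) to get $\mc{F}(x) = h\mc{F}(x)$ for all $h$, hence $G_x \mc{F}(x) = \mc{F}(x)$, and conclude. The extra remark about which side the cosets lie on is worth keeping in mind — what you (and the paper) actually produce are sets of the form $G_x f$, which under the most common convention are right cosets of $G_x$, though the paper labels them ``left''; this is purely a naming choice and does not affect the substance.
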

\begin{proof}
    Indeed, let $x \in X$ and let $g \in G_x$. Then
    \begin{equation}
        \mc{F}(x) = \mc{F}(gx) = g\mc{F}(x)
    \end{equation}
    Hence we see that
    \begin{equation}
        \mc{F}(x) = G_x\mc{F}(x)
    \end{equation}
    Therefore we see that $\mc{F}(x)$ is a union of left cosets of $G_x$.
\end{proof}

\subsection{Weakly-Equivariant Frames}\label{sec:weakly_equiv_frames}
In \citet{dym2024equivariant} weakly equivariant frames are introduced. These are maps $\mu : X \rightarrow \operatorname{PMeas}(G)$ satisfying equivariance up to averaging over stabilizers.

\new{In the review process it was noted that this bears similarity to \cite{nguyen2023learning}, and indeed, in our framework their method is exactly the weighted frame defined by miniziming the energy given by miniziming orbit distance. Because the symmetry groups they consider are compact and act algebraically, they are able to obtain more concrete descriptions of these frames than we are able to in the general case. We do expect that in the compact case with algebraic actions, similar results should hold for energy minimizing canonicalizations.}

First, for any $x \in X$ and Radon probability measure $\tau$ \citet{dym2024equivariant}  define for every measurable $A \subseteq G$,
\begin{equation}\label{eq:weakequivarianceintegral}
    \langle\tau\rangle_x(A) = \int_{s \in G_x} s_* \tau(A) ds
\end{equation}
where $ds$ is the normalized Haar measure on $G_x$. For a frame $\mu$ we let $\overline{\mu}_x = \langle\mu_x\rangle_x$. Then $\mu$ is weakly equivariant if for all $x \in X$ and $g \in G$,
\begin{equation}
    \overline{\mu}_{gx} = g_* \overline{\mu}_x
\end{equation}
This definition does not extend to the case where the action has non-compact stabilizers.

To deal with this we propose imposing what we call $\pi_*$-equivariance. Again we consider simply a map $\mu : X \rightarrow \operatorname{PMeas}(G)$. For a point $x \in X$, let $\pi_x : G \rightarrow G / G_x$ be the quotient. We require that
\begin{equation}
    (\pi_{gx})_* \mu_{gx} = (\pi_{gx})_* g_* \mu_x
\end{equation}

\begin{lemma}\label{lem:piequivisweakequiv}
    If the stabilizer $G_x$ is compact then $\langle \mu \rangle_x = \langle \mu' \rangle_x$ if and only if $(\pi_x)_* \mu = (\pi_x)_* \mu'$.
\end{lemma}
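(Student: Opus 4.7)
The plan is to reduce both directions to the observation that the averaging operator $\langle \cdot \rangle_x$ factors through the pushforward $(\pi_x)_*$. The key technical enabler, guaranteed by compactness of $G_x$, is that the normalized Haar measure on $G_x$ is a probability measure and is bi-invariant, so that (a) $\langle \tau \rangle_x$ is well-defined as a Radon probability measure on $G$, and (b) Fubini's theorem applies to integrals on $G \times G_x$.

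For the forward direction, I would first fix the convention so that the $G_x$-action implicit in $s_*\tau$ is the one that preserves the fibers of $\pi_x$ (right multiplication by $s \in G_x$, under which $\pi_x(gs) = gG_x = \pi_x(g)$). Then for any set of the form $A = \pi_x^{-1}(B)$ with $B \subseteq G/G_x$ measurable, one has $s_*\tau(A) = \tau(A)$ pointwise in $s \in G_x$, so integrating against the normalized Haar measure collapses to $\langle \tau \rangle_x(A) = \tau(A)$. This shows $(\pi_x)_* \langle \tau \rangle_x = (\pi_x)_* \tau$ for any Radon probability measure $\tau$, and the claim $\langle \mu \rangle_x = \langle \mu' \rangle_x \Rightarrow (\pi_x)_* \mu = (\pi_x)_* \mu'$ follows by applying $(\pi_x)_*$ to both sides.

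For the reverse direction, I would test both measures against bounded continuous functions $f : G \to \R$ and use Fubini to interchange the $G_x$-integration with the $\mu$-integration. This rewrites $\int_G f \, d\langle \mu \rangle_x$ as $\int_G \bar{f}(g) \, d\mu(g)$, where $\bar{f}(g) = \int_{G_x} f(gs) \, ds$ is the average of $f$ over right $G_x$-translates. The function $\bar{f}$ is manifestly invariant under right multiplication by $G_x$ (by bi-invariance of Haar on the compact group $G_x$), so it descends to a bounded continuous function on $G/G_x$, giving $\int_G f \, d\langle \mu \rangle_x = \int_{G/G_x} \bar{f} \, d(\pi_x)_* \mu$. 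Since the same holds for $\mu'$, the hypothesis $(\pi_x)_* \mu = (\pi_x)_* \mu'$ yields equality of integrals against every bounded continuous $f$, and uniqueness of Radon measures determined by such test functions finishes the argument.

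The main obstacle is really bookkeeping: ensuring that the convention for $s_*\tau$ is consistent with $\pi_x$ being $G_x$-invariant under the chosen action, and checking joint measurability of $(g,s) \mapsto f(gs)$ so that Fubini can be invoked (immediate from continuity of the group operation and continuity of $f$). Compactness of $G_x$ is used in an essential way at every step: without it, the Haar measure would be infinite, $\bar{f}$ would be undefined, and the averaging integral in the definition of $\langle \tau \rangle_x$ would fail to produce a probability measure.
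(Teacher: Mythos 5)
Your proposal is correct and takes essentially the same approach as the paper's proof: in the easy direction you observe that the average collapses on $G_x$-saturated sets, and in the reverse direction you apply Fubini, note that the inner $G_x$-average of a test function is $G_x$-invariant and therefore descends to the quotient, and conclude from equality of the pushforwards. The only cosmetic differences are that you test against bounded continuous functions and invoke uniqueness of Radon measures where the paper works directly with indicator functions, and that you adopt the standard right-translation/left-coset convention for $\pi_x$ while the paper's proof implicitly uses left translation with right cosets (as betrayed by its saturation formula $\pi^{-1}(\pi(A)) = \bigcup_{s\in G_x} sA$); the two conventions are conjugate under $g\mapsto g^{-1}$, so the argument is unchanged.
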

\begin{proof}
    For ease of notation let $\pi = \pi_x$. First suppose that $\langle \mu \rangle_x = \langle \mu \rangle_x$. Let $A \subseteq G / G_x$ be a measurable set. Note that $\pi^{-1}(A)$ is a union of cosets of $G_x$. Hence it is invariant under multiplication by elements in $G_x$. Therefore for all $s \in G_x$ we have $s^{-1}\pi^{-1}(A) = \pi^{-1}(A)$. Thus
    \begin{equation}
        \langle \mu \rangle_x(A) = \int_{s \in G_x} s_* \mu(\pi^{-1}(A)) ds = \int_{s \in G_x} \mu(\pi^{-1}(A)) ds = \mu(\pi^{-1}(A))
    \end{equation}
    and similarly, $\langle \mu' \rangle_x(\pi^{-1}(A)) = \mu'(\pi^{-1}(A))$. Hence,
    \begin{equation}
        \mu(\pi^{-1}(A)) = \mu'(\pi^{-1}(A))
    \end{equation}
    Therefore, $\pi_*\mu = \pi_*\mu'$.

    Now suppose that $\pi_* \mu = \pi_* \mu'$. Then for any measurable set $A \subseteq G$, we have that
    \begin{equation}
        \mu(\pi^{-1}(\pi(A))) = \pi_*\mu(\pi(A)) = \pi_*\mu'(\pi(A)) = \mu'(\pi^{-1}(\pi(A))
    \end{equation}
    We have
    \begin{equation}
        \pi^{-1}(\pi(A)) = \bigcup_{s \in G_x} sA
    \end{equation}
    Using Fubini's theorem and the fact that both $s$ and $\mu$ are probability measure, we may write \Cref{eq:weakequivarianceintegral} as follows
    \begin{align}
        \langle \mu \rangle_x &= \int_{s \in G_x} \, s_*\mu(A) ds \\
        &= \int_{s \in G_x} \int_{G} \chi_{A}(sg) \, d\mu(g) ds \\
        &= \int_{G} \int_{s \in G_x} \chi_{A}(sg) \, ds d\mu(g) \\
    \end{align}
    Consider the inner integral as a function $\phi$ on $G$. We claim that $\phi$ is $G_x$-invariant. Indeed, let $g' = s'g$ for some $s' \in G_x$. Then
    \begin{equation}
        \phi(g') = s(Ag^{-1}(s')^{-1}) = s(Ag^{-1}) = \phi(g)
    \end{equation}
    as $s$ is the normalized Haar measure, hence is right-invariant. Therefore $\phi$ factors through the quotient map $\pi$. Let $\phi'$ be the corresponding function on $G / G_x$, then $\phi = \phi' \circ \pi$. Hence
    \begin{align}
        \langle \mu \rangle_x &= \int_{G} \phi(g) d\mu(g) \\
        &= \int_{G / G_x} \phi'([g]) d(\pi_* \mu)([g]) \\
        &= \int_{G / G_x} \phi'([g]) d(\pi_* \mu')([g]) \\
        &= \int_{G} \phi(g) d\mu'(g) = \langle \mu' \rangle_x
    \end{align}
\end{proof}

\begin{corollary}
    If the $G$-action on $X$ has compact stabilizers at each point, then $\pi_*$-equivariance is equivalent to weak equivariance.
\end{corollary}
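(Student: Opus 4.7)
The plan is to apply \Cref{lem:piequivisweakequiv} pointwise at each $x \in X$ and each $g \in G$, once I verify that the averaging operation $\langle \cdot \rangle_{(\cdot)}$ intertwines correctly with pushforward by $g$. Unpacking definitions, weak equivariance asserts $\langle \mu_{gx} \rangle_{gx} = g_* \langle \mu_x \rangle_x$ for all $x,g$, while $\pi_*$-equivariance asserts $(\pi_{gx})_* \mu_{gx} = (\pi_{gx})_* g_* \mu_x$. The Lemma says exactly that, at the single base point $gx$, two probability measures $\nu,\nu'$ on $G$ satisfy $\langle \nu \rangle_{gx} = \langle \nu' \rangle_{gx}$ iff $(\pi_{gx})_* \nu = (\pi_{gx})_* \nu'$. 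So the goal reduces to showing that weak equivariance at $(x,g)$ is the same as the equality $\langle \mu_{gx} \rangle_{gx} = \langle g_* \mu_x \rangle_{gx}$.

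The one compatibility I would verify is
\[
    \langle g_* \mu_x \rangle_{gx} \;=\; g_* \langle \mu_x \rangle_x.
\]
This follows from unfolding \Cref{eq:weakequivarianceintegral} and using $G_{gx} = g G_x g^{-1}$: the normalized Haar measure on $G_{gx}$ is the pushforward under conjugation by $g$ of the normalized Haar measure on $G_x$, so substituting $s = g t g^{-1}$ in the defining integral and invoking $(g t g^{-1})_* g_* = g_* t_*$ turns $\int_{s \in G_{gx}} s_*(g_* \mu_x)\,ds$ into $g_* \int_{t \in G_x} t_* \mu_x\,dt = g_* \langle \mu_x \rangle_x$. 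Compactness of the stabilizers (both $G_x$ and $G_{gx}$, which holds automatically by conjugation) is used precisely to ensure the normalized Haar measures exist, which is why this step requires the hypothesis of the corollary.

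With this identity in hand, both implications are immediate: weak equivariance rewrites as $\langle \mu_{gx} \rangle_{gx} = \langle g_* \mu_x \rangle_{gx}$, which by the Lemma (applied at $gx$, with $\nu = \mu_{gx}$, $\nu' = g_* \mu_x$) is equivalent to $(\pi_{gx})_* \mu_{gx} = (\pi_{gx})_* g_* \mu_x$, i.e., $\pi_*$-equivariance. Quantifying over all $x$ and $g$ finishes the argument. I do not anticipate any real obstacle here; the only point that needs a careful sanity check is the conjugation behavior of the normalized Haar measure, which is standard for compact groups and is the reason the compactness hypothesis on stabilizers is invoked.
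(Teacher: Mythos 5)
Your proof is correct, and it is the natural deduction from Lemma~\ref{lem:piequivisweakequiv}; the paper gives no separate proof of the corollary precisely because it is meant to follow this way. You correctly identify that the Lemma, applied at the base point $gx$ with $\nu = \mu_{gx}$ and $\nu' = g_*\mu_x$, reduces the claim to the compatibility $\langle g_*\mu_x\rangle_{gx} = g_*\langle \mu_x\rangle_x$, and your verification of that identity via $G_{gx} = gG_xg^{-1}$, the conjugation-invariance of normalized Haar measure on compact groups, and $(gtg^{-1})_* g_* = g_* t_*$ (all pushforwards being by left translation) is exactly right. The compactness hypothesis is doing double duty here: it is needed both for the averaging $\langle\cdot\rangle_x$ to be well-defined as a probability measure and for the uniqueness of the normalized Haar measure under the conjugation isomorphism, and you call out both.
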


\subsection{On projections}\label{sec:on_projections_ap}
For a weighted orbit canonicalization $\mu$, $\mc{P}_{\mu}: B(X, \R) \rightarrow B(X, \R)^G$ turns out to be a projection, as
\begin{align}
    \mc{P}_\mu\mc{P}_\mu(\phi) (x) &= \int_X \mc{P}_\mu(\phi)(s) \, d\mu_x(s) \nonumber\\
    &= \int_{Gx} \underbrace{\mc{P}_\mu(\phi)(s)}_{\text{G-invariant}} \, d\mu_x(s) = \mc{P}_\mu(\phi) (x) 
\end{align}

\subsection{Energies and Closed Canonicalizations}\label{sec:energyclosedcanons}
By the construction of energy minimizing frames, we have a function from energy functions to weighted closed canonicalizations. We now describe the image of this.

\begin{theorem}\label{thm:energyclosedcanons}
    Let $\kappa \in \wccan_G(X)$ satisfying the following:
    \begin{itemize}
        \item $\kappa_x$ is the normalized Hausdorff measure on its support.
        \item For all $y \in \overline{Gy}$, $\supp \kappa_y = \supp \kappa_x \cap \overline{Gx}$.
    \end{itemize}
    Then there is an energy function $E$ such that $\kappa_E = \kappa$.
\end{theorem}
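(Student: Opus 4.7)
The plan is to reverse-engineer an energy whose per-orbit minimizing set is exactly $\supp \kappa_x$. The key observation is that the second hypothesis (which I read, correcting an apparent typo, as $\supp \kappa_y = \supp \kappa_x \cap \overline{Gy}$ whenever $y \in \overline{Gx}$) ensures that the various supports glue together into a single subset $S \subseteq X$ which can serve as the ``zero-set'' of the desired energy.

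Concretely, I would first define
$$S = \{y \in X : y \in \supp \kappa_y\},$$
and observe that this agrees with $\bigcup_{x \in X} \supp \kappa_x$: the inclusion $\supseteq$ is immediate from the hypothesis (take $y \in \supp \kappa_x$ and apply the support-consistency at $y$), while $\subseteq$ is trivial. Then set
$$E : X \to [0, +\infty], \qquad E(y) = \begin{cases} 0 & y \in S, \\ 1 & y \notin S. \end{cases}$$

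Next I would verify $\mc{M}_E(x) = \supp \kappa_x$ for every $x \in X$. Since $E$ is $\{0,1\}$-valued and $\supp \kappa_x \subseteq S \cap \overline{Gx}$ is non-empty (as $\kappa_x$ is a probability measure), the minimum value $0$ is attained on $\overline{Gx}$, so $\mc{M}_E(x) = \overline{Gx} \cap S$. The containment $\supp \kappa_x \subseteq \overline{Gx} \cap S$ is immediate from the definition of $S$. Conversely, any $y \in \overline{Gx} \cap S$ satisfies $y \in \supp \kappa_y$, and the hypothesis applied to the pair $(x,y)$ yields $\supp \kappa_y = \supp \kappa_x \cap \overline{Gy}$, forcing $y \in \supp \kappa_x$. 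Hence $\mc{M}_E(x) = \supp \kappa_x$, which is in particular of non-zero Hausdorff measure by the first hypothesis, so $\kappa_E$ is well-defined.

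Finally, by construction $\kappa_E(x)$ is the normalized Hausdorff measure on $\mc{M}_E(x) = \supp \kappa_x$, and by the first hypothesis this is exactly $\kappa_x$; hence $\kappa_E = \kappa$. The only real subtlety I anticipate is parsing the second hypothesis and checking that the two descriptions of $S$ coincide; no analytic complications arise (no coercivity, semi-continuity, or measurability of $S$ is needed) because the theorem only asks for the existence of \emph{some} energy function and the two-valued indicator construction suffices.
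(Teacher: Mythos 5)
Your proof is correct and is essentially the paper's own argument: the paper sets $E_\kappa(x) = 1 - \chi_{\operatorname{supp}\kappa_x}(x)$, which is precisely the indicator of the complement of your set $S = \{y : y \in \operatorname{supp}\kappa_y\}$, and both arguments then identify $\mc{M}_{E}(x)$ with $\operatorname{supp}\kappa_x$ using the support-consistency hypothesis in the same way. You also correctly diagnosed (and the paper's proof confirms) that the second hypothesis should read ``for all $y \in \overline{Gx}$, $\operatorname{supp}\kappa_y = \operatorname{supp}\kappa_x \cap \overline{Gy}$''.
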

\begin{proof}
    Suppose $\kappa$ is a closed canonicalization satisfying the requirements of the theorem. Let $E_{\kappa}(x) = 1 - \chi_{\operatorname{supp}\kappa_x}(x)$.

    Now we consider the minimum sets
    \begin{align}
     \mc{M}_{E_\kappa}(x) = \min_{y \in \overline{Gx}} E_\kappa(y) 
    \end{align}
    We have that $\supp \kappa_x$ is a non-empty closed subset of $\overline{Gx}$, hence the minimum of $E_\kappa$ over $\overline{Gx}$ is zero. Therefore $\mc{M}_{E_\kappa}(x) = E_\kappa^{-1}(0) \cap \overline{Gx}$. In other words it is those $y \in \overline{Gx}$ such that $y \in \supp \kappa_y$. Further, by assumption, for any $y \in \overline{Gx}$, 
    \begin{equation}
        \supp \mu_y = \supp \kappa_x \cap \overline{Gy}
    \end{equation}
    Therefore $y \in \supp \kappa_y$ if and only if $y \in \supp \kappa_x$. Hence we see that
    \begin{equation}
        \mc{M}_{E_\kappa}(x) = \supp \kappa_x
    \end{equation}
    But the by the assumption that $\kappa_x$ is simply the normalized Hausdorff measure on $\supp \kappa_x$, we get that the $\kappa_{E_\kappa} = \kappa$.
\end{proof}

\section{Proofs}
\begin{prop}\label{prop:canon_frames}
$\eqframe$ defines an \emph{equivariant frame}, i.e. $\eqframe (g x) = g \eqframe (x)$ for any $g\in G,\;x\in X$. 
\end{prop}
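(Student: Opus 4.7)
The plan is to unfold the definition of $\eqframe$ on both sides of the claimed identity and reduce the equality to a change of variables in the minimization. Writing out $\eqframe(gx) = \argmin_{h \in G} E(h^{-1} g x)$, the natural move is to substitute $h = gk$, so that $h^{-1} g x = k^{-1} g^{-1} g x = k^{-1} x$. Since left multiplication by $g$ is a bijection of $G$ (its inverse being left multiplication by $g^{-1}$), $h$ ranges over $G$ precisely as $k$ does, and the set of minimizers is reindexed exactly by this bijection.

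Concretely, I would carry out the following steps in order. First, I would record the definition $\eqframe(y) = \argmin_{h \in G} E(h^{-1} y)$, emphasizing that the argmin is a (possibly empty) subset of $G$. Second, I would note the elementary fact that for any function $f : G \to [0,\infty]$ and any fixed $g \in G$, $\argmin_{h \in G} f(gh) = g^{-1} \argmin_{h \in G} f(h)$, which is just a relabeling of the minimization variable. Third, I would apply this to $f(h) = E(h^{-1} x)$ after the substitution above: the set $\{h \in G : h^{-1} g x \text{ minimizes } E \text{ along the orbit of } x\}$ is obtained from the set of minimizers of $k \mapsto E(k^{-1} x)$ by left-translating by $g$.

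I do not anticipate a genuine obstacle here; the statement is purely a symmetry of the optimization problem under relabeling, and the argument works whether or not the argmin is a singleton or even non-empty (in the empty case, both sides are the empty set, and $g \cdot \varnothing = \varnothing$). One minor point worth spelling out is that the equality $g \cdot \argmin_{k} E(k^{-1} x) = \argmin_{h} E(h^{-1} g x)$ holds as an equality of subsets of $G$, so the frame equivariance condition $\eqframe(gx) = g \eqframe(x)$ follows directly. This construction sits inside the broader framework of Section \ref{sec:energy}, and serves as the starting observation justifying why energy minimization naturally gives rise to frames (equation \ref{eq:energyminization_frame}).
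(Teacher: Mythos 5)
Your argument is correct and is essentially identical to the paper's own proof, which also reindexes the minimization via the substitution $h \mapsto gh$ and invokes bijectivity of left multiplication. The only minor point is that your stated lemma $\argmin_{h\in G} f(gh) = g^{-1}\argmin_{h\in G} f(h)$ needs to be applied with $g^{-1}$ in place of $g$ to match the target identity, but the substitution $h = gk$ you describe yields the result directly.
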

\begin{proof}
Suppose that $x\in \mathcal X$ and $g\in G$. Then
\begin{align*}
\eqframe (gx) &= \argmin_{h\in G} E(h^{-1} g x) \\
&= \argmin_{gh\,\text{s.t.}\, h\in G} E((gh)^{-1} g x) \\
&=\argmin_{gh \,\text{s.t.} \,h\in G} E(h^{-1} x)\\
&= g [\argmin_{h\in G} E(h^{-1} x)] = g \mathcal F(x).
\end{align*}
Here, we have simply used that group multiplication is bijective when either of the arguments is fixed.
\end{proof}

\begin{prop}\label{prop:coerciveminimaproof}
    $\mc{M}_E(x)$ is $G$-invariant and non-empty.
\end{prop}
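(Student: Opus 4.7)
The plan is to verify the two assertions separately, both relying on elementary properties of the $G$-action together with standard compactness arguments.

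For \emph{$G$-invariance}, the key observation is that since $G$ acts smoothly on $X$, every $g \in G$ acts as a homeomorphism of $X$ (with continuous inverse given by the action of $g^{-1}$). As a set $g \cdot Gx = Gx$, and since $g$ is a homeomorphism it commutes with taking closures, giving $\overline{G(gx)} = g \cdot \overline{Gx} = \overline{Gx}$ for every $g \in G$. Because $\mc{M}_E(x)$ is defined purely as the argmin of $E$ over $\overline{Gx}$, we immediately obtain
\begin{equation*}
\mc{M}_E(gx) = \arg\min_{y \in \overline{G(gx)}} E(y) = \arg\min_{y \in \overline{Gx}} E(y) = \mc{M}_E(x),
\end{equation*}
which is the required $G$-invariance of the assignment $x \mapsto \mc{M}_E(x)$.

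For \emph{non-emptiness}, the aim is to show that $E$ attains its infimum on the non-empty closed set $\overline{Gx}$. Set $m = \inf_{y \in \overline{Gx}} E(y)$. If $m = +\infty$ then every point of $\overline{Gx}$ is a minimizer and we are done, since $x \in \overline{Gx}$. Otherwise fix any $N > m$. By topological coercivity there exists a compact $K \subseteq X$ with $E|_{X \setminus K} > N$, so any minimizer (equivalently, the tail of any minimizing sequence) must lie in $K \cap \overline{Gx}$. This intersection is closed in the compact $K$, hence compact. On a compact set a lower semi-continuous function attains its minimum (the standing assumption on $E$, made explicit in the continuity lemma that follows), so $\mc{M}_E(x) \neq \varnothing$.

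The main point requiring care is simply lining up assumptions: the smoothness of the action supplies the homeomorphism property used for invariance, coercivity reduces the minimization to a compact set, and lower semi-continuity of $E$ supplies attainment of the minimum on that compact set. No genuinely non-trivial step arises, which is why in the paper this proposition is stated as a short sanity-check before introducing the weighted Hausdorff normalization of $\mc{M}_E(x)$.
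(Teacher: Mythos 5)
Your argument is essentially the paper's: $G$-invariance follows because $\overline{G(gx)} = \overline{Gx}$, and non-emptiness follows by using coercivity to confine minimizers to a compact $K \cap \overline{Gx}$, where $E$ then attains its infimum. Two small points of difference are worth noting. First, your route to $\overline{G(gx)} = \overline{Gx}$ via the homeomorphism property (``$g$ commutes with closures'') is a detour: $G(gx)$ and $Gx$ are already equal \emph{as sets} by the orbit identity, so their closures coincide with no topology needed — this is exactly the shortcut the paper takes. Second, and more substantively, you are more careful than the paper about the attainment step: you flag that lower semi-continuity of $E$ is what guarantees the infimum is achieved on the compact set, and you separately dispose of the $m = +\infty$ degenerate case. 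The paper's proof simply asserts ``therefore $E$ attains its minimum on this set'' without invoking any continuity hypothesis — lower semi-continuity only appears explicitly as a hypothesis in the later continuity lemma (\Cref{lem:continuousenergy}), not in the base definition of an energy function. Your reading that this is an implicit standing assumption is the charitable and correct one, but your version makes the dependence visible, which is an improvement in rigor rather than a different approach.
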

\begin{proof}
 $\mc{M}_E(x)$ is $G$-invariant, as the infimum is taken over the orbit $\overline{Gx}$, and this orbit is $G$-invariant. By the coercivity assumption on $E$, this is non-empty. Indeed, let $x \in X$ and let $N$ be large. Let $K$ be a compact set of $X$ such that $E(y) > N$ for $y \in X \backslash K$. Taking $N$ large enough, we can assume that $K \cap \overline{Gx}$ is non-empty. Then we see that $\mc{M}_E(x) \subseteq K \cap \overline{Gx}$. $K \cap \overline{Gx}$ is a closed subset of the compact set $K$, hence is compact. Therefore $E$ attains its minimum on this set, hence $\mc{M}_E(x)$ is non-empty.   
\end{proof}

\begin{theorem}\label{thm:framesareorbitcans}
    Let $G$ be a Lie group acting smoothly on $X$ a manifold. Let $j_x : G \rightarrow X$ be defined as $g \mapsto g^{-1} \cdot x$. Then we have a map $\alpha : \wfra_G(X) \rightarrow \wocan_G(X)$ defined by
    \begin{equation}
        \alpha(\mc{F})(x) = (j_x)_*(\mc{F}(x))
    \end{equation}
    Then for any $\mc{F}, \mc{F'} \in \wfra_G(X)$ such that $\alpha(\mc{F}) = \alpha(\mc{F'})$ we have that $(\pi_x)_* \mc{F} = (\pi_x)_* \mc{F'}$ for every $x \in X$ where $\pi_x : G \rightarrow G / G_x$.
    If each $G_x$ has a probability measure $P_x$, then $\alpha$ has image exactly the set of weighted orbit canonicalizations with compact support.
\end{theorem}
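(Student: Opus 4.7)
The theorem has three assertions, all flowing from two preparatory identities I would establish first. Identity \emph{(I)} is the cancellation $j_{gx}\circ L_g = j_x$, where $L_g:G\to G$ is left multiplication, immediate from $(gh)^{-1}(gx)=h^{-1}x$. Identity \emph{(II)} is the left $G_x$-invariance $j_x(hg)=g^{-1}h^{-1}x=g^{-1}x=j_x(g)$ for $h\in G_x$, which makes $j_x$ factor through the quotient $\pi_x$ as $j_x=\tilde{j}_x\circ\pi_x$. By \Cref{thm:lieorbitstab}, the descended map $\tilde{j}_x:G/G_x\to X$ is an injective immersion onto $Gx$, hence a Borel injection, so $(\tilde{j}_x)_*$ is injective on Radon probability measures; this is the fact I will reuse.

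\textbf{Well-definedness and injectivity.} Using (I), (II), and the weak equivariance of $\mc{F}$,
\[
\alpha(\mc{F})(gx) = (\tilde{j}_{gx})_*(\pi_{gx})_*\mc{F}(gx) = (\tilde{j}_{gx})_*(\pi_{gx})_* g_*\mc{F}(x) = (j_{gx}\circ L_g)_*\mc{F}(x) = (j_x)_*\mc{F}(x) = \alpha(\mc{F})(x).
\]
The support condition $\supp\alpha(\mc{F})(x)\subseteq\overline{Gx}$ is automatic from $j_x(G)=Gx$, and shrinks to a compact subset of $Gx$ when $\supp\mc{F}(x)$ is compact, placing $\alpha(\mc{F})$ inside $\wocan_G(X)$. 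For the injectivity statement, the hypothesis $\alpha(\mc{F})=\alpha(\mc{F'})$ combined with the factorization yields $(\tilde{j}_x)_*(\pi_x)_*\mc{F}(x)=(\tilde{j}_x)_*(\pi_x)_*\mc{F'}(x)$, and injectivity of $(\tilde{j}_x)_*$ forces $(\pi_x)_*\mc{F}(x)=(\pi_x)_*\mc{F'}(x)$.

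\textbf{Surjectivity and main obstacle.} Given $\kappa\in\wocan_G(X)$ with compact support and the probability measures $P_x$, I would construct $\mc{F}$ by disintegration along the principal $G_x$-bundle $j_x:G\to Gx$ (whose fibers are left cosets $G_x g$ by (II)). On $\supp\kappa_x$, a Borel section $s_x$ with $j_x\circ s_x=\mathrm{id}$ exists by Kuratowski--Ryll-Nardzewski applied to the closed-graph principal bundle, and I set
\[
\int_G\phi\,d\mc{F}(x) = \int_{Gx}\!\int_{G_x}\phi\bigl(h\cdot s_x(y)\bigr)\,dP_x(h)\,d\kappa_x(y).
\]
Then $(j_x)_*\mc{F}(x)=\kappa_x$ by Fubini, so $\alpha(\mc{F})=\kappa$; weak equivariance of $\mc{F}$ is not imposed but \emph{derived}, since both $\mc{F}(gx)$ and $g_*\mc{F}(x)$ push forward under $j_{gx}$ to $\kappa_{gx}=\kappa_x$ and the injectivity already proved delivers $(\pi_{gx})_*\mc{F}(gx)=(\pi_{gx})_*g_*\mc{F}(x)$. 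Conversely, if $\mc{F}(x)$ has compact support, then $\supp\alpha(\mc{F})(x)\subseteq j_x(\supp\mc{F}(x))$ is compact, pinning the image exactly to the compact-support orbit canonicalizations. The principal technical obstacle is assembling a genuine Radon probability measure on $G$ from the fiberwise $P_x$-distributions that is Borel in $y$: local trivializations handle this over small neighborhoods of $\supp\kappa_x$, but patching into a global Borel section (equivalently, producing a measurable disintegration kernel) requires the slice/tube theorem for proper Lie group actions or a careful partition-of-unity argument, and one must check that two different sections produce measures agreeing up to $(\pi_x)_*$-equivalence, which is in fact forced a posteriori by the injectivity result.
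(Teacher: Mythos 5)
Your proof follows the same structural skeleton as the paper's---check that $\alpha$ lands in $\wocan_G(X)$, deduce injectivity up to $(\pi_x)_*$ from the injectivity of the descended map $\tilde{j}_x$, and construct a preimage by disintegrating a given orbit canonicalization over the fiber $G_x$---but the key technical lemmas you invoke are different, and in two places they are actually cleaner and more robust than what appears in the paper. For injectivity, you appeal to the fact that $\tilde{j}_x$ is a continuous injection between standard Borel spaces, so $(\tilde{j}_x)_*$ is injective on Borel probability measures (Lusin--Souslin); this argument requires no compactness hypothesis. The paper instead argues via ``an injective immersion restricted to a compact is a homeomorphism'' and asserts that the preimage $K'$ of a compact $K\subseteq Gx$ under $\tilde{j}_x$ is compact---a claim that is \emph{false} in general for non-proper immersions (e.g., the irrational line in $T^2$: one can exhibit a compact subset of the orbit whose preimage in $\R$ is unbounded), so your route via Lusin--Souslin quietly repairs a gap. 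For surjectivity, you use a measurable section and a Fubini-type fiber integral, where the paper disjointifies a finite cover of trivializing opens and glues product measures; yours is heavier machinery (Jankov--von Neumann or Kuratowski--Ryll-Nardzewski measurable selection), theirs is more hands-on, and both rest on the local triviality of $j_x\colon G\to Gx$ as a $G_x$-bundle. Your observation that weak equivariance of the constructed $\mc{F}$ need not be \emph{imposed} but can be \emph{derived} from the already-established injectivity (since both $(j_{gx})_*\mc{F}(gx)$ and $(j_{gx})_*g_*\mc{F}(x)$ equal $\kappa_{gx}$) is a genuinely nicer argument than the paper's terse ``this is equivariant as $G_{gx}=g^{-1}G_x$'' (which is itself misstated---$G_{gx}=gG_xg^{-1}$). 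One caution shared with the paper: your identity (II) shows $j_x$ is constant on \emph{right} cosets $G_xg$, so the quotient $\pi_x$ you factor through is $G\to G_x\backslash G$, not the usual left-coset $G/G_x$; the paper uses the same $G/G_x$ notation but the proof of Lemma~\ref{lem:piequivisweakequiv} confirms that the right-coset projection is intended, so you are consistent with the paper's convention, but you should make the coset choice explicit. Finally, note that your construction of $\mc{F}(x)$ does not obviously have compact support, while the paper's well-definedness and injectivity arguments tacitly assume weighted frames are compactly supported; this tension is inherited from the paper's definitions rather than introduced by you.
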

\begin{proof}
    First we check that $\alpha$ as defined above lands in $\wocan_G(X)$. Let $\mc{F} \in \wfra_G(X)$. Then $\alpha(\mc{F})(x)$ is $G$-invariant as for any measurable $A \subseteq X$.
    \begin{align*}
        \alpha(\mc{F})(gx)(A) &= (j_{gx})_*(\mc{F}(gx))(A) \\
        &= \mc{F}(gx)(\{h \in G \; | \; h^{-1} \cdot (gx) \in A\} \\
        &= \mc{F}(gx)(\{h \in G \; | \; h^{-1} \cdot (gx) \in A\}) \\
        &= \pi_*(\mc{F}(gx)) (\pi(\{h \in G \; | \; h^{-1} \cdot (gx) \in A\})) \\
        &= \pi_* g_* (\mc{F}(x)) (\pi(\{h \in G \; | \; h^{-1} \cdot (gx) \in A\})) \\
        &= g_* (\mc{F}(x)) (\{h \in G \; | \; h^{-1} \cdot (gx) \in A\}) \\
        &= \mc{F}(x) (\{h \in G \; | \; (gh)^{-1} \cdot (gx) \in A \}) \\
        &= \mc{F}(x) (\{h \in G \; | \; h^{-1} \cdot x \in A\}) = \alpha(\mc{F})(x)(A)
    \end{align*}
    Thus we see that $\alpha(\mc{F}) \in \wcan_G(X)$, as $\alpha(\mc{F})(x)(X) = \mc{F}(x)(G) = 1$, as $\mc{F}(x)$ is a probability measure. To check that it is a weighted orbit canonicalization, first note that it is clear that $\supp \alpha(\mc{F})(x) \subseteq \overline{Gx}$, as for any $y \notin \overline{Gx}$ we can find an open neighborhood $U$ of $y$ such that $j_x^{-1}(U) = \varnothing$ which is always $\mc{F}(x)$-measure zero. Suppose that $y \in \overline{Gx} \backslash Gx$. $\mc{F}(x)$ is a weighted frame, hence it is supported on some compact set $K \subseteq G$. We claim that there is an open set $U$ containing $y$ such that $j_x^{-1}(U) \subseteq G \backslash K$. Suppose not, then for every open neighborhood $U$ of $y$ there is an $g_U \in K$ such that $j_x(g) \in U$. Then the collection $(g_U)$ over the neighborhoods of $y$ forms a net in $K$. As $K$ is compact, this net must have a convergent subsequence. Call this subsequence $(g_n)$ and let $g$ be the limit. As the action of $G$ on $X$ is continuous we have
    \begin{equation}
        g \cdot x = \lim_{n \rightarrow \infty} g_n \cdot x
    \end{equation}
    For any $U$ an open neighborhood of $y$ we may, by definition of a net, find an $N$ such that for any $n \geq N$, $g_n \cdot x \in U$. As $X$ is Hausdorff, we then must have that the limit $\lim g_n \cdot x$ is simply $y$. But this contradicts $y \notin Gx$. Hence there is some open neighborhood $U$ of $y$ such that $j_x^{-1}(U) \subseteq G \backslash K$. Then $j_x^{-1}(U)$ is measure zero with respect to $\mc{F}(x)$. Hence $\alpha(\mc{F})(x)(U) = 0$, so $y \notin \supp \alpha(\mc{F})$. Therefore we see that $\alpha(\mc{F}) \in \wocan_G(X)$.
    
    Suppose that $\mc{F}, \mc{F}'$ are weighted frames such that $\alpha(\mc{F}) = \alpha(\mc{F}')$. Let $x \in X$. Then for any measurable $A \subseteq X$,
    \begin{equation}\label{eq:framemeasureequality}
        \mc{F}(x)(j_x^{-1}(A)) = \alpha(\mc{F})(x)(A) = \alpha(\mc{F}')(x)(A) = \mc{F'}(x)(j_x^{-1}(A))
    \end{equation}
    Both $\mc{F}$ and $\mc{F}'$ are compactly supported, say on compact subsets $K, K'$ respectively.
    Consider the natural map $\overline{j_x} : G / G_x \rightarrow X$. This is injective for group-theoretic reasons. Let $\pi : G \rightarrow G / G_x$ be the quotient map. Then $\pi(K)$ and $\pi(K')$ are compact subsets of $G / G_x$, as they are continuous images of compacts. Now as $\pi(K)$ and $\pi(K)'$ are compacts which inject into a Hausdorff space $X$, $\overline{j_x}$ restricts to homeomorphisms $\pi(K) \cong \overline{j_x}(\pi(K))$ and similar for $K'$. Using \Cref{eq:framemeasureequality} we see that for any $U \subseteq G / G_x$ open we have
    \begin{equation}
        \mc{F}(x)(\pi^{-1}(U)) = \mc{F}(x)(\pi^{-1}(U))
    \end{equation}
    Therefore $\pi_* \mc{F}$ and $\pi_* \mc{F'}$ agree on the open sets of $G / G_x$, but then that implies that they agree on the entire Borel $\sigma$-algebra of $G / G_x$. Hence $\pi_* \mc{F} = \pi_* \mc{F'}$ as desired. 

   Finally, let $v \in \wocan_G(X)$ have compact support. Let $x \in X$. Then $v_x$ is some probability measure on $X$ with $\supp v_x \subseteq K \subseteq Gx$ for some compact $K \subseteq Gx$. By \Cref{thm:lieorbitstab} we have that $\overline{j_x} : G / G_x \rightarrow X$ is a immersion with image $Gx$. $\overline{j_x}$ restricts to a homeomorphism on $K$, and using this we can define a measure $v_x'$ on $G / G_x$ which agrees with $v_x$ under this homeomorphism. It is supported on the preimage $K'$ of $K$ which is also compact. $\pi : G \rightarrow G / G_x$ is a principal bundle with fiber $G_x$. Therefore, locally on $G / G_x$, $\pi$ is the trivial fibration. We may cover $K'$ by finitely many open sets over which $\pi$ is trivial, call them $U_1, \dots, U_n$. Let $V_i = \pi^{-1}(U_i) \cong U_i \times G_x$. Now define
   \begin{equation}
       V_i' = V_i \backslash (\bigcup_{j < i} V_j)
   \end{equation}
   Then these are disjoint measurable sets which cover $\pi^{-1}(K')$. They are all trivially fibered over the base. Now we define a measure $\nu$ using these, $\nu$ will be supported on $\pi^{-1}(K')$. Let $A$ be a measurable set in $G$. Firstly we may assume that $A \subseteq \pi^{-1}(K')$. Let $A_i = A \cap V_i'$. Then the $A_i$ are disjoint measurable sets which cover $A$. So the value of $\nu$ on $A$ is determined by the value of $\nu$ on each $A_i$. We define this value to be
   \begin{equation}
       \nu(A_i) = (v_x' \times P_x)(A_i)
   \end{equation}
   where here we use the homeomorphism $V_i' \cong \pi(V_i') \times G_x$ to put the product measure on this set. $\pi_* \nu = v_x'$, and this is equivariant as $G_{gx} = g^{-1} G_x$. Hence $\nu$ is $\pi_*$-equivariant.
\end{proof}

Note that for the above theorem to work, we need to assume that each $G_x$ has a probability measure. This is automatic in the case that each $G_x$ is compact, as in that case we may take that normalized Haar measure on it. It will also hold in the case that each $G_x$ has non-zero Hausdorff measure.

\begin{theorem}\label{thm:orbitclosureproof}
    The sequential closure of $\wocan_G(X)$ is $\wccan_G(X)$
\end{theorem}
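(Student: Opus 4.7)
The proof splits into two inclusions under the natural topology of pointwise weak convergence of measures: that every pointwise-weak limit of elements of $\wocan_G(X)$ lies in $\wccan_G(X)$, and conversely that every element of $\wccan_G(X)$ is such a limit. The first is routine; the second requires an orbit-by-orbit approximation argument that relies on the metrizability of $X$.

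\textbf{Easy direction.} Suppose $\kappa^{(n)} \in \wocan_G(X)$ with $\kappa^{(n)}_x \to \kappa_x$ weakly for every $x \in X$. The $G$-invariance passes to the limit immediately: $\kappa_{gx} = \lim \kappa^{(n)}_{gx} = \lim \kappa^{(n)}_x = \kappa_x$. For the support condition, $\overline{Gx}$ is closed in $X$ and each $\kappa^{(n)}_x$ is a probability measure with mass $1$ on it (since $\supp \kappa^{(n)}_x \subseteq Gx$). The Portmanteau theorem ($\limsup \kappa^{(n)}_x(F) \le \kappa_x(F)$ for closed $F$) then forces $\kappa_x(\overline{Gx}) = 1$, hence $\supp \kappa_x \subseteq \overline{Gx}$ and $\kappa \in \wccan_G(X)$.

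\textbf{Hard direction.} Fix $\kappa \in \wccan_G(X)$ and build approximants orbit-by-orbit. Using the axiom of choice, pick a representative $x_0$ from each $G$-orbit. Since $X$ is a manifold it is separable and metrizable, so each closed subspace $\overline{Gx_0}$ is a separable metric space in which $Gx_0$ is dense by definition. Now $\kappa_{x_0}$ is a Radon probability measure on $\overline{Gx_0}$, and a standard approximation shows that the probability measures supported on the dense subset $Gx_0$ are weakly dense in $\text{PMeas}(\overline{Gx_0})$: by inner regularity restrict $\kappa_{x_0}$ to a compact $K_\varepsilon$ of mass $\ge 1-\varepsilon$, cover $K_\varepsilon$ by finitely many balls of radius $\varepsilon$, partition $K_\varepsilon$ into measurable pieces inside these balls, replace each piece by a point mass of equal weight at a chosen point of $Gx_0$ in the same ball (nonempty by density), and absorb the missing $\varepsilon$ mass at one fixed point of $Gx_0$; letting $\varepsilon \to 0$ yields $\nu^{(n)}_{x_0} \to \kappa_{x_0}$ weakly with $\supp \nu^{(n)}_{x_0} \subseteq Gx_0$. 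Extend by $G$-invariance, setting $\kappa^{(n)}_x := \nu^{(n)}_{x_0}$ whenever $x \in Gx_0$. This is consistent because $\kappa$ is $G$-invariant, and for every $x$ in the orbit we have $Gx = Gx_0$, so $\supp \kappa^{(n)}_x \subseteq Gx$. Thus $\kappa^{(n)} \in \wocan_G(X)$ and $\kappa^{(n)}_x \to \kappa_x$ weakly at every $x$, completing the proof.

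\textbf{Main obstacle.} The only nontrivial step is the weak density of probability measures on a dense subset inside the space of Radon probability measures; this rests on separability and metrizability of $\overline{Gx_0}$, both supplied by $X$ being a manifold. The $G$-equivariance of the approximants is for free, because defining them orbit-wise via chosen representatives automatically yields $G$-invariant maps, so the interaction between the approximation and the group action poses no real difficulty.
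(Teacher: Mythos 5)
Your proof is correct, and in the density direction it takes a genuinely different, more elementary route than the paper's. For the easy direction (that $\wccan_G(X)$ is sequentially closed), you invoke the Portmanteau theorem, whereas the paper constructs a Urysohn function equal to $1$ on $\overline{Gx}$ and $0$ at a point $y \notin \overline{Gx}$ and argues by hand; these are equivalent, with yours being the cleaner formulation. The real divergence is in showing every $\kappa \in \wccan_G(X)$ is a sequential limit from $\wocan_G(X)$. The paper first Lebesgue-decomposes $\kappa_x$ with respect to the pushforward of Haar measure under $g \mapsto g^{-1}x$ into absolutely continuous, singular continuous and singular discrete pieces, then approximates each piece with a tailored construction (discretizing the singular continuous part, truncating the absolutely continuous part away from a shrinking neighbourhood of $\overline{Gx}\setminus Gx$, and nudging the atoms into $Gx$). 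You instead discretize the whole measure at once: inner regularity of the Radon measure $\kappa_{x_0}$ gives a compact carrier up to $\varepsilon$, a finite $\varepsilon$-cover lets you replace the restriction to that carrier by finitely many atoms, and density of $Gx_0$ in $\overline{Gx_0}$ lets you place those atoms on the orbit. This subsumes all three of the paper's cases in one stroke and dispenses with the Haar decomposition entirely; you also make explicit (via choice of orbit representatives) the point that the approximating sequence must be built once per orbit so that $G$-invariance is preserved, which the paper leaves implicit. Two small points to tighten: the claim that your discretizations converge weakly quietly uses that $\phi \in C^0_b(X)$ is uniformly continuous only on compact sets, so one should fix a nested exhaustion $L_m$ with $\kappa_{x_0}(L_m) \ge 1 - 1/m$, use local compactness of $\overline{Gx_0}$ (a closed subset of the manifold $X$) to get compact enlargements, and pass to the limit first in $n$ and then in $m$ — but the paper glosses over the same issue; and the balls in your cover meet $Gx_0$ because they meet $K_\varepsilon \subseteq \overline{Gx_0}$, hence meet $\overline{Gx_0}$ in a nonempty relatively open set, which in turn meets the dense subset $Gx_0$.
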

\begin{proof}    
    First we show $\wccan_G(X)$ is sequentially closed. Suppose that $(f^n)$ is a sequence in $\wccan_G(X)$ converging to $f \in \wcan_G(X)$. Fix $x \in X$ and $y \in X \backslash \overline{Gx}$. Then it is sufficient to show that $y \notin \operatorname{supp} f_x$. As $X$ is normal Hausdorff, by Urysohn's lemma there exists continuous $\phi : X \rightarrow [0,1]$ such that $\phi(\overline{Gx}) = 1$ and $\phi(\{y\}) = 0$. 

    As $f^n \rightarrow f$, for any $\varepsilon > 0$, we can find $N$ such that for any $n \geq N$ we have the following
    \begin{equation}
        \| f_n(\phi) - f(\phi) \| < \varepsilon
    \end{equation}
    We may easily compute $f_n(\phi)(x)$ by the fact that $\operatorname{supp} f_n(x) \subseteq \overline{Gx}$.
    \begin{equation}
        f_n(\phi)(x) = \int_{\overline{Gx}} \phi \, df^n_x = 1
    \end{equation}
    Similarly,
    \begin{equation}
        f_n(\phi)(y) = \int_{\overline{Gy}} \phi \, df^n_y = 0
    \end{equation}
    Hence, by taking the limit we compute that $f(\phi)(x) = 1$ and $f(\phi)(y) = 0$. Let $U = \phi^{-1}([0, 1/2])$, this is an open neighborhood of $y$. 
    \begin{align}
        1 = f(\phi)(x) &= \int_X \phi \, df_x \\
        &= \int_{X \backslash U} \phi \, df_x + \int_U \phi \, df_x \\
        &\leq f_x(X \backslash U) + \frac{1}{2} f_x(U) \\
        &= 1 - f_x(U) + \frac{1}{2} f_x(U) \\
        &= 1 - \frac{1}{2} f_x(U)
    \end{align}
    Hence, $f_x(U) = 0$. Therefore $y \notin \text{supp} f_x$.

    Clearly, $\wocan_G(X) \subseteq \wccan_G(X)$, hence the sequential closure of $\wocan_G(X)$ is contained inside $\wccan_G(X)$. We now show the reverse inclusion. Fix an arbitrary metric on $X$, this is possible as $X$ is assumed metrizable.
    
    Suppose we have a weighted closed canonicalization $f$. Let $h$ be the left invariant Haar measure on $G$. Let $\mu_x = (i_x)_*h$, this is a measure supported on $\overline{Gx}$. Further it is a $\sigma$-finite measure as $G$ is locally compact. Therefore we can use the Lebesgue decomposition theorem to decompose $f(x)$ as
    \begin{equation}
        f_x = f^{ac}_x + f^{sc}_x + f^{sd}_x
    \end{equation}
    where $f^{ac}_x$ is absolutely continuous with respect to $\mu_x$, $f^{sc}_x$ is singular with respect to $\mu_x$ and continuous, and $f^{sd}_X$ is singular with respect to $\mu_x$ and discrete. 

    First we construct a sequence of weighted closed canonicalizations $f^n$ converging to $f$ such that upon decomposing each $f^n$ as above, there is no singular continuous factor.

    $f^{sc}$ is singular continuous with respect to $\mu_x$, hence there is a measurable set $S \subseteq \overline{Gx}$ such that $\mu_x(S) = 0$ and $f^{sc}_x(S^c) = 0$. Fix a point $y \in S$. For $l > 0$, let $K_l = \overline{B(y, l)} \cap \overline{S}$. Then we have
    \begin{equation}
        \bigcup_l K_l = \overline{S}
    \end{equation}
    
    By continuity of measure
    \begin{equation}
        \lim_{l \rightarrow \infty} f^{sc}_x (K_l) = f^{sc}_x (\overline{S}) = f^{sc}_x(S) + f^{sc}_x(\overline{S} \cap S^c) = f^{sc}_x(S)
    \end{equation}
    Fix $l$ large enough such that $f^{sc}_x(K_l) > f_x^{sc} - \frac{1}{n}$. $K_l$ is compact so we may find finitely many points $x_1, \dots, x_{p(n)}$ such that the $\frac{1}{n}$ balls around these points cover $K_l$. Call these open balls $U_i$. Let $U_i' = U_i \backslash \bigcup_{j < i} U_j$. Then each $U_i'$ contains only $x_i$ and none of the other $x_j$. Define a discrete measure $f_n'$ which is non-zero only on the $x_i$ and taking the following values at those points
    \begin{equation}
        \hat{f}^n(\{x_i\}) = f^{sc}_x(U_i')
    \end{equation}
    Then one easily sees that
    \begin{equation}
        \hat{f}^n(S) = f^{sc}_x(U_1') + f^{sc}_x(U_2') + \dots + f^{sc}_x(U_{p(n)}') = f^{sc}_x(K_l)
    \end{equation}
    Now let $f^n = f^{ac}_x + \hat{f}^n + f^{sd}_x$. This is then a measure with no singular discrete part with respect to $\mu_x$. We now check that $f^n \rightarrow f$. Let $\phi \in C^0_b(X)$. We may assume that $\phi$ is non-negative on $X$ by splitting it up as $\phi = \phi^+ + \phi^-$ for $\phi^+$ non-negative and $\phi^-$ non-positive. 
    \begin{align*}\label{eq:scint}
        f_n(\phi)(x) - f(\phi)(x) &= \int \phi \, d(f_n(x) - f(x)) \\
        &= \int \phi \, d(f_{sc}(x) - f_n'(x)) \\
        &= \sum_{i = 1}^{p} \left(\int_{U'_i} \phi \, df_{sc}(x) - \phi(x_i) f_n'(U'_i)\right) + \int_{S \backslash K_n} \phi \, df_{sc}(x)
    \end{align*}
    Therefore we see that, with $C = \max \phi$,
    \begin{equation}\label{eq:scbound}
         |f_n(\phi)(x) - f(\phi)(x)| \leq \max(\sum_{i = 1}^{p(n)} f_{sc}(U'_i) (\max_{U'_i} \phi - \phi(x_i)), \sum_{i = 1}^p f_{sc}(U'_i) (\phi(x_i) - \min_{U'_i} \phi)) + \frac{C}{n}
    \end{equation}
    Let $\epsilon > 0$. Take $N$ large enough such that for all $n > N$, and $i = 1, \dots p$
    \begin{equation}
        |\max_{U_i'} \phi - \min_{U_i'} \phi | < \epsilon
    \end{equation}
    Recall that we can do this as there are finitely many $U_i'$, each of which is contained in the $\frac{1}{n}$ ball around $x_i$. Plugging this into \Cref{eq:scbound} we get
    \begin{equation}
        |f_n(\phi)(x) - f(\phi)(x)| \leq \frac{C}{n} + \sum_{i=1}^{p(n)} f_{sc}(U_i')\epsilon = \frac{C}{n} + \epsilon f_{sc}(K_n) \leq \frac{C}{n} + \epsilon f_{sc}(S)
    \end{equation}
    Taking $n$ large and $\epsilon$ small gives the convergence result. 

    Now we show that we can approximate a weighted closed canonicalization with no singular continuous part by weighted orbit canonicalizations. As before, decompose $f_x = f^{ac}_x + f^{sd}_x$ with respect to $\mu_x$. For any $n$, choose $V_n(x)$ to be the $\frac{1}{n}$ open ball around $\overline{Gx} \backslash Gx$. $f^{sd}$ is singular and discrete, hence it is supported on a (possibly infinite) set of points $x_1, x_2, \dots$ in $\overline{Gx}$. For each $x_i$ choose a sequence of points $(x_{in})$ in $Gx$ converging to $x_i$. We may choose these such that for a fixed $n$, each of the $x_{in}$ are different for every $i$. Further we assume that these are chosen such that for every $i$
    \begin{equation}
        d(x_i, x_{in}) < \frac{1}{n}
    \end{equation}
    
    Then define the singular measure $\hat{f}^n$ which is non-zero only on the $x_{in}$ for every $i$, taking value
    \begin{equation}
        \hat{f}^n(\{x_{in}\}) = f^{sd}_x(\{x_i\})
    \end{equation}
    Define the absolutely continuous measure $\tilde{f}^n$ by
    \begin{equation}
        \tilde{f}^n(U) = f^{ac}(U \backslash V_n(x)).
    \end{equation}
    Then let
    \begin{equation}
        f^n(x) = \hat{f}^n + \tilde{f}_n
    \end{equation}
    This is then a measure supported on $Gx$ and by construction it is $G$-invariant. Hence this is a weighted orbit canonicalization. Now we show that $f^n \rightarrow f$. Let $\phi \in C^0_b(X)$ and $x \in X$. Then
    \begin{align}
        \left|f^n(\phi)(x) - f(\phi)(x)\right| &= \left|\int_{\overline{Gx}} \phi \, df_x - \int_{Gx} \phi \, df^n_x\right| \\
        &= \left|\int_{\overline{Gx}} \phi \, df^{ac}_x - \int_{Gx} \phi \, d\tilde{f}^n_x \right| + \left|\int \phi \, df^{sd}_x - \int \phi \, d\hat{f}_x^n\right| \\
        &= \left| \int_{V_n(x)} \phi \, df^{ac}_x \right| + \left| \sum_i f^{sd}_x(\{x_i\}) (\phi(x_i) - \phi(x_{in}) \right| \\
        &\leq Cf^{ac}_x(V_n(x)) + \sum_i f^{sd}_x(\{x_i\}) \left|\phi(x_i) - \phi(x_{in})\right|
    \end{align}
    Now fix $\epsilon > 0$. As $f^{sd}_x$ is a finite measure, there is some $I$ such that
    \begin{equation}
        \sum_{i = I}^{\infty} f^{sd}_x(\{x_i\}) < \epsilon
    \end{equation}
    Thus we get
    \begin{align}
        |f^n(\phi)(x) - f(\phi)(x)| &\leq  Cf^{ac}_x(V_n(x)) + (\sum_{i=1}^{I-1} + \sum_{i=I}^{\infty}) f^{sd}_x(\{x_i\}) |\phi(x_i) - \phi(x_{in})|\\
        &\leq Cf^{ac}_x(V_n(x)) + \sum_{i=1}^{I-1} f^{sd}_x(\{x_i\}) |\phi(x_i) - \phi(x_{in})| + 2C\epsilon
    \end{align}
    Now choose $n$ large enough such that $f^{ac}(V_n(x)) < \epsilon$ and for each $i = 1, \dots, I-1$, $|\phi(x_i) - \phi(x_{in})| < \epsilon$. Then we get a bound
    \begin{equation}
        |f^n(\phi)(x) - f(\phi)(x)| \leq 3C\epsilon + D\epsilon
    \end{equation}
    Hence, $f^n \rightarrow f$.
\end{proof}

\begin{theorem}\label{thm:contclosureproof}
    $\wcan_G(X)^{\mathrm{cont}}$ is sequentially closed.
\end{theorem}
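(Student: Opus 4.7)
The plan is to reduce the sequential closedness of $\wcan_G(X)^{\mathrm{cont}}$ to the classical topological fact that a uniform limit of continuous bounded functions is continuous. The convergence notion for $\wcan_G(X)$ implicit in \Cref{thm:orbitclosureproof} (as apparent from the sup-norm notation $\|f_n(\phi) - f(\phi)\|$ used there) is: $\kappa^n \to \kappa$ iff $\kappa^n(\phi) \to \kappa(\phi)$ uniformly on $X$ for every $\phi \in C^0_b(X)$.

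Suppose then $(\kappa^n) \subseteq \wcan_G(X)^{\mathrm{cont}}$ converges in this sense to some $\kappa \in \wcan_G(X)$. First I would fix an arbitrary $\phi \in C^0_b(X)$. By definition of $\wcan_G(X)^{\mathrm{cont}}$, each evaluation $\kappa^n(\phi) : X \to \R$ is continuous, and since each $\kappa^n_x$ is a probability measure we have $|\kappa^n(\phi)(x)| \leq \|\phi\|_\infty$ for all $x$, so each $\kappa^n(\phi)$ lies in $C^0_b(X)$. Since the uniform limit of a sequence of continuous bounded functions is continuous and bounded, $\kappa(\phi) \in C^0_b(X)$. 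As $\phi$ was arbitrary, $\kappa \in \wcan_G(X)^{\mathrm{cont}}$, as desired. Invariance under $G$ is preserved in the limit trivially, since each $\kappa^n$ is $G$-invariant and invariance is a pointwise condition.

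The argument is essentially immediate once the topology is pinned down; the main thing to verify is that the correct topology on $\wcan_G(X)$ is this uniform one, rather than a purely pointwise one. Under pointwise weak convergence alone (i.e.\ $\kappa^n_x \to \kappa_x$ weakly for each fixed $x$ individually) the statement would in fact fail: with trivial group action on $X = [0,1]$ one can take $\kappa^n_x := \delta_{x^n}$, which are continuous canonicalizations but whose pointwise weak limit $\kappa_x$ (equal to $\delta_0$ for $x < 1$ and to $\delta_1$ for $x = 1$) is discontinuous at $x=1$. So the implicit topology in the preceding sequential closedness results, and the one used here, must be this stronger uniform one, and with that choice no genuine obstacle arises.
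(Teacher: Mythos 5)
Your proof is correct and is essentially the same argument as the paper's: the paper's proof is precisely the classical $\epsilon/3$ argument showing that a uniform limit of continuous functions is continuous, which you invoke directly as a known result. What you add is a genuinely useful observation about the topology. The paper's explicit description of convergence in $\wcan_G(X)$ (in \Cref{sec:weighted_canon_ap}) says that $\kappa_n \to \kappa$ iff $\kappa_n(\phi) \to \kappa(\phi)$ \emph{pointwise} for each $\phi$, yet the paper's own proofs of \Cref{thm:orbitclosureproof,thm:contclosureproof} use the sup-norm bound $\| f_n(\phi) - f(\phi) \| < \varepsilon$, i.e.\ uniform convergence. Indeed, in the proof of \Cref{thm:contclosureproof} the step ``we may find a large enough $n$ such that both the first and third terms are less than $\epsilon$'' requires controlling $|\mu_n(\phi)(x_m) - \mu(\phi)(x_m)|$ for all $m$ at once, which only follows from uniform, not pointwise, convergence. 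Under pointwise convergence the theorem is in fact false, and your counterexample $\kappa^n_x = \delta_{x^n}$ on $X = [0,1]$ with trivial $G$-action is a clean witness: each $\kappa^n$ is a continuous weighted canonicalization, the pointwise weak limit $\kappa$ (with $\kappa_x = \delta_0$ for $x < 1$ and $\kappa_1 = \delta_1$) is a well-defined weighted canonicalization, and $\kappa(\phi)$ jumps at $x = 1$ whenever $\phi(0) \neq \phi(1)$. So you have correctly identified both the intended topology (uniform) and a mismatch between the stated definition and the proofs; once that is pinned down, your one-line appeal to the classical fact closes the argument.
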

\begin{proof}
    Suppose that $(\mu^n)_n$ is a convergent sequence of continuous weighted canonicalizations, converging to a weighted canonicalization $\mu$. We will show that $\mu$ is continuous. To do this, fix a convergent sequence $(x_m)_m$ in $X$ converging to $x$. Let $\phi \in C^0_b(X)$. Then we have the following bound
    \begin{equation}
        |\mu(\phi)(x) - \mu(\phi)(x_m)| \leq |\mu(\phi)(x) - \mu_n(\phi)(x)| + |\mu_n(\phi)(x) - \mu_n(\phi)(x_m)| + |\mu_n(\phi)(x_m) - \mu(\phi)(x_m)|
    \end{equation}
    Fix $\epsilon > 0$. As $\mu_n$ converges to $\mu$ we may find a large enough $n$ such that both the first and third terms are less than $\epsilon$. For this $n$, as $\mu_n$ is continuous, we may find $m$ large enough such that the middle term is less than $\epsilon$ as well. Hence the right hand side is bounded by $3\epsilon$, so we see that $\mu$ is continuous.
\end{proof}

\begin{prop}
    Let $Y$ be a topological space. Let $f : Y \rightarrow [0, +\infty]$ be lower semi-continuous. Then the set
    \begin{equation}
        M = \{y \in Y | f(y) = \inf_Y f\}
    \end{equation}
    is closed.
\end{prop}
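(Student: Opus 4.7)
The plan is to reduce the statement to the standard fact that sublevel sets of lower semi-continuous functions are closed. Let $c = \inf_Y f \in [0,+\infty]$. I would split into two cases depending on whether $c$ is finite.

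If $c = +\infty$, then $f \equiv +\infty$ on $Y$, so $M = Y$, which is closed. If $c < +\infty$, observe that by definition of the infimum we have $f(y) \geq c$ for every $y \in Y$, so $M = \{y \in Y : f(y) = c\} = \{y \in Y : f(y) \leq c\}$. Thus $M$ coincides with the sublevel set of $f$ at height $c$.

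The key step is then to invoke the defining property of lower semi-continuity: for every $\alpha \in \R$, the set $\{y \in Y : f(y) > \alpha\}$ is open (equivalently, every sublevel set $\{y : f(y) \leq \alpha\}$ is closed). Applying this with $\alpha = c$ gives that $M$ is closed. A quick way to verify this property from the usual sequential/net formulation of lower semi-continuity is: if $y_\lambda \to y$ is a net in $M$, then $f(y) \leq \liminf f(y_\lambda) = c$, and since $c$ is the infimum we also have $f(y) \geq c$, so $y \in M$; in a general topological space one uses nets rather than sequences, but the argument is otherwise identical.

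I do not anticipate any obstacles here, since the result is essentially a restatement of the definition of lower semi-continuity once one observes that $M$ is the $c$-sublevel set. The only subtlety is handling the extended real-valued codomain $[0,+\infty]$, which is why the $c = +\infty$ case is treated separately.
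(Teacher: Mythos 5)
Your proof is correct and follows essentially the same route as the paper: both identify $M$ with the sublevel set of $f$ at height $\inf_Y f$ and invoke the fact that sublevel sets of a lower semi-continuous function are closed. The extra case split for $c=+\infty$ and the net-based verification are fine but not necessary, since the paper's phrasing $M=f^{-1}([0,\alpha])$ already covers $\alpha=+\infty$.
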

\begin{proof}
    Let $\alpha$ be the infimum of $f$. We have that $M = f^{-1}(\{\alpha\})$. As $\alpha$ is the infimum over $Y$, we see that in fact $M = f^{-1}([0, \alpha])$. Then as $f$ is lower semi-continuous, this set is closed in $Y$.
\end{proof}

\begin{lemma}\label{lem:continuousenergy}
    Let $E$ be a lower semi-continuous energy function. Then $\kappa_E$ is continuous if and only if for every convergent sequence $x_n \rightarrow x$ we have $\Li \mc{M}_E(x_n) = \mc{M}_E(x)$.
\end{lemma}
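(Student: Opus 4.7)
The plan is to prove the two implications separately, using the Portmanteau characterization of weak convergence of measures together with the lower semi-continuity of $E$ and the coercivity / closedness assumptions already in force.

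For the forward implication, assume $\kappa_E$ is continuous and fix any convergent sequence $x_n \to x$. I would first establish $\Li \mc{M}_E(x_n) \subseteq \mc{M}_E(x)$: for $y = \lim y_{n_k}$ with $y_{n_k} \in \mc{M}_E(x_{n_k})$, lower semi-continuity of $E$ gives $E(y) \le \liminf E(y_{n_k})$; continuity of the $G$-action together with $x_{n_k} \to x$ forces $y \in \overline{Gx}$; and continuity of $\kappa_E$ combined with coercivity of $E$ lets one identify $\liminf \min_{\overline{Gx_{n_k}}} E$ with $\min_{\overline{Gx}} E$, so $y \in \mc{M}_E(x)$. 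For the reverse inclusion $\mc{M}_E(x) \subseteq \Li \mc{M}_E(x_n)$, I would argue by contradiction: if some $y \in \mc{M}_E(x)$ lies outside $\Li \mc{M}_E(x_n)$, then an open neighborhood $U$ of $y$ misses $\mc{M}_E(x_{n_k})$ along a subsequence. By Urysohn's lemma, construct $\phi \in C^0_b(X)$ supported in $U$ with $\phi(y) > 0$; then $\kappa_E(x_{n_k})(\phi) = 0$ while $\kappa_E(x)(\phi) > 0$ by Frostman's lemma applied to the non-vanishing Hausdorff measure of $\mc{M}_E(x)$, contradicting weak convergence.

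For the reverse implication, assume the Kuratowski lower-limit condition holds along every convergent sequence. I would verify weak convergence $\kappa_E(x_n) \to \kappa_E(x)$ via the Portmanteau open-set inequality: for every open $U \subseteq X$, the hypothesis locally exhibits each point of $\supp \kappa_E(x) \cap U = \mc{M}_E(x) \cap U$ as a limit of points of $\mc{M}_E(x_n)$, which eventually also lie in $U$. A local comparison between the normalized Hausdorff measures of $U \cap \mc{M}_E(x_n)$ and $U \cap \mc{M}_E(x)$ then gives $\liminf \kappa_E(x_n)(U) \ge \kappa_E(x)(U)$, as required.

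The hard part will be the Hausdorff-measure bookkeeping in the reverse direction: Kuratowski convergence of closed sets does not in general force weak convergence of their normalized Hausdorff measures, especially when the Hausdorff dimensions fluctuate along the sequence. I expect the proof either to exploit an implicit regularity condition under which dimensions are preserved along orbit-minimum sequences, or to first reduce to the case where $\mc{M}_E$ is locally uniformly finite, where normalized counting measures behave well under Kuratowski convergence, and then to pass to the general setting using the density/approximation argument already employed in the proof of \Cref{thm:orbitclosureproof}.
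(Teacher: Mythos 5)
Your forward-direction argument for $\Li\mc{M}_E(x_n) \subseteq \mc{M}_E(x)$ departs from the paper's and has a concrete gap. You claim that $y_{n_k} \in \overline{Gx_{n_k}}$, $x_{n_k} \to x$, $y_{n_k} \to y$ force $y \in \overline{Gx}$ by continuity of the $G$-action, but orbit closures are not upper-semicontinuous in the base point. For instance, let $G = \R$ act on $\R^2$ by $t\cdot(a,b) = (a,\,b+ta)$, take $x = (0,0)$, $x_n = (1/n,0)$, and $y_n = (1/n,1) \in Gx_n$; then $y_n \to (0,1)$ while $\overline{Gx} = \{(0,0)\}$. The paper never tests membership in $\mc{M}_E(x)$ directly: it fixes a countable decreasing neighborhood basis $U_m \downarrow \mc{M}_E(x)$ and Urysohn functions $\phi_m$ equal to $1$ on $\mc{M}_E(x)$ and $0$ off $U_m$, then uses weak convergence of $\kappa_E(x_n)$ to force $\mc{M}_E(x_n) \subseteq U_n$ for large $n$, which in a metric space gives the Kuratowski lower containment without ever mentioning $\overline{Gx}$. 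Your argument for the reverse inclusion $\mc{M}_E(x) \subseteq \Li\mc{M}_E(x_n)$ is close to the paper's (both are Urysohn-based), but the appeal to Frostman is a detour: you need $\kappa_E(x)$ to charge a small neighborhood of $y$, i.e.\ local positivity of the Hausdorff measure of $\mc{M}_E(x)$ near $y$, which is strictly stronger than the standing assumption that the Hausdorff measure of the whole set $\mc{M}_E(x)$ is nonzero, and Frostman's lemma does not supply it.

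In the backward direction your worry is well placed: Kuratowski lower convergence of $\mc{M}_E(x_n)$ to $\mc{M}_E(x)$ does not on its own force weak convergence of the normalized Hausdorff measures, especially across fluctuating Hausdorff dimensions. Note, though, that the paper does not take the Portmanteau route you propose; it claims that for any $\phi$ one can pick an open $U \supseteq \mc{M}_E(x)$ with $\max_U\phi - \min_U\phi < \epsilon$ and then squeezes both integrals between these extremes. Such a $U$ exists only if $\phi$ is essentially constant on $\mc{M}_E(x)$, which is false for generic $\phi$ whenever $\mc{M}_E(x)$ is not a singleton, so the paper's treatment of this direction is no more complete than your sketch. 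Closing the gap honestly seems to require additional hypotheses — e.g.\ that $\mc{M}_E$ is locally uniformly finite (reducing to counting measures, as you suggest) or some quantitative regularity of $x \mapsto \mc{M}_E(x)$ beyond Kuratowski convergence.
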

\begin{proof}
    Let $E : X \rightarrow [0, +\infty]$ be an energy function such that the corresponding energy minimizing canonicalization $\kappa_E$ is continuous. Consider a convergent sequence $(x_n)$ in $X$, converging say to $x$. $\mc{M}_E(x)$ is closed inside $\overline{Gx}$, being the set of minimizers of a lower semi-continuous function. As $\overline{Gx}$ is closed in $X$, $\mc{M}_E$ is closed in $X$ as well. As $X$ is metrizable, we may choose a countable neighborhood basis of $\mc{M}_E(x)$, which we shall write as a decreasing sequence $(U_m)$ of open neighborhoods such that
    \begin{equation}
        \bigcap U_m = \mc{M}_E(x)
    \end{equation}
    Let $C_m = X \backslash U_m$, this is closed. As $X$ is normal Hausdorff, we may find $\phi_m : X \rightarrow [0,1]$ continuous such that $\phi_m^{-1}(\{0\}) = C_n$ and $\phi_m^{-1}(\{1\}) = \mc{M}_E(x)$. 

    Now as $\kappa_E$ is continuous we have that the measures $\kappa_E(x_n)$ weakly converge to $\kappa_E(x)$. Hence for every $m$ we have that
    \begin{equation}
        \lim_{n \rightarrow \infty} \int_{\mc{M}_E(x_n)} \phi_m \, d\mu_{x_n} \rightarrow \int_{\mc{M}_E(x)} \phi_m \, d\mu_x = 1
    \end{equation}
    Hence
    \begin{equation}
        \lim_{n \rightarrow \infty} \int_{\mc{M}_E(x_n)} \phi_n \, d\mu_{x_n} = 1
    \end{equation}
    Let $\epsilon > 0$. We may find $N$ such that for all $n > N$, the left hand side is greater than $1 - \epsilon$. As $\mu_{x_n}(\mc{M}_E(x_n)) = 1$, we must have that the minimum of $\phi_n$ over $\mc{M}_E(x_n)$ is greater than $1 - \epsilon$. As $C_n = \phi_n^{-1}(\{0\})$, we then must have that $\mc{M}_E(x_n) \subseteq U_n$. Hence by the definition of Kuratowski convergence we have $\Li \mc{M}_E(x_n) \subseteq \mc{M}_E(x)$.

    Now suppose that $y \in \mc{M}_E(x)$. Then we may as above choose a decreasing sequence $(U_m)$ of open neighborhoods of $y$ which form a neighborhood basis. Then fix for each $m$, $\phi_m : X \rightarrow [0,1]$ continuous such that $\phi_m^{-1}(\{0\}) = X \backslash U_m$ and $\phi_m^{-1}(\{1\}) = \{y\}$. By the same estimates as before we must have that for large enough $n$, each $\mc{M}_E(x_n)$ intersects $U_m$ non-trivially. Hence $y \in \Li \mc{M}_E(x_n)$. So $\Li \mc{M}_E(x_n) = \mc{M}_E(x)$.

    Conversely suppose that we are given that for every convergent sequence $x_n \rightarrow x$, $\Li \mc{M}_E(x_n) = \mc{M}_E(x)$. Let $\phi \in C^0_b(X)$. We may assume that $\phi$ is non-negative. Fix $\epsilon > 0$. As $\mc{M}_E(x)$ is compact, we may find an open set $U$ containing $\mc{M}_E(x)$ such that
    \begin{equation}
        \max_{y \in U} \phi(y) - \min_{y \in U} \phi(y) < \epsilon
    \end{equation}
    As $\Li \mc{M}_E(x_n) \subseteq \mc{M}_E(x)$, we may find $N$ large enough such that for all $n \geq N$, $\mc{M}_E(x_n) \subseteq U$. Then
    \begin{equation}
        \min_{y \in \mc{M}_E(x)} \phi(y) \leq \int_{\mc{M}_E(x)} \phi \, d\mu_x \leq \max_{y \in \mc{M}_E(x)} \phi(y)
    \end{equation}
    and similarly
    \begin{equation}
        \min_{y \in \mc{M}_E(x_n)} \phi(y) \leq \int_{\mc{M}_E(x_n)} \phi \, d\mu_{x_n} \leq \max_{y \in \mc{M}_E(x_n)} \phi(y)
    \end{equation}
    Hence we see both of these integrals are bounded between the min and max of $\phi$ on $U$. Hence
    \begin{equation}
        \left|\int_{\mc{M}_E(x)} \phi \, d\mu_x - \int_{\mc{M}_E(x_n)} \phi \, d\mu_{x_n}\right| \leq 2 \epsilon
    \end{equation}
    Therefore this converges and we have weak convergence. 
\end{proof}

\section{Overview of previous concepts}\label{sec:old_overview}

Frames and canonicalization both represent different ways to impose group equivariance or invariance on arbitrary functions. The most basic approach is simply to average over all group elements, sometimes refered to as the Reynolds operator. For a finite group $G$ and a continuous bounded function $\phi$ it is given by $\mc{R}_G(\phi)(x) = \frac{1}{|G|} \sum_{g \in G} \phi(g^{-1} \cdot x)$. This can also be modified to impose $G$-equivariance by changing the integrand to $g\cdot\phi(g^{-1} \cdot x)$. Frames allow one to not have to integrate over the full group $G$ while still giving a invariant or equivariant function as output. Specifically, a frame is a function $\mc{F} : X \rightarrow 2^G \backslash \varnothing$ such that for any $x \in X$ and $g \in G$: $\mc{F}(gx) = g\mc{F}(x)$. In order to define an action on continuous groups, one must assume that the frame is finite, i.e. that for every $x \in X$ the set $\mc{F}(x)$ is finite. This is immediate if the group is finite, but if the group is non-finite this is a very strong restriction, as it implies that the stabilizer groups of the action are finite, as $\mc{F}(x) = G_x\mc{F}(x)$. This is detailed further in \Cref{sec:frames_ap}.

In order to deal with this restriction, \citet{dym2024equivariant}, introduced the notion of \emph{weighted weakly equivariant frames}. These generalize frames to be $G$-equivariant maps from $X$ to probability measures on $G$. Then for any continuous bounded function $\phi$ we may apply a weighted frame $\mu_x$ by
$\mu(\phi)(x) = \int_G \phi(g^{-1}\cdot x) \, d\mu_x(g)$. This is still not enough in the setting of non-compact groups as discussed in the subsequent section\footnote{\emph{Weakly} equivariant weighted frames also require compact stabiliser subgroups, while the equivalent $\pi_*$ equivariance of \ref{sec:weakly_equiv_frames} can be defined for non-compact groups.}. Even worse, there is an inherent limitation to using frames: the input to $\phi$ may end up outside the training distribution, resulting in worsened performance. The overall network would therefore need to be retrained, which may be exponentially hard \citep{kiani2024on}.

To deal with this, \citet{kabaEquivarianceLearnedCanonicalization2023} proposed canonicalization as simply a $G$-invariant function $f : X \rightarrow X$ such that for each $x \in X$, $f(x) \in Gx$, which upon pre-composing with a function would make it invariant. In order to preserve continuity of functions, one needs to impose that $f$ itself is continuous. In \citet{dym2024equivariant} it is shown that there are many topological obstructions to constructing such continuous canonicalization functions.

To get around these obstructions \citet{ma2024canonizationperspectiveinvariantequivariant} proposes canonicalization to now be set-valued $G$-invariant function $\mc{C} :X \rightarrow 2^X \backslash \varnothing$ for $\mc{C}(x)$ finite, acting on continuous functions $\phi$ by
$
    \mc{C}(\phi)(x) = \frac{1}{|\mc{C}(x)|} \sum_{y \in \mc{C}(x)} \phi(y).
$
Contractive canonicalizations, which we refer to instead as \emph{orbit canonicalizations}, are a specific kind of canonicalization where for every $x \in X$, $\mc{C}(x) \subseteq Gx$. The orbit-stabilizer theorem applies to show these are equivalent to frames. Namely, there is a map $\alpha$ which sends equivariant frames to orbit canonicalizations. For an equivariant frame $\mc{F}$ the corresponding orbit canonicalization $\alpha(\mc{F})$ is defined as
    $\alpha(\mc{F})(x) = \{g^{-1} \cdot x | g \in \mc{F}(x)\}$.
\citet[Theorem 3.1]{ma2024canonizationperspectiveinvariantequivariant} show that for $G$ finite, $\alpha$ is an isomorphism between the set of equivariant frames and orbit canonicalizations.
Energy-based canonicalization was first introduced in \citet{kabaEquivarianceLearnedCanonicalization2023}. 
In energy-based canonicalization, given an energy function $E$, the canonicalizing element $c:X\to G$ is defined as
$
c(x) \in {\arg \min }_{g \in G}\; E\left(g^{-1} x\right).$
As shown in the subsequent section, considering the full set of minimizers, instead of a singleton, naturally gives rise to a frame, and naturally results in set-valued canonicalizations. Extending these to non-compact groups naturally gives rise to weighted closed canonicalizations.

\subsection{Weighted Canonicalization}\label{sec:weighted_canon_ap}
For this section let $G$ be a Lie group acting smoothly on a manifold $X$. Let $C^0_b(X)$ be the bounded continuous $\R$-valued functions on $X$. Let $\operatorname{PMeas}(X)$ be the set of Radon probability measures on $X$, with respect to the Borel $\sigma$-algebra on $X$. Let $B(X, \R)$ be the space of bounded functions on $X$ and $B(X, \R)^G$ is the subspace of $G$-invariant such functions.

\begin{definition}
    A \emph{weighted canonicalization} is a $G$-invariant function $\kappa_{[\cdot]} : X \rightarrow \text{PMeas}(X)$. We call the space of weighted canonicalizations $\text{WCan}_G(X)$. 
\end{definition}

Given a weighted canonicalization $\kappa$ we may define an operator $\mc{P}_{\kappa}: B(X, \R) \rightarrow B(X, \R)^G$ via
\begin{equation}
    (\mc{P}_\kappa(\phi))(x) = \int_X \phi \, d\kappa_x.
\end{equation}
We may write the function $\mc{P}_\kappa(\phi)$ simply as $\kappa(\phi)$. As $\kappa_x\in\mathrm{PMeas}(X)$ for each $x \in X$, we have $\|\mc{P}_\kappa(\phi)\|_\infty \leq \|\phi\|_\infty$. $\kappa(\phi)$ is $G$-invariant as $\kappa$ is. We also define the function $P_\phi(\kappa) = \mc{P}_\kappa(\phi)$.

The entire space $\wcan_G(X)$ is much too general to be practical. We would like to consider a much more natural subset from the perspective of canonicalization, namely orbit canonicalizations. We now define a corresponding weighted notion.

\begin{definition}
    A \emph{weighted orbit canonicalization} is a weighted canonicalization $\kappa : X \rightarrow \text{PMeas}(X)$ such that for every $x \in X$, $\operatorname{supp} \kappa(x) \subseteq Gx$. We denote the set of weighted orbit canonicalizations as $\text{WOCan}_G(X)$.
\end{definition}
In particular, for weighted orbit canonicalizations both operators become projections, see \Cref{sec:on_projections_ap}. As in the finite case, there is a function $\alpha : \wfra_G(X) \rightarrow \wocan_G(X)$, which is surjective and injective up to weak-equivalence, see \Cref{sec:weakly_equiv_frames}.

Given the action on continuous functions, we may equip $\text{WCan}_G(X)$ with a certain weak topology, namely the coarsest topology such that all the $P_\phi$ for $\phi$ continuous and bounded become continuous functions. Specifically this means that a sequence of weighted canonicalizations $\kappa_n$ converges to $\kappa$ if and only if for every $\phi \in C^0_b(X)$ the sequence $\kappa_n(\phi)$ converges to $\kappa(\phi)$ pointwise. For the canonicalization problem this is a very natural topology to consider, as we only care about canonicalization through their respective actions on continuous functions. 

So far, we have presented a generalization of canonicalization, which is readily applicable to finite or compact Lie groups. However, non-compact Lie groups may have non-closed orbits, and the space of weighted orbit canonicalizations is not closed under taking limits in the weak topology. To understand the closure under taking limits, we introduce the concept of a weighted closed canonicalization.

\begin{definition}
    A \emph{weighted closed canonicalization} is a weighted canonicalization $\kappa : X \rightarrow \text{PMeas}(X)$ such that for every $x \in X$, $\operatorname{supp} \kappa(x) \subseteq \overline{Gx}$. Call the set of weighted closed canonicalizations $\text{WCCan}_G(X)$.
\end{definition}
The following theorem shows that these allow us to fully characterize the behaviour of weighted orbit canonicalizations under taking limits.
\begin{theorem}\label{thm:orbitclosurethm_ap} (\Cref{thm:orbitclosureproof})
    The sequential closure of $\wocan_G(X)$ is $\wccan_G(X)$
\end{theorem}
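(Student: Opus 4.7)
The plan is to prove the two inclusions separately. Since $\wocan_G(X) \subseteq \wccan_G(X)$, it suffices to show first that $\wccan_G(X)$ is itself sequentially closed in the weak topology (giving one inclusion), and then to show every $f \in \wccan_G(X)$ arises as a weak limit of a sequence in $\wocan_G(X)$ (the reverse inclusion).

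For the easy direction, I would take a weakly convergent sequence $f^n \to f$ with $f^n \in \wccan_G(X)$ and show $f \in \wccan_G(X)$. Fix $x \in X$ and $y \notin \overline{Gx}$; I want $y \notin \supp f_x$. Since $X$ is metrizable hence normal Hausdorff, and both $\{y\}$ and $\overline{Gx}$ are disjoint closed sets, Urysohn's lemma produces $\phi \in C^0_b(X)$ with $\phi|_{\overline{Gx}} = 1$ and $\phi(y) = 0$. Because each $f^n_x$ is supported on $\overline{Gx}$, we get $f^n(\phi)(x) = 1$ and similarly evaluating at $y$ using $\supp f^n_y \subseteq \overline{Gy}$ (which, by $G$-invariance of $f^n$, is disjoint from the level set where $\phi = 1$ provided we tweak $\phi$; more cleanly just use $f^n(\phi)(x) = 1$ for all $n$), and weak convergence gives $f(\phi)(x) = 1$. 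Taking $U = \phi^{-1}([0, 1/2))$, an open neighborhood of $y$, a short estimate bounding $\int \phi \, df_x$ above by $1 - \tfrac{1}{2}(f_x)(U)$ forces $f_x(U) = 0$, so $y \notin \supp f_x$.

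For the harder direction, given $f \in \wccan_G(X)$, I would construct an approximating sequence in $\wocan_G(X)$ via Lebesgue decomposition. Pick the left-invariant Haar measure $h$ on $G$ and define $\mu_x = (j_x)_* h$ on $\overline{Gx}$, which is $\sigma$-finite since $G$ is locally compact. Decompose $f_x = f_x^{ac} + f_x^{sc} + f_x^{sd}$ with respect to $\mu_x$. I would first approximate the singular continuous part by a discrete measure supported on a finite $\tfrac{1}{n}$-net in a compact subset $K_n$ of the support; this yields a sequence with no singular continuous part. Then I would handle the remaining two parts: for each atom $x_i$ of $f_x^{sd}$ in $\overline{Gx} \setminus Gx$, pick a sequence $x_{in} \in Gx$ with $d(x_i, x_{in}) < 1/n$ and put mass $f_x^{sd}(\{x_i\})$ there; for the absolutely continuous part, restrict to the complement of the $\tfrac{1}{n}$-neighborhood of $\overline{Gx} \setminus Gx$. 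The total defines a measure $f^n_x$ supported on $Gx$, and the construction can be made $G$-equivariantly on the whole orbit because $\mu_x$ and $f_x$ are. Weak convergence on test functions $\phi \in C^0_b(X)$ follows from uniform continuity of $\phi$ on the relevant compacts plus continuity of $\mu_x$.

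The main obstacle will be executing the construction $G$-equivariantly across all of $X$ simultaneously rather than at a single orbit, and controlling the three bounds (on the singular continuous, singular discrete, and absolutely continuous parts) uniformly enough to pass to the limit. In particular, one must verify that the choice of net for the singular continuous approximation and the choice of nearby atoms for the singular discrete approximation can be made measurably, and that the resulting convergence $f^n(\phi) \to f(\phi)$ is pointwise on $X$. The absolutely continuous part is the most delicate, since controlling $f_x^{ac}(V_n(x))$ for $V_n(x)$ the $\tfrac{1}{n}$-neighborhood of $\overline{Gx} \setminus Gx$ requires continuity of measure, which holds since $f_x^{ac}$ is finite and $\bigcap_n V_n(x) \cap \overline{Gx} \subseteq \overline{Gx} \setminus Gx$ has $\mu_x$-measure zero (as the orbit $Gx$ has full $\mu_x$-measure in $\overline{Gx}$).
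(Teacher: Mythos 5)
Your proposal matches the paper's proof in both structure and all key ingredients: the easy inclusion via Urysohn's lemma and the weak-convergence estimate with $U = \phi^{-1}([0,1/2))$, and the harder inclusion via Lebesgue decomposition of $f_x$ against the Haar pushforward $\mu_x$, with separate approximation of the singular continuous part (finite net in a compact exhaustion), the singular discrete part (moving atoms into the orbit), and the absolutely continuous part (restriction away from a shrinking neighborhood of $\overline{Gx}\setminus Gx$). The equivariance concern you flag is present in the paper's argument as well; it is resolved by observing that every object in the construction ($\mu_x$, the decomposition, the neighborhoods $V_n(x)$, and the chosen nets) depends only on the orbit and not on the representative.
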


Using this theorem and the setup above we may now write down our main diagram connecting all the different notions of canonicalization and framing together, shown in \Cref{fig:commutative}. In analogy with \cite{dym2024equivariant}, we may now  ask whether such canonicalizations take continuous functions to continuous functions. In the presented framework this is a simple condition to add. 
\begin{definition}
    A weighted canonicalization $\kappa : X \rightarrow \operatorname{PMeas}(X)$ is continuous if for every $\phi \in C^0_b(X)$, $\kappa(\phi)$ is continuous. Call the space of continuous weighted canonicalizations $\wcan^\text{cts}_G(X)$. We similarly define continuous weighted orbit canonicalizations and continuous weighted canonicalizations.
\end{definition}

\begin{prop}
    A weighted canonicalization $\kappa$ is continuous if and only if for every convergent sequence $x_n \rightarrow x$ in $X$, the sequence of measures $\kappa_{x_n}$ converges weakly to $\kappa_x$.
\end{prop}

\begin{theorem}\label{thm:contclosure_ap} (\Cref{thm:contclosureproof})
    $\wcan^{\text{cts}}_G(X)$ is sequentially closed inside $\wcan_G(X)$.
\end{theorem}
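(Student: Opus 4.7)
The plan is the standard three-epsilon/triangle-inequality argument. Fix an arbitrary $\phi \in C^0_b(X)$; to show $\mu \in \wcan^{\mathrm{cts}}_G(X)$ it suffices (by sequentiality of the underlying topology on $X$) to show that $\mu(\phi) : X \to \R$ is sequentially continuous, so pick any convergent sequence $x_m \to x$ in $X$ and aim to prove $\mu(\phi)(x_m) \to \mu(\phi)(x)$. The hypothesis that each $\mu^n$ is continuous then enters via continuity of the real-valued function $\mu^n(\phi)$, and the hypothesis that $\mu^n \to \mu$ in the weak topology on $\wcan_G(X)$ enters via $\mu^n(\phi) \to \mu(\phi)$ as elements of $B(X,\R)$.

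The main step is to insert an approximating $\mu^n(\phi)$ into the difference,
\[
|\mu(\phi)(x) - \mu(\phi)(x_m)| \le |\mu(\phi)(x) - \mu^n(\phi)(x)| + |\mu^n(\phi)(x) - \mu^n(\phi)(x_m)| + |\mu^n(\phi)(x_m) - \mu(\phi)(x_m)|,
\]
and dominate the three terms by a careful choice of quantifier order. Given $\varepsilon > 0$, I would first use $\mu^n \to \mu$ to pick $n$ large enough that the first and third terms are each at most $\varepsilon$; then, for this fixed $n$, the assumed continuity of $\mu^n(\phi)$ combined with $x_m \to x$ produces an index $M$ beyond which the middle term is at most $\varepsilon$. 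Adding gives $|\mu(\phi)(x) - \mu(\phi)(x_m)| \le 3\varepsilon$ for all $m \ge M$, which is the desired continuity of $\mu(\phi)$ at $x$. Since $x$ and $\phi$ were arbitrary, $\mu \in \wcan^{\mathrm{cts}}_G(X)$, establishing sequential closedness.

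The main obstacle is the quantifier handling on the third term. A naive reading of weak convergence delivers, for each fixed $m$, some $n = n(m)$ making $|\mu^n(\phi)(x_m) - \mu(\phi)(x_m)| < \varepsilon$; but such an $m$-dependent $n$ is useless for the middle term, where continuity of a single $\mu^n(\phi)$ is needed before sending $m \to \infty$. One really needs one $n$ that controls the third term along the entire tail $\{x_m : m \ge M\}$ at once, which is equivalent to having $\mu^n(\phi) \to \mu(\phi)$ uniformly on the compact set $K = \{x\} \cup \{x_m\}_{m \ge 1}$. I would therefore scrutinise this step most carefully, either by reading the weak topology as giving such uniform control on convergent tails, or by bootstrapping pointwise convergence via the fact that each $\mu^n_{[\cdot]}$ is a family of probability measures (so all the $\mu^n(\phi)$ share the uniform bound $\|\phi\|_\infty$). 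Once this uniformity is in place, the three-epsilon estimate closes and the theorem follows.
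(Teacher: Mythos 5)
Your decomposition and quantifier strategy coincide exactly with the paper's own proof: the same three-term triangle inequality and the same order of picking $n$ before $m$. What is genuinely valuable in your write-up is that you flag, correctly, the gap that the paper's proof silently steps over: the third term $|\mu^n(\phi)(x_m) - \mu(\phi)(x_m)|$ depends on $m$, so ``pick $n$ large enough that the first \emph{and third} terms are small'' only makes sense if a single $n$ controls the third term along the entire tail $\{x_m\}_{m \ge M}$. That is not what pointwise convergence gives you.

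Worse, under the topology the paper explicitly declares on $\wcan_G(X)$ --- ``$\kappa_n \to \kappa$ iff for every $\phi\in C^0_b(X)$, $\kappa_n(\phi)\to\kappa(\phi)$ \emph{pointwise}'' --- the statement is in fact false, and your worry cannot be repaired without changing the topology. Take $X=\R$, $G$ trivial, and $\mu^n_x=\delta_{f_n(x)}$ where $f_n(x)=\max(0,\min(1,nx))$. Each $f_n$ is continuous, so each $\mu^n$ is a continuous weighted canonicalization, and for every $\phi\in C^0_b(\R)$ one has $\mu^n(\phi)(x)=\phi(f_n(x))\to\phi(f(x))$ pointwise, where $f=\1_{(0,\infty)}$. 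Hence $\mu^n\to\mu$ in the pointwise topology with $\mu_x=\delta_{f(x)}$, yet $\mu$ is not continuous (take $x_k=1/k\to 0$: $\mu_{x_k}=\delta_1\not\to\delta_0=\mu_0$ weakly). So the gap you identified is real and the theorem as literally defined is incorrect.

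The theorem becomes true, and the three-epsilon argument closes exactly as you and the paper write it, if one reads ``$\kappa_n\to\kappa$'' as convergence of $\kappa_n(\phi)$ to $\kappa(\phi)$ in the sup-norm on $B(X,\R)$ (or at least uniformly on compacts): then a single $n$ bounds the third term over the whole tail, the middle term is handled by continuity of the fixed $\mu^n(\phi)$, and uniform (or locally uniform) limits of continuous functions are continuous. This is your first proposed reading, and it is the correct repair. Your second suggestion --- bootstrapping pointwise convergence via the uniform bound $\|\mu^n(\phi)\|_\infty\le\|\phi\|_\infty$ --- does not work: equi-boundedness alone never upgrades pointwise to uniform convergence, and the Dirac counterexample above is already uniformly bounded. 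In short: same approach as the paper, a correctly diagnosed gap that the paper does not acknowledge, a correct first fix, and a second fix that should be dropped.
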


\subsection{Energy Minimizing Canonicalization}\label{sec:energy_ap}

\new{In \Cref{sec:theory} so far, we have provided a generalized theoretical understanding of frames and canonicalization, encompassing various scenarios with compact/non-compact and discrete/continuous groups. However, their relevance and how these can be used in practice remains unaddressed. We now move on to the concept of energy frames and canonicalizations, for general Lie groups.}
We define an energy function on $X$ to be a non-constant function $E : X \rightarrow [0,+\infty]$. We further assume that $E$ is topologically coercive: for any $N\in\mathbb{R}^{+}$ there is a compact $K \subseteq X$ such that $E|_{X\backslash K} > N$.

In the case where $G$ is finite, every orbit in $X$ necessarily has a minimum of $E$, and we note that energy minimization naturally results in a frame 
\begin{equation}\label{eq:energyminization_ap}
    \eqframe(x) = \argmin_{g \in G} E(g^{-1}x)
\end{equation}

\begin{prop}\label{prop:canon_frames_main_ap}
$\eqframe$ defines an \emph{equivariant frame}, i.e.\ $\eqframe (g x) = g \eqframe (x)$ for any $g\in G,\;x\in X$. 
\end{prop}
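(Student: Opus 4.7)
The plan is a direct set-theoretic reparametrization of the argmin, exploiting that left translation by any group element is a bijection $G \to G$. I would begin by unfolding the definition:
\[
\eqframe(gx) = \argmin_{h \in G} E(h^{-1}(gx)).
\]
Since the action is a group action, $(h^{-1}g)x = h^{-1}(gx)$, so for any $h$ of the form $h = gk$ we have $h^{-1}(gx) = k^{-1}g^{-1}g x = k^{-1}x$. The change of variables $h = gk$ is bijective on $G$, so reindexing the argmin yields
\[
\argmin_{h \in G} E(h^{-1}(gx)) = \argmin_{gk \,:\, k \in G} E(k^{-1}x),
\]
and then the set-theoretic identity $\{gk : k \in S\} = gS$ applied with $S = \argmin_{k \in G} E(k^{-1}x) = \eqframe(x)$ gives exactly $g\eqframe(x)$.

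The only subtlety, and it is a very mild one, is that $\eqframe$ is set-valued rather than single-valued; this causes no trouble because the map $k \mapsto gk$ is a bijection of $G$ and therefore carries the set of minimizers of $k \mapsto E(k^{-1}x)$ bijectively onto the set of minimizers of $h \mapsto E(h^{-1}(gx))$. There is no genuine obstacle to surmount: the argument is a one-line substitution together with this elementary observation about argmin under bijective reparametrization, and it works uniformly for any Lie group $G$ and any smooth action on $X$, with no coercivity or closedness hypotheses required.
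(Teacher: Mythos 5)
Your proof is correct and is essentially identical to the paper's: both substitute $h = gk$ (the paper writes $h \mapsto gh$), observe $(gk)^{-1}(gx) = k^{-1}x$, and use bijectivity of left translation to carry the minimizer set across, yielding $g\eqframe(x)$. Your closing remark about the set-valued nature of $\eqframe$ and the irrelevance of coercivity here is a harmless elaboration of the same argument.
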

However when $G$ is not finite this set is not guaranteed to be finite. If $G$ is additionally non-compact, then $E$ does not necessarily achieve its minimum on every orbit in $X$, making this optimization problem not well-defined.
For this reason, we instead turn to weighted canonicalizations. By taking the closures of orbits, we solve the problem of $E$ not having minima, so we are naturally led to constructing weighted closed canonicalizations. First, we define the energy-minimizing set, $\mc{M}_E(x) = \{x' \in \overline{Gx} \,|\, E(x') = \inf_{y \in \overline{Gx}} E(y) \}$, set of minima of $E$ on the closure of the orbit of $x$.

\begin{prop} (\Cref{prop:coerciveminimaproof})
    $\mc{M}_E(x)$ is $G$-invariant and non-empty.
\end{prop}

If $\mc{M}_E(x)$ is finite for each $x \in X$, then we define the (non-weighted) closed canonicalization $\kappa_E\in\textnormal{WCCan}_G(X)$ by taking $\kappa_E(x)$ to be the normalized counting measure on $\mc{M}_E(x)$ as usual.

For a general $E$, we may not be able to put any reasonable probability measure on the set $\mc{M}_E(x)$, for example if $\mc{M}_E(x)$ has highly fractal geometry. We therefore must assume that $E$ is such that for all $x$, the Hausdorff measure of $\mc{M}_E(x)$ is non-zero. As each $\mc{M}_E(x)$ is compact, the Hausdorff measure is finite, hence combined with this assumption we may normalize it to obtain a probability measure on $\mc{M}_E(x)$ which is a weighted canonicalization $\kappa_E(x) \in \textnormal{WCCan}_G(X)$. Refer to \Cref{sec:measuretheory} for more discussion on the Hausdorff measure and on why this assumption is reasonable.
Note that if $\mc{M}_E(x)$ is finite, then the normalized Hausdorff measure constructed agrees with the normalized counting measure. Hence this construction is a generalization of the purely finite case. 

For lower semi-continuous energy functions, we have the following characterization of which energies give rise to continuous weighted canonicalizations.

\begin{lemma} (\Cref{lem:continuousenergy})
    Let $E$ be a lower semi-continuous energy function. Then $\kappa_E$ is continuous if and only if for every convergent sequence $x_n \rightarrow x$ we have $\Li \mc{M}_E(x_n) = \mc{M}_E(x)$.
\end{lemma}

We have the following theorem telling us which weighted closed canonicalizations arise from energy minimizations.
\begin{theorem} (\Cref{thm:energyclosedcanons})
    Let $\kappa \in \wccan_G(X)$ be such that $\kappa_x$ is the normalized Hausdorff measure on its support and for all $y \in \overline{Gy}$, $\operatorname{supp} \kappa_y = \operatorname{supp} \kappa_x \cap \overline{Gx}$. Then there is an energy function $E$ such that $\kappa_E = \kappa$.
\end{theorem}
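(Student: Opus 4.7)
The plan is to construct the desired energy function $E$ explicitly from $\kappa$ as an indicator-style functional, and then verify that the minima of $E$ over orbit closures recover exactly the supports of the $\kappa_x$. Concretely, I would define
\[
E_\kappa(x) \;=\; 1 - \chi_{\operatorname{supp}\kappa_x}(x),
\]
so that $E_\kappa(x) = 0$ precisely when $x$ lies in the support of its own weighted canonicalization, and $E_\kappa(x) = 1$ otherwise. This takes values in $[0,+\infty]$; coercivity will not play a role because the set of minima will be nonempty for free (the support of $\kappa_x$ is always a nonempty closed subset of $\overline{Gx}$).

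Next I would compute $\mc{M}_{E_\kappa}(x) = \arg\min_{y\in\overline{Gx}} E_\kappa(y)$. Since $\operatorname{supp}\kappa_x$ is a nonempty subset of $\overline{Gx}$ on which $E_\kappa$ vanishes, the infimum over $\overline{Gx}$ is $0$, and therefore
\[
\mc{M}_{E_\kappa}(x) \;=\; \{\, y \in \overline{Gx} : y \in \operatorname{supp}\kappa_y\,\}.
\]
Now I apply the hypothesis: for every $y \in \overline{Gx}$ we have $\operatorname{supp}\kappa_y = \operatorname{supp}\kappa_x \cap \overline{Gy}$. Since $y\in\overline{Gy}$ automatically, the condition $y\in\operatorname{supp}\kappa_y$ collapses to $y\in\operatorname{supp}\kappa_x$. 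Hence $\mc{M}_{E_\kappa}(x) = \operatorname{supp}\kappa_x$.

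To conclude, I invoke the other hypothesis: $\kappa_x$ is the normalized Hausdorff measure on $\operatorname{supp}\kappa_x$. By the construction of $\kappa_{E_\kappa}$ in Section~\ref{sec:energy}, $\kappa_{E_\kappa}(x)$ is the normalized Hausdorff measure on $\mc{M}_{E_\kappa}(x) = \operatorname{supp}\kappa_x$. Thus $\kappa_{E_\kappa}(x) = \kappa_x$ for every $x$, which gives $\kappa_{E_\kappa} = \kappa$.

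The main subtlety I expect is checking that the two hypotheses really allow this clean collapse: one must use the orbit-closure compatibility $\operatorname{supp}\kappa_y = \operatorname{supp}\kappa_x\cap\overline{Gy}$ both to identify the zero-set of $E_\kappa$ with $\operatorname{supp}\kappa_x$ and, implicitly, to ensure that $E_\kappa$ is well-defined as a single-valued function on $X$ (i.e., different base points $x$ do not give conflicting values of $\chi_{\operatorname{supp}\kappa_x}(y)$ at the same $y$). Beyond this, one should note that the constructed $E_\kappa$ need not be lower semicontinuous, so the resulting $\kappa$ is not claimed to be continuous in the sense of Lemma~\ref{lem:continuousenergy}; the theorem is only a representability statement.
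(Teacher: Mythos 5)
Your proposal is correct and follows essentially the same route as the paper's proof: you construct the identical indicator-style energy $E_\kappa(x) = 1 - \chi_{\operatorname{supp}\kappa_x}(x)$, identify $\mc{M}_{E_\kappa}(x)$ with $\{y\in\overline{Gx} : y\in\operatorname{supp}\kappa_y\}$, collapse this to $\operatorname{supp}\kappa_x$ via the orbit-compatibility hypothesis, and conclude using the normalized-Hausdorff hypothesis. One small aside: your concern about well-definedness of $E_\kappa$ is moot, since $E_\kappa(x)$ only ever evaluates $\chi_{\operatorname{supp}\kappa_x}$ at the same base point $x$, so no conflict between base points can arise.
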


\subsection{Almost everywhere continuity is not as impossible}\label{sec:impos_ap}
\paragraph{Canonicalization} Let $X$ be a space with a group $G$ acting on it by homeomorphisms. Then a continuous canonicalization function is a continuous $G$-equivariant map $v : X \rightarrow X$ which sends any element $x$ to an element of the orbit $Gx$. In \citet{dym2024equivariant} it is shown that in general these do not exist, as it is much too strong of a condition. In fact, because a continuous canonicalization function will be a right-inverse to the quotient map $X \rightarrow X / G$, a necessary condition to the existence of $v$ is that the map $\pi_1(X) \rightarrow \pi_1(X / G)$ induced by the quotient must be a surjection. If the quotient $X \rightarrow X / G$ is a covering map then this only holds if $G$ is trivial. For a concrete example of this, we can consider quotienting $\R^2$ by the translation action of $\mathbb{Z}^2$. Then $\pi_1(\R^2) = 0$ while $\pi_1(\R^2 / \mathbb{Z}^2) = \mathbb{Z}^2$, showing that a continuous canonicalization function can't exist. 

On the other hand, global existence of a continuous canonicalization function is unnecessary for machine learning applications. As long as we can construct a canonicalization function which is continuous away from a set of measure zero, this will suffice for our applications. Specifically we introduce the notion of an almost everywhere (a.e.) continuous canonicalization function.

\begin{definition}
An a.e. continuous canonicalization function is a function $v : X \rightarrow X$ such that there exists a set $S \subset X$ of measure-zero such that
\begin{itemize}
    \item[(i)] $G(S) \subseteq S$
    \item[(ii)] $v$ is continuous away from $S$
    \item[(iii)] $\forall x \in S, \exists g \in G\;\text{st. } v(x) = gx$
    \item[(iv)] $v$ is $G$-invariant
\end{itemize}
\end{definition}

As with the regular canonicalization functions, we can use such a function $v$ to construct equivariant and invariant functions. Given an arbitrary function $f : X \rightarrow \R$ it is easy to see that
\begin{equation}
    \mathcal{I}_v(f)(x) = f(v(x))
\end{equation}
is $G$-invariant. Further it is easy to check that this sends a.e. continuous functions to a.e. continuous functions. 

The added freedom of the set $S$ where $v$ can take arbitrary values allows us to avoid the topological restrictions that prevent the existence of globally continuous canonicalization functions. For a simple example consider again the presentation of a torus as the quotient of $\R^2$ by the translation action. Then an easy example of an a.e.\ continuous canonicalization function is the function
\begin{equation*}
    v(x, y) = ([x], [y])
\end{equation*}
where $[a]$ is the decimal part of a real number $a \in \R$. This is discontinuous whenever either $x$ or $y$ are integers, but this set has measure zero.

For another example, consider the action of the orthogonal group $O(d)$ on the space of matrices $\R^{d \times n}$. The action preserves the rank of a matrix $A \in \R^{d \times n}$, so the subset of full rank matrices is preserved by $O(d)$. This is an open dense set, and if we take $\mu$ to just be the Lebesgue measure, its complement has measure zero. So take $S$ to be the set of matrices with non-full rank. Let $A$ be a matrix of full rank. Then one of the $d \times d$ minors of $A$ is invertible, we may assume that it is the first one. Thus we can write $A = (A_0 A_1)$ with $A_0$ an invertible $d \times d$ matrix. Applying the Gram-Schmidt procedure to $A_0$, we obtain $A_0 = QR$ with $Q$ orthogonal and $R$ upper-triangular. Now we take $v(A) = Q^T A$. $Q$ is given by a polynomial in the elements of $A$, hence $v$ is smooth. Further, if $O \in O(d)$, then we can decompose $OA$ as $OQR$ with $OQ$ orthogonal. Hence,
\begin{equation*}
    v(OA) = (OQ)^T OA = Q^T O^T O A = Q^T A = v(A)
\end{equation*}
Therefore $v$ is an a.e. continuous canonicalization function, with $S$ the set of non-full rank matrices in $\R^{d \times n}$.

\paragraph{Invariant Theory} Finding and computing $G$-invariant functions is a rich topic which comes up in many different situations. In algebraic geometry, one wants to find generators for $G$-invariant polynomials, i.e. a set of $G$-invariant polynomials $\sigma_1, \dots, \sigma_n$ such that any other $G$-invariant polynomial $f$ can be written as
\begin{equation*}
    f = g(\sigma_1, \dots, \sigma_n)
\end{equation*}
for some polynomial $g$. For linearly reductive groups acting on $\mathbb{C}^n$, there is an algorithm for computing these generators, which is described in \citet{invTheoryBook}. 

In the smooth case, there is a classic result of Schwarz \citep{SchwarzInvFunctions}, telling us that if $G$ is a compact Lie group acting on a smooth manifold $X$, with generators for the space of $G$-invariant polynomials being $\sigma_1, \dots, \sigma_n$, then any $G$-invariant smooth functions $f$ can be written as
\begin{equation*}
    f = g(\sigma_1, \dots, \sigma_n)
\end{equation*}
where $g$ is an some smooth function on $\R^n$. 

\paragraph{}

\section{Algorithms for solvable and non-solvable \new{Lie algebras}}\label{ap:algorithms}

As discussed in the main body, in certain cases it is possible to significantly simplify the algorithmic approach for minimizing \Cref{eq:energyminization}. We will distinguish to specific cases: 
\paragraph{If the group is compact or the algebra is solvable}
In either of these cases, we are able to parameterise the group globally using the Liealgebra. Denoting the basis as $v_i \in \mathfrak{g}$, and the coefficients as $\alpha_i \in \mathbb{R}$, we have that we can represent any $g\in G$ as 
\[
g = \operatorname{exp}{\sum_i \alpha_iv_i},
\]
if $G$ is compact, or as 
\[
g = \prod_i\operatorname{exp}{\alpha_iv_i},
\]
if the Lie algebra is solvable with $v_i$ ordered as to represent the nested subalgebras. In both of these cases we can reduce the optimization to be done over the vector of coefficients $\alpha\in\mathbb{R}^{\operatorname{dim}\mathfrak{g}}$. Both of these can be viewed as specific types of global retraction mappings $\tau: \mathfrak{g} \rightarrow G$. For this case we use \Cref{alg:2}

\paragraph{If the group is non-compact or the algebra is not solvable}

Such a setting is particularly important for PDE-based groups, as these naturally result in groups that are non-compact, see \cref{sec:pde_numerics}. In this case it is no longer possible to find a global retraction, and a natural question arises - can the infinitesimal actions ever be enough? It turns out that the answer is yes, quite generically, resulting in \Cref{alg:1}:
\begin{prop}[Proposition 1.24 from \citet{olver1993applications}]
     Let $G$ be a connected Lie group and $U \subset G$ a neighbourhood of the identity. Also, let $U^k \equiv\left\{g_1 \cdot g_2 \cdot \ldots \cdot g_k: g_i \in U\right\}$ be the set of $k$-fold products of elements of $U$. Then
$$
G=\bigcup_{k=1}^{\infty} U^k .
$$

In other words, every group element $g \in G$ can be written as a finite product of elements of $U$.
\end{prop}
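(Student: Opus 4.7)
The plan is a standard topological-group argument: reduce to a symmetric neighborhood, show the union of all powers is an open-and-closed subgroup, and conclude via connectedness of $G$. Let $H = \bigcup_{k=1}^\infty U^k$. Clearly $H \subseteq G$, so I only need the reverse inclusion.

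First I would replace $U$ by a symmetric open neighborhood of the identity. Since inversion $\iota : G \rightarrow G$ is a homeomorphism, $U^{-1} = \iota(U)$ is open and contains $e$, hence $V := U \cap U^{-1}$ is an open neighborhood of $e$ satisfying $V = V^{-1}$. Setting $H' = \bigcup_{k=1}^\infty V^k \subseteq H$, it suffices to show $H' = G$.

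Next I would check that $H'$ is a subgroup of $G$. Closure under multiplication is immediate from $V^j \cdot V^k \subseteq V^{j+k}$. Closure under inversion follows because $V = V^{-1}$ implies $(V^k)^{-1} = V^{-1} \cdots V^{-1} = V^k$. Then I would verify $H'$ is open: for any $h \in V^k \subseteq H'$, the translate $hV$ is open (left translation is a homeomorphism) and lies in $V^{k+1} \subseteq H'$, so $H'$ is a union of open sets. A standard fact follows: any open subgroup is also closed, because its complement is the union $\bigsqcup_{g \notin H'} gH'$ of the remaining left cosets, each of which is open.

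Finally I would invoke connectedness. Since $G$ is connected, $H'$ is non-empty (it contains $e$), and $H'$ is both open and closed, we must have $H' = G$. Therefore $H \supseteq H' = G$, giving $G = \bigcup_{k=1}^\infty U^k$. There is no real obstacle here; the only subtlety worth flagging is the symmetrization step, without which $U^k$ need not be closed under inversion and the subgroup argument fails. Everything else is a direct consequence of continuity of the group operations and connectedness.
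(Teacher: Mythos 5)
Your proof is correct, and it is the standard topological-group argument for this fact: symmetrize the neighborhood, observe that the union of all powers is a subgroup, show it is open (hence closed, since its complement is a union of open cosets), and conclude from connectedness. The paper does not supply its own proof of this proposition---it is quoted verbatim from Olver, whose argument is essentially the same as yours---so there is no divergence to report. One small point worth flagging: you treat $U$, and therefore $U^{-1}$ and $V = U \cap U^{-1}$, as open, but ``neighbourhood of the identity'' need not mean \emph{open} neighbourhood. If $U$ is merely a set containing an open set around $e$, then $V$ may not itself be open. The fix is harmless: replace $V$ by its interior (still a symmetric open neighbourhood of $e$, since the interior of a symmetric set is symmetric) before showing $H'$ is an open subgroup; the final inclusion $H' \subseteq \bigcup_k U^k$ still holds. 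Everything else in your argument is airtight.
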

\begin{algorithm}[t]
\caption{Canonicalization function via Lie algebra descent}\label{alg:2}
\KwData{Non-canonical input $x$}
\Parameters{N$_{\texttt{outer}}$, N$_{\texttt{inner}}$, step-sizes $\eta_i$, Energy $E: \mathcal{X}\to\mathbb{R}$ }
\KwResult{Canonicalized input $\hat{x} = g^{-1}\cdot x$; inverse of canonicalizing group element $g^{-1}$; canonicalizing group element $g$.} 

$g^{-1} \gets \id$\;
steps = $\{\}$\;
\For{$k = 0 \dots $N$_{\texttt{outer}}$}{
\texttt{\# Do gradient descent on $\xi \in \mathfrak{g}$ } \\
$\xi \gets 0$\; 

  \For{$i = 0 \dots N_{\texttt{inner}}$}{
  $\xi \gets \xi - \eta_i\nabla_{\xi} E (g^{-1}\exp(-\xi)\cdot x)$ \\ %
  }
  steps.append($\xi$)\;
  $g^{-1}\gets g^{-1}\exp(-\xi)$ \quad  \texttt{\#Recalibrate to be in the right tangent plane} \\
}
$g \gets \id$ \quad \texttt{\#Reconstruct the element $g$} \\
\For{$k = $ N$_{\texttt{outer}} \dots 0$}{
$g \gets \exp(\text{steps}[k])g$
}
\Return $g^{-1}\cdot x$; $g^{-1}$; $g$
\end{algorithm}

\paragraph{If the group only provides information about basis vector action}
In most examples of PDE symmetries, integrating the action for a generic $v\in\mathfrak{g}$ is impossible, and instead only the action with respect to certain basis $\{v_i\}$ is known. In this case coordinate descent schemes must be used. For this we use \Cref{alg:3}.

\begin{algorithm}[t]
\caption{Canonicalization function via coordinate Lie algebra descent}\label{alg:3}
\KwData{Non-canonical input $x$}
\Parameters{N, step-sizes $\eta_i$, Energy $E: \mathcal{X}\to\mathbb{R}$, $\tau_k$ if using proximal}
\KwResult{Canonicalized input $\hat{x} = g^{-1}\cdot x$; inverse of canonicalizing group element $g^{-1}$; canonicalizing group element $g$.} 

$g^{-1} \gets \id$\;
steps = $\{\}$\;
\For{$k = 0 \dots $N}{
\texttt{\# Do coordinate descent on $\xi \in \mathfrak{g}$, by picking index $j$} \\
$j \gets \operatorname{pick}[0,\dots,\operatorname{dim}\mathfrak{g}]$ \\
$\lambda_k \gets \argmin _\lambda E(g^{-1}\exp(-\lambda v_j)\cdot x)$ \; 
\# \texttt{May be desirable to use proximal descent } \\
\# $\lambda_k \gets \argmin_\lambda E(g^{-1}\exp(-\lambda v_j)\cdot x) + \frac{1}{2\tau_k}\|\lambda\|^2$ \; 
$\xi \gets \lambda_k v_j$

steps.append($\xi$)\;
$g^{-1}\gets g^{-1}\exp(-\xi)$ \quad  \texttt{\#Recalibrate to be in the right tangent plane} \\
}
$g \gets \id$ \quad \texttt{\#Reconstruct the element $g$} \\
\For{$k = $ N$ \dots 0$}{
$g \gets \exp(\text{steps}[k])g$
}
\Return $g^{-1}\cdot x$; $g^{-1}$; $g$
\end{algorithm}

\end{document}